\newcommand{\p}{m} 
\newcommand{\M}{\beta} 
\newcommand{\V}{\sigma^2} 
\newcommand{\lr}{\eta} 
\newcommand{\W}{\mathbf{W}} 
\newcommand{\F}{F} 
\newcommand{\sg}{g} 
\newcommand{\tg}{\nabla F} 
\newcommand{\lip}{L} 
\newcommand{\J}{\mathbf{J}} 
\newcommand{\I}{\mathbf{I}}
\newcommand{\spgap}{\zeta} 
\newcommand{\x}{\mathbf{x}} 
\newcommand{\avgx}{\overline{\mathbf{x}}} 
\newcommand{\X}{\mathbf{X}} 
\newcommand{\G}{\mathbf{G}} 
\newcommand{\cp}{\tau} 
\newcommand{\summp}{\sum_{i=1}^\p} 
\newcommand{\z}{\mathbf{z}} 
\newcommand{\crounds}{K/\tau}
\newcommand{\matA}{\mathbf{A}}
\newcommand{\matB}{\mathbf{B}}
\newcommand{\matPhi}{\mathbf{\Phi}}
\newcommand{\gtg}{\mathcal{H}} 
\newcommand{\gsg}{\mathcal{G}} 
\newcommand{\genx}{\mathbf{u}} 
\newcommand{\genlr}{\lr_{\text{eff}}} 
\newcommand{\msg}{\mathbf{Y}} 
\newcommand{\mtg}{\mathbf{Q}}
\newcommand{\CExs}{\mathbf{E}_{k}}
\newcommand{\N}{N}
\newcommand{\minib}{\xi}
\newcommand{\alg}{\mathcal{A}}
\newcommand{\addw}{v}
\crefname{equation}{}{}
\Crefname{equation}{}{}
\crefname{thm}{theorem}{theorems}
\Crefname{thm}{Theorem}{Theorems}
\crefname{clm}{claim}{claims}
\Crefname{clm}{Claim}{Claims}
\Crefname{coro}{Corollary}{Corollaries}
\Crefname{lem}{Lemma}{Lemmas}
\Crefname{sec}{Section}{Sections}
\crefname{app}{appendix}{appendices}
\Crefname{app}{Appendix}{Appendices}
\Crefname{part}{Part}{Parts}
\crefname{prop}{proposition}{propositions}
\Crefname{prop}{Proposition}{Propositions}
\Crefname{propty}{Property}{Properties}
\crefname{figure}{fig.}{figures}
\Crefname{figure}{Figure}{Figures}
\crefname{defn}{definition}{definitions}
\Crefname{defn}{Definition}{Definitions}
\crefname{fact}{fact}{facts}
\Crefname{fact}{Fact}{Facts}
\crefname{appendix}{appendix}{appendices}
\Crefname{appendix}{Appendix}{Appendices}
\crefname{algo}{algorithm}{algorithms}
\Crefname{algo}{Algorithm}{Algorithms}
\crefname{algorithm}{algorithm}{algorithms}
\Crefname{algorithm}{Algorithm}{Algorithms}
\crefname{conj}{conjecture}{conjectures}
\Crefname{conj}{Conjecture}{Conjectures}
\crefname{obs}{observation}{observations}
\Crefname{obs}{Observation}{Observations}
\crefname{assump}{assumption}{assumptions}
\Crefname{assump}{Assumption}{Assumptions}
\crefname{rem}{remark}{remarks}
\Crefname{rem}{Remark}{Remarks}
\begin{document}
%

\twocolumn[
\icmltitle{Cooperative SGD: A Unified Framework for the Design and Analysis of Communication-Efficient SGD Algorithms}



\icmlsetsymbol{equal}{*}

\begin{icmlauthorlist}
\icmlauthor{Jianyu Wang}{to}
\icmlauthor{Gauri Joshi}{to}
\end{icmlauthorlist}

\icmlaffiliation{to}{Department of Electrical \& Computer Engineering, Carnegie Mellon University, Pittsburgh, PA, USA}

\icmlcorrespondingauthor{Jianyu Wang}{jianyuw1@andrew.cmu.edu}
\icmlcorrespondingauthor{Gauri Joshi}{gaurij@andrew.cmu.edu}


\vskip 0.3in
]



\printAffiliationsAndNotice{}  

\begin{abstract}
Communication-efficient SGD algorithms, which allow nodes to perform local updates and periodically synchronize local models, are highly effective in improving the speed and scalability of distributed SGD. However, a rigorous convergence analysis and comparative study of different communication-reduction strategies remains a largely open problem. This paper presents a unified framework called Cooperative SGD that subsumes existing communication-efficient SGD algorithms such as periodic-averaging, elastic-averaging and decentralized SGD. By analyzing Cooperative SGD, we provide novel convergence guarantees for existing algorithms. Moreover, this framework enables us to design new communication-efficient SGD algorithms that strike the best balance between reducing communication overhead and achieving fast error convergence with low error floor.
\end{abstract}

\section{Introduction}
\label{sec:intro}

Stochastic gradient descent (SGD) is the core backbone of most state-of-the-art machine learning algorithms. Due to its widespread applicability, speeding-up SGD is arguably the single most impactful and transformative problem in machine learning. Classical SGD was designed to be run on a single computing node, and its error-convergence has been extensively analyzed and improved in optimization and learning theory \cite{dekel2012optimal,ghadimi2013stochastic}. Due to the massive training data-sets and deep neural network architectures used today, running SGD at a single node can be prohibitively slow. This calls for distributed implementations of SGD, where gradient computation and aggregation is parallelized across multiple worker nodes. Although parallelism boosts the amount of data processed per iteration, it exposes SGD to unpredictable synchronization and communication delays stemming from variability in the computing infrastructure. This work presents a unified framework called Cooperative SGD to analyze communication-efficient distributed SGD algorithms that periodically average models trained locally at different computing nodes.

\textbf{Limitations of Parameter Server Framework.}
A commonly used method to parallelize gradient computation and process more training data per iteration is the parameter server framework \citep{dean2012large,li2014scaling,cui2014exploiting}. Each of the $\p$ worker nodes computes the gradients of one mini-batch of data, and a parameter server aggregates these gradients and updates the model parameters. Synchronization delays in waiting for slow workers can be alleviated via asynchronous gradient aggregation \citep{recht2011hogwild, cui2014exploiting, gupta2016model, mitliagkas2016asynchrony, dutta2018slow}. However it is difficult to eliminate communication delays since by design, parameter server framework requires gradients and model updates to be communicated between the parameter server and workers after every iteration. 

\textbf{Communication-Efficient SGD.}
To address the limitations of the parameter server framework, recent works proposed communication-efficient SGD variants that perform more computation at worker nodes. A natural idea is to allow workers to \textit{perform $\tau$ local updates to the model instead of just computing gradients, and then periodically averaging the local models} \citep{moritz2015sparknet,zhang2016parallel,povey2014parallel,su2015experiments,chaudhari2017parle,smith2018cocoa,lin2018don}. A similar approach (averaging after several epochs) is referred to \emph{Federated Averaging} (FedAvg) \citep{mcmahan2016communication} in recent works and it is shown to work well even for non-i.i.d. data partitions. Although extensive empirical results have validated the effectiveness of periodic averaging, rigorous theoretical understanding of how its convergence depends on the number of local updates $\tau$ is quite limited \citep{zhou2017convergence,yu2018parallel,stich2018local}.

Instead of simply averaging the local models every $\tau$ iterations, Elastic-averaging SGD (EASGD) proposed in \cite{zhang2015deep} \textit{adds a proximal term to the objective function in order to allow some slack between the models} -- an idea that is drawn from the Alternating Direction Method of Multipliers (ADMM)  \citep{boyd2011distributed,parikh2014proximal}. 
Although the efficiency of EASGD and its asynchronous and periodic averaging variants has been empirically validated \citep{zhang2015deep,chaudhari2017parle}, its convergence analysis under general convex or non-convex objectives is an open problem. The original paper \citep{zhang2015deep} only gives an analysis of vanilla EASGD for quadratic objective functions.

A different approach to reducing communication is to perform \emph{decentralized training with sparse-connected network of worker nodes}. Each node only synchronizes with its neighbors, thus reducing the communication overhead significantly. Decentralized averaging has a long history in the distributed and consensus optimization community \citep{tsitsiklis1986distributed,nedic2009distributed,duchi2012dual,tsianos2012communication,zeng2016nonconvex,yuan2016convergence,sirb2018decentralized,bijral2017data}. Most of these works are for gradient descent or dual averaging methods rather than stochastic gradient descent (SGD), and they do not allow workers to make local updates. Recently, decentralized averaging was successfully applied to deep learning in \citep{jin2016scale,jiang2017collaborative,lian2017can}, which also provide convergence analyses for $1$ local update per worker. It is still unclear how decentralized training compares with periodic averaging ($\tau$ updates per worker).

\textbf{Main Contributions.}
A common thread in all the communication-efficient SGD methods described above is that \textit{they allow worker nodes to perform local model-updates and limit the synchronization/consensus between the local models}. Limiting model-synchronization reduces communication overhead, but it increases model discrepancies and can give an inferior error convergence performance. Communication-efficient SGD algorithms seek to strike the best trade-off between error-convergence and communication-efficiency.

In this paper, we propose a powerful framework called \emph{Cooperative SGD} that enables us to obtain an integrated analysis and comparison of communication-efficient algorithms. Existing algorithms including periodic averaging SGD, Elastic Averaging SGD, decentralized SGD are special cases of cooperative SGD, and thus can be analyzed under one single umbrella. The main contributions of this paper are:
\begin{enumerate}
    \item We present the first unified convergence analysis for the cooperative SGD class (\Cref{sec:convergence}) of algorithms that subsumes periodic, elastic and decentralized averaging. The theoretical results reveal how different communication-efficient strategies influence the error-convergence performance. 
    \item In particular, we provide the first analysis of elastic-averaging SGD for non-convex objectives, and use it to determine the best elasticity parameter $\alpha$ (\Cref{sec:easgd}) that achieves the lowest error floor at convergence. 
    \item We obtain a new analysis and tighter error bound for periodic averaging SGD by removing the uniformly bounded gradients assumption (\Cref{sec:main_thm}). The analysis can be applied to FedAvg with i.i.d. data partitions as well.
    \item Based on the unified analysis, we show the first in-depth comparison between periodic/elastic-averaging with decentralized training methods and design new communication-efficient SGD variants by combining existing strategies (see \Cref{sec:new}).
\end{enumerate}

An alternative approach to communication-efficiency is gradient compression techniques \citep{wangni2017gradient,wen2017terngrad,lin2017deep} that quantize the gradients computed by workers. Although interesting and important, this approach is beyond the scope of our paper; we focus on communication-efficiency via local updates at workers.

\section{Preliminaries}
\label{sec:preliminaries}

In this section we present the update rules of existing communication-efficient SGD algorithms in terms of our notation that is used in the rest of the paper.

\textbf{Notation.} 
All vectors considered in this paper are column vectors. For convenience, we use $\one$ to denote $[1,1,\dots,1]\tp$ and define matrix $\J = \one\one\tp/(\one\tp \one)$. Unless otherwise stated, $\one$ is a size $\p$ column vector, and the matrix $\J$ and identity matrix $\I$ are of size $\p \times \p$, where $\p$ is the number of workers. Let $\vecnorm{\cdot}$, $\fronorm{\cdot}$ and $\opnorm{\cdot}$ denote the $\ell_2$ vector norm, Frobenius matrix norm and operator norm, respectively.

\textbf{Fully Synchronous SGD.}
Suppose the model parameters are denoted by $\x \in \mathbb{R}^d$ and the training set is denoted by $\mathcal{S} = \{s_1, \dots, s_\N\}$, where $s_i$ represents the $i$-th data sample. Then, the interested problem is the minimization of the empirical risk as follows:
\begin{align}
    \min_{\x \in \mathbb{R}^d} \brackets{ \F(\x) := \frac{1}{\N}\sum_{i=1}^\N f(\x; s_i)} \label{eqn:min}
\end{align}
where $f(\cdot)$ is the loss function defined by the learning model. In the distributed setting, there are total $\p$ worker machines that compute stochastic gradients in parallel. The updates can be written as
\begin{align}
    \x_{k+1} = \x_k - \lr\brackets{\frac{1}{\p}\summp \sg(\x_k; \xi_k^{(i)})}
\end{align}
where $\lr$ is the learning rate, $\xi_k^{(i)}\subset \mathcal{S}$ are randomly sampled mini-batches, and $g(\x;\xi) = \frac{1}{|\xi|}\sum_{s_i \in \xi}\nabla f(\x;s_i)$ denotes the stochastic gradient. For simplicity, we will use $g(\x)$ instead of $g(\x; \xi)$ in the rest of the paper. 

\textbf{Periodic Averaging SGD (PASGD).}
Local models are averaged after every $\cp$ iterations. Its update rule is
\begin{align}
    \x_{k+1}^{(i)} = 
    \begin{cases}
    \frac{1}{\p}\sum_{j=1}^\p \brackets{\x_k^{(j)} - \lr \sg(\x_k^{(j)})}, & k \modd \cp = 0 \\
    \x_k^{(i)} - \lr \sg(\x_k^{(i)}), & \text{otherwise}
    \end{cases}
\end{align}
where $\x_k^{(i)}$ denotes the model parameters in the $i$-th worker and $\cp$ is defined as the communication period. The recently proposed federated learning framework \citep{mcmahan2016communication}  also performs periodic averaging, but with non i.i.d.\ local datasets.

\textbf{Elastic Averaging SGD (EASGD).}
Instead of performing a simple average of the local models, the elastic-averaging algorithm (EASGD) proposed in \cite{zhang2015deep} maintains an auxiliary variable $\z_{k}$. This variable is used as an anchor while updating the local models $\x_{k}^{(i)}$. The update rule of vanilla EASGD\footnote{The paper \citep{zhang2015deep} also presents periodic averaging and momentum variants of EASGD. However, only vanilla EASGD has been theoretically analyzed, and only for quadratic loss functions.} is given by
\begin{align}
    \x_{k+1}^{(i)} &= \x_{k}^{(i)} - \lr \sg(\x_k^{(i)}) - \alpha (\x_k^{(i)} - \z_k), \label{eqn:easgd_o1}\\
    \z_{k+1} &= (1-\p\alpha)\z_k + \p\alpha\avgx_k, \label{eqn:easgd_o2}
\end{align}
where $\avgx_k = \summp \x_k^{(i)}/\p$. A larger value of the parameter $\alpha$ forces more consensus between the locally trained models and improves stability, but it may reduce the convergence speed -- a phenomenon that is not yet well-understood.

\textbf{Decentralized SGD (D-PSGD).}
The decentralized SGD algorithm D-PSGD (also referred as consensus-based distributed SGD), was proposed by \citep{jiang2017collaborative,lian2017can}. Nodes perform local updates and average their models with neighboring nodes, where the network topology is captured by a mixing matrix $\W$. The update rule is
\begin{align}
    \x_{k+1}^{(i)} = \sum_{j=1}^{\p} w_{ji}\x_k^{(j)} - \lr \sg(\x_k^{(i)}) \label{eqn:decen_sgd}
\end{align}
where $w_{ji}$ is the $(j,i)^{th}$ element of the mixing matrix $\W$, and it represents the contribution of node $j$ in the averaged model at node $i$.

\section{The Cooperative SGD Framework}
\subsection{Key Elements and Update Rule}
The Cooperative SGD algorithm is denoted by $\alg(\cp,\W,\addw)$, where $\cp$ is the number of local updates, $\W$ is the mixing matrix used for model averaging, and $\addw$ is the number of auxiliary variables. These parameters feature in the update rule as follows.
\begin{enumerate}
    \item \textbf{Model Versions at Workers.} At iteration $k$, the $\p$ workers have different versions $\x_k^{(1)},\dots,\x_k^{(\p)} \in \mathbb{R}^d$ of the model. In addition, there are $\addw$ auxiliary variables $\z_k^{(1)}, \dots, \z_k^{(\addw)}$ that are either stored at $\addw$ additional nodes or at one or more of the workers, depending upon implementation. 
    \item \textbf{Gradients and Local Updates.} In each iteration, the workers evaluate the gradient $\sg(\x_k^{(i)})$ for one mini-batch of data and update $\x_k^{(i)}$. The auxiliary variables are only updated by averaging a subset of the local models as described in point $3$ below. Thus, their gradients are zero, i.e., $\sg(\z_k^{(j)}) = \mathbf{0}, \forall j \in \{1,\dots,\addw\}, \forall k$.
    \item \textbf{Model-Averaging.} In iteration $k$, the local models and auxiliary variables are averaged with neighbors according to mixing matrix $\W_k \in \mathbb{R}^{(\p+\addw)\times(\p+\addw)}$. To capture periodic averaging, we use a time-varying $\W_k$ that varies as:
    \begin{align}
    \W_k = 
    \begin{cases}
    \W, & k \modd \cp = 0 \\
    \mathbf{I}_{(\p+\addw)\times(\p+\addw)}, & \text{otherwise},
    \end{cases}
    \label{eqn:w_k}
    \end{align}
    where the identity mixing matrix $\mathbf{I}_{(\p+\addw)\times(\p+\addw)}$ means that there is no inter-node communication during the $\tau$ local updates.
\end{enumerate}

We now present a general update rule that combines the above elements. Define matrices $\X_k, \G_k \in \mathbb{R}^{d \times (\p+\addw)}$ that concatenate all local models and gradients:
\begin{align}
    \X_k =& [\x_k^{(1)},\dots,\x_k^{(\p)}, \z_k^{(1)}, \dots, \z_k^{(\addw)}], \\
    \G_k =& [\sg(\x_k^{(1)}),\dots,\sg(\x_k^{(\p)}), \mathbf{0}, \dots, \mathbf{0}].
\end{align}
The update rule in terms of these matrices is
\begin{align}
    \X_{k+1} = \parenth{\X_k - \lr \G_k}\W_k.
    \label{eqn:mat_update}
\end{align}

\begin{rem}\label{rem:alter}
Instead of using update \Cref{eqn:mat_update}, one can use an alternative rule: $\X_{k+1} = \X_k\W_k - \lr\G_k$. The convergence analyses and insights in this paper can be extended to this update rule. We choose to study the update rule \Cref{eqn:mat_update} for all existing algorithms (PASGD, EASGD, D-PSGD) since fully synchronous SGD corresponds to the special case $\W_k = \J$.
\end{rem}

\subsection{Existing Algorithms as Special Cases} \label{sec:cases}
We now show how existing communication-efficient algorithms are special cases of the general Cooperative SGD framework $\alg(\cp,\W,\addw)$.

\textbf{Fully synchronous SGD $\Leftrightarrow \alg(1, \J, 0)$.} 
The local models are synchronized with all other workers after every iteration. 

\textbf{PASGD $ \Leftrightarrow \alg(\cp, \J, 0)$.} 
The local models are synchronized with all other workers after every $\cp$ iterations. 

\textbf{EASGD $\Leftrightarrow \alg(1, \W_\alpha, 1)$.} 
In EASGD, there is one auxiliary variable. Besides, the mixing matrix is controlled by a hyper-parameter $\alpha$ as follows
\begin{align}
    \W_\alpha =& 
    \begin{bmatrix}
    (1-\alpha)\I & \alpha \one \\
    \alpha\one\tp & 1-\p\alpha
    \end{bmatrix} \in \mathbb{R}^{(\p+1)\times(\p+1)}.\label{eqn:easgd_w}
\end{align}
One can easily validate that the updates defined in \Cref{eqn:mat_update,eqn:w_k,eqn:easgd_w} are equivalent to \Cref{eqn:easgd_o1,eqn:easgd_o2} when using the alternative update rule $\X_{k+1} = \X_k\W_k - \lr \G_k$.

\textbf{D-PSGD $\Leftrightarrow \alg(1, \W, 0)$.} 
The mixing matrix $\W$ in D-PSGD is fixed as a sparse matrix. Only one local update before averaging is considered and there are no auxiliary variables. In addition to these special cases, the cooperative SGD framework allows us to design other communication-efficient SGD variants, as we describe in \Cref{sec:new}.

\subsection{Communication Efficiency}
\label{sec:comm_eff}
The cooperative SGD framework improves the communication-efficiency of distributed SGD in three different ways, as described below. We illustrate these in \Cref{fig:comm_reduc}, which compares the execution timeline of cooperative SGD with fully synchronous SGD.

\textbf{Periodic Averaging.}
The communication delay is amortized over $\tau$ iterations and is $\tau$ times smaller than fully synchronous SGD. Moreover, periodic averaging evens out random variations in workers' computing time, and alleviates the synchronization delay in waiting for slow workers. Observe in \Cref{fig:comm_reduc} that the idle time of workers is significantly reduced.
\begin{figure}[t!]
    \centering
    \begin{subfigure}[t]{0.48\textwidth} 
        \centering
        \includegraphics[width=\columnwidth]{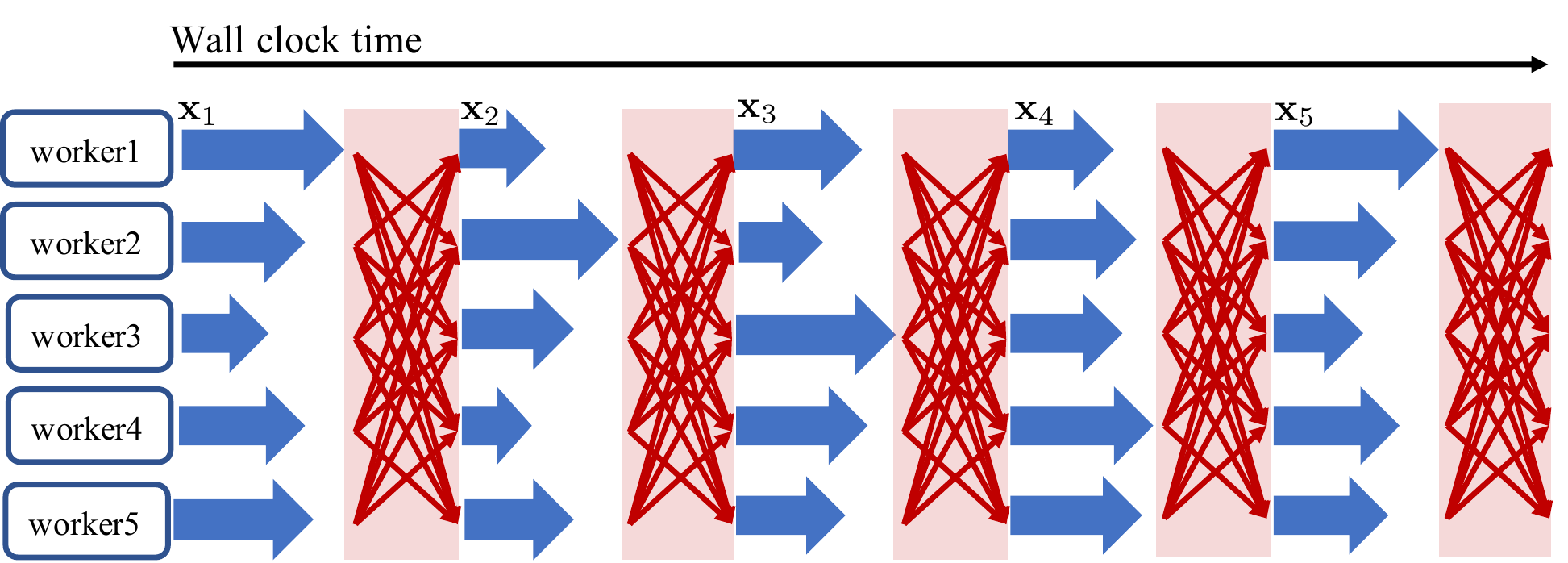}
        \caption{Fully synchronous SGD.}
    \end{subfigure}
    
    \begin{subfigure}[t]{0.48\textwidth}
        \centering
        \includegraphics[width=\columnwidth]{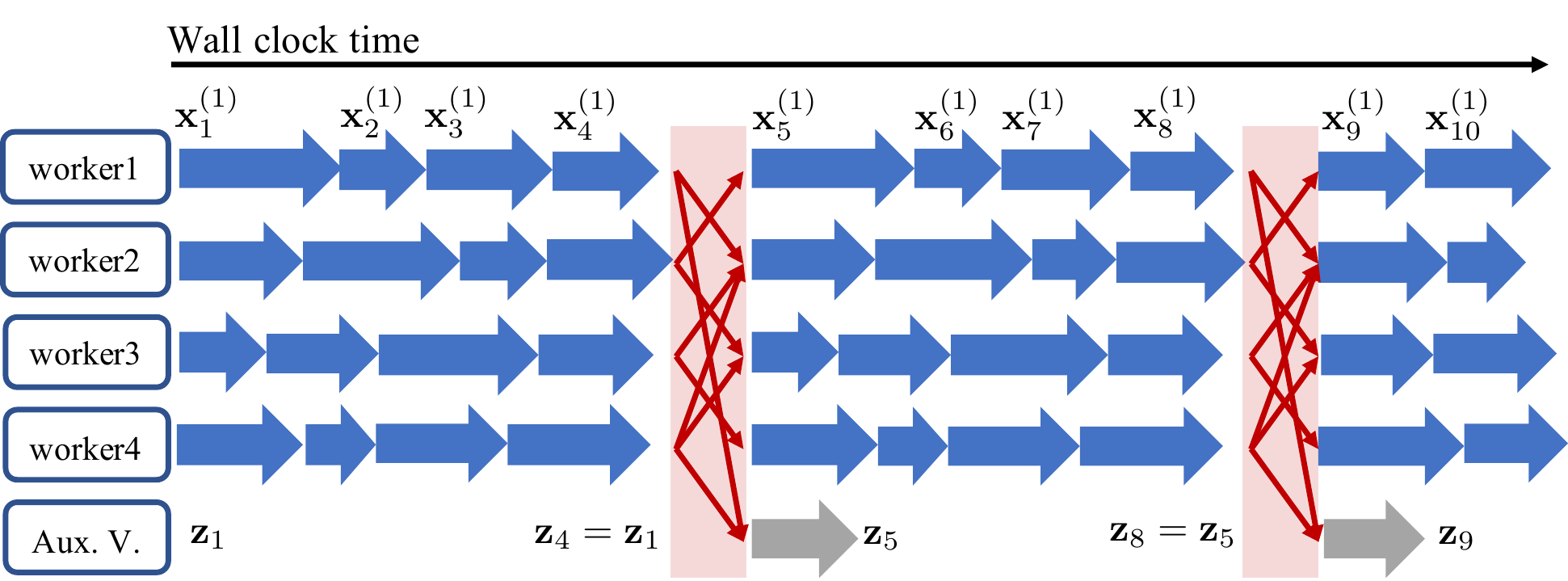}
        \caption{Cooperative SGD.}
    \end{subfigure}
    \caption{Illustration of communication-reduction strategies for $\cp = 4$. Blue, red, grey arrows represent gradient computation, communication among workers, and update of auxiliary variables respectively.}
    \label{fig:comm_reduc}
\end{figure}

\textbf{Non-blocking Execution.} 
Since the auxiliary variables do not compute gradients, they remain the same while worker nodes conduct local updates, that is, $\z_{j \cp} = \z_{j \cp -1} = \dots = \z_{(j-1)\cp+1}$ for $j \geq 1$. Thus, the worker nodes only need $\z_{(j-1)\tau+1}$ before the model-averaging step from $\x_{j \tau}$ to $\x_{j \tau+1}$. So, the auxiliary variables can perform and broadcast model-updates while the workers perform the next set of local updates (see \Cref{fig:comm_reduc}), thus reducing synchronization delay. 

\textbf{Group Synchronization.}
Lastly, instead of synchronizing with all workers, a local model just needs to exchange information with its neighbors, where the network topology is captured by the mixing matrix $\W$. Thus, using a sparse mixing matrix $\W$ reduces the overall communication delay incurred per iteration.

\section{Unified Convergence Analysis}
\label{sec:convergence}

In this section, we present the unified convergence analysis of algorithms in cooperative SGD framework and study how the $\cp$, $\W$, and $\addw$ affect the error-convergence.

\subsection{Assumptions}
The convergence analysis is conducted under the following assumptions, which are similar to previous works on the analysis of distributed SGD \citep{bottou2016optimization}:
\begin{enumerate}
    \item (Smoothness): $\vecnorm{\tg(\x)-\tg(\mathbf{y})} \leq \lip\vecnorm{\x-\mathbf{y}}$;
    \item (Lower bounded): $F(\x) \geq \F_\text{inf}$;
    \item (Unbiased gradients): $\Exs_{\xi|\x} \brackets{\sg(\x)} = \tg(\x)$;
    \item (Bounded variance): $\Exs_{\xi|\x} \vecnorm{\sg(\x)-\tg(\x)}^2 \leq \M\vecnorm{\tg(\x)}^2 + \V$
    where $\M$ and $\V$ are non-negative constants and in inverse proportion to the mini-batch size.
    \item (Mixing Matrix): $\W\one_{\p+\addw} = \one_{\p+\addw}, \ \W\tp = \W$. Besides, the magnitudes of all eigenvalues except the largest one are strictly less than $1$: $\max\{|\lambda_2(\W)|,|\lambda_{\p+\addw}(\W)|\} < \lambda_1(\W)=1$.
\end{enumerate}

\subsection{Update Rule for the Averaged Model}
To facilitate the convergence analysis, we firstly introduce the quantities of interests. Multiplying $\one_{\p+\addw}/(\p+\addw)$ on both sides in \eqref{eqn:mat_update}, we get
\begin{align}
    \X_{k+1} \frac{\one_{\p+\addw}}{\p+\addw}
    =& \X_k \frac{\one_{\p+\addw}}{\p+\addw} - \lr \G_k \frac{\one_{\p+\addw}}{\p+\addw}
\end{align}
where $\W_k$ disappears due to the special property from Assumption 5: $\W_k \one_{\p+\addw} = \one_{\p+\addw}$. Then, define the average model and effective learning rate as
\begin{align}
    \genx_k =  \X_k \frac{\one_{\p+\addw}}{\p+\addw}, \ \genlr = \frac{\p}{\p+\addw}\lr. \label{eqn:def_gen}
\end{align}
After rearranging, one can obtain
\begin{align}
    \genx_{k+1} = \genx_k - \genlr\brackets{\frac{1}{\p}\summp \sg(\x_{k}^{(i)})} \label{eqn:gen_upd}
\end{align}
Observe that the averaged model $\genx_k$ is performing perturbed stochastic gradient descent. In the sequel, we will focus on the convergence of the averaged model $\genx_k$, which is common practice in distributed optimization literature \citep{nedic2009distributed,duchi2012dual,yuan2016convergence}. 

Since the objective function $\F(\x)$ is non-convex, SGD may converge to a local minimum or saddle point. Thus, the expected gradient norm is used as an indicator of convergence \citep{lian2015asynchronous,zeng2016nonconvex,bottou2016optimization}. We say the algorithm achieves an $\epsilon$-suboptimal solution if:
\begin{align}
    \Exs\brackets{\frac{1}{K}\sum_{k=1}^K\vecnorm{\tg(\genx_k)}^2} \leq \epsilon.
\end{align}
This condition guarantees convergence of the algorithm to a stationary point.

\subsection{Main Results}\label{sec:main_thm}
In deep learning, it is common to keep the learning rate as a constant and decay it only when the training procedure saturates. Thus, we present the analysis for fixed learning rate case and study the error floor at convergence.

\begin{thm} [\textbf{Convergence of Cooperative SGD}]
    For algorithm $\alg(\cp,\W, \addw)$, suppose the total number of iterations $K$ can be divided by the communication period $\cp$. Under Assumptions 1--5 (with $\M =0$ \footnote{Constant $\M$ in Assumption 4 only influences the constraint on the learning rate \eqref{eqn:lr_con} and will not appear in the expression of gradient norm upper bound \eqref{eqn:err_bnd}. In order to get neater results, $\M$ is set as 0 in the main paper. In the Appendix, we provide the proof for arbitrary $\M$.}), if the learning rate satisfies
    \begin{align}
        \genlr\lip + 5\genlr^2\lip^2\brackets{\parenth{1+\frac{\addw}{\p}}\frac{\cp}{1-\spgap}}^2 \leq 1 \label{eqn:lr_con}
    \end{align}
    where $\spgap = \max\{|\lambda_2(\W)|,|\lambda_{\p+\addw}(\W)|\}$, and all local models are initialized at a same point $\genx_1$, then the average-squared gradient norm after $K$ iterations is bounded as follows
    \begin{align}
        \Exs\brackets{\frac{1}{K}\sum_{k=1}^K\vecnorm{\tg(\genx_k)}^2} 
        \leq \underbrace{\frac{2\brackets{\F(\genx_1) - \F_\text{inf}}}{\genlr K} + \frac{\genlr\lip\V}{\p}}_{\text{fully sync SGD}} +\nonumber \\
             \underbrace{\genlr^2\lip^2\V\parenth{\frac{1+\spgap^2}{1-\spgap^2}\cp-1}\parenth{1+\frac{\addw}{\p}}^2}_{\text{network error}} \label{eqn:err_bnd} \\
        \xrightarrow{K\to\infty} \frac{\genlr\lip\V}{\p} + \genlr^2\lip^2\V\parenth{\frac{1+\spgap^2}{1-\spgap^2}\cp-1}\parenth{1+\frac{\addw}{\p}}^2 \label{eqn:err_floor}
    \end{align}
    where $\genx_k,\genlr$ are defined in \eqref{eqn:def_gen}.
    \label{thm:gen}
\end{thm}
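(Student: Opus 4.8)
The plan is to track the \emph{averaged} model $\genx_k$, which by \eqref{eqn:gen_upd} performs a perturbed SGD step $\genx_{k+1}=\genx_k-\genlr\,\tfrac1\p\summp\sg(\xki)$, via the standard non-convex descent argument, and to control the deviation of the local models from $\genx_k$ separately using the spectral properties of $\W$ from Assumption~5. First I would apply $\lip$-smoothness (Assumption~1) to the consecutive averaged iterates, substitute the update, and take the conditional expectation $\CExs$ over the step-$k$ mini-batches. Unbiasedness (Assumption~3) turns the first-order term into $-\genlr\langle\tg(\genx_k),\tfrac1\p\summp\tg(\xki)\rangle$, which I would split with the polarization identity $\langle a,b\rangle=\tfrac12(\vecnorm{a}^2+\vecnorm{b}^2-\vecnorm{a-b}^2)$ to extract the desired $-\tfrac{\genlr}{2}\vecnorm{\tg(\genx_k)}^2$. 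The quadratic term $\tfrac{\lip}{2}\genlr^2\CExs\vecnorm{\tfrac1\p\summp\sg(\xki)}^2$ is handled by a bias--variance split, where Assumption~4 with $\M=0$ and independence of the workers' mini-batches contribute exactly the variance $\genlr^2\lip\V/\p$.

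Two residual terms then remain: a consensus error $\tfrac1\p\summp\vecnorm{\tg(\xki)-\tg(\genx_k)}^2$ produced by the polarization identity, which smoothness converts into the discrepancy $\tfrac{\lip^2}{\p}\fronorm{\X_k(\I-\J)}^2$; and a term $\vecnorm{\tfrac1\p\summp\tg(\xki)}^2$ that the learning-rate condition \eqref{eqn:lr_con} is designed to absorb against the corresponding variance-split piece. Telescoping over $k=1,\dots,K$, dividing by $K$, and rearranging isolates $\tfrac1K\sum_k\Exs\vecnorm{\tg(\genx_k)}^2$ on the left and leaves the two ``fully sync SGD'' terms together with the accumulated network error $\tfrac{\lip^2}{K}\sum_{k=1}^K\tfrac1\p\Exs\fronorm{\X_k(\I-\J)}^2$ on the right.

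Bounding this accumulated discrepancy is the crux and the step I expect to be the main obstacle. I would unroll the matrix recursion \eqref{eqn:mat_update} into $\X_k(\I-\J)=-\lr\sum_{\ell=1}^{k-1}\G_\ell\,\matPhi_{\ell,k}$ with $\matPhi_{\ell,k}=\prod_{j=\ell}^{k-1}\W_j-\J$, where the initial term vanishes because all workers start at $\genx_1$, so $\X_1(\I-\J)=0$. The key algebraic fact is that $\W_j\J=\J\W_j=\J$ makes the product telescope as $\matPhi_{\ell,k}=\prod_{j=\ell}^{k-1}(\W_j-\J)$, so its operator norm contracts multiplicatively, acquiring a factor $\spgap$ at each averaging round and $1$ during local steps; hence $\opnorm{\matPhi_{\ell,k}}\le\spgap^{s(\ell,k)}$ where $s(\ell,k)$ counts the averaging rounds in $[\ell,k)$. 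I would then split $\G_\ell$ into its conditional mean (the full gradients) and a zero-mean noise part. The noise parts are uncorrelated across $\ell$, so their cross terms vanish and each contributes at most $\V$ per worker column weighted by $\spgap^{2s}$; the full-gradient part, whose cross terms do not vanish, is controlled by a Cauchy--Schwarz estimate with geometric weights tied to the decay $\spgap^{s}$.

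Finally, I would exchange the order of the double sum over $\ell$ and $k$ and evaluate the resulting geometric series over the periodic block structure; summing $\spgap^{2s}$ across blocks collapses to the closed form $\tfrac{1+\spgap^2}{1-\spgap^2}\cp-1$, while $\lr=(1+\addw/\p)\genlr$ supplies the prefactor. A subtle bookkeeping point is that the noise lives only on the $\p$ worker columns and the consensus error involves only worker rows, so only the $\p$ worker rows of $\matPhi_{\ell,k}$ enter; this restriction makes the leading $1/\p$ cancel cleanly and yields precisely $(1+\addw/\p)^2$ rather than a higher power. The variance-driven part thus reproduces the stated network-error term, whereas the full-gradient part reappears as a multiple of $\sum_k\vecnorm{\tg(\genx_k)}^2$ (after a further smoothness step that again exposes a discrepancy, creating a self-referential estimate); this is exactly what condition \eqref{eqn:lr_con}, with its factor $5$ and its $\bigl((1+\addw/\p)\tfrac{\cp}{1-\spgap}\bigr)^2$ term, is tuned to absorb back into the left-hand side, leaving the clean bound \eqref{eqn:err_bnd}. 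Letting $K\to\infty$ removes the $1/(\genlr K)$ initial-gap term and gives the error floor \eqref{eqn:err_floor}; carrying a nonzero $\M$ would only tighten \eqref{eqn:lr_con} without altering \eqref{eqn:err_bnd}. The delicate parts are keeping the full-gradient cross terms under control through the weighted Cauchy--Schwarz and verifying that the geometric sums collapse to the advertised constant.
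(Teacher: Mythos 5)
Your proposal is correct and follows essentially the same route as the paper's proof: a perturbed-SGD descent lemma for the averaged iterate with the polarization identity, unrolling $\X_k(\I-\J)$ through the telescoping products so that $\opnorm{\W^j-\J}=\spgap^{j}$ governs the decay, a noise/full-gradient split in which the martingale cross terms vanish and the full-gradient cross terms are handled by a weighted Cauchy--Schwarz (trace) bound, and geometric summation over the periodic blocks yielding $\frac{1+\spgap^2}{1-\spgap^2}\cp-1$. The only minor deviation is in the absorption step: the paper absorbs the full-gradient contribution directly into the negative $-\frac{1}{\p}\fronorm{\tg(\X_k)}^2$ term retained from the descent lemma, rather than converting it back to $\vecnorm{\tg(\genx_k)}^2$ and folding it into the left-hand side as you suggest, which sidesteps the self-referential estimate you anticipate and is what produces the particular constant $5$ in \eqref{eqn:lr_con}.
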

All proofs are provided in the Appendix. The error floor at convergence is given by \eqref{eqn:err_floor}. 

\textbf{Error decomposition.}
It is worth noting that the upper bound \eqref{eqn:err_bnd} is decomposed into two parts. The first two terms are same as the optimization error bound in fully synchronous SGD \citep{bottou2016optimization}. The last term is \textit{network error}, resulted from performing local updates and reducing inter-worker communication. It directly increases the error floor at convergence and is a measure of local models' discrepancies. When all local models are fully synchronized at every iterations ($\cp = 1, \spgap = 0, \addw = 0$), then the network error becomes zero. 

\textbf{Dependence on $\cp,\W$.}
\Cref{thm:gen} states that the error floor at convergence \eqref{eqn:err_floor} is determined by the communication period $\cp$ and the second largest absolute eigenvalue $\spgap$ of the mixing matrix. In particular, the bound will monotonically increase along with $\tau$ and $\spgap$. The definition of $\spgap$ is common in random walks on graphs and reflects the mixing rates of different variables. When there is no communication among local workers, then $\W=\I_{\p+\addw}$ and $\spgap = 1$; When local models are fully synchronized, then $\W = \J_{\p+\addw}$ and $\spgap = 0$. Typically, a sparser matrix means a larger value of $\zeta$.

Besides, since the network error bound is linear to $\tau$ but proportional to $(1+\spgap^2)/(1-\spgap^2)$, as shown in \Cref{fig:ne}, it is more sensitive to the changes in communication period. 
%
In \Cref{fig:exp_ne}, we evaluate various hyper-parameter settings for training VGGNet \citep{simonyan2014very} for classification of the CIFAR10 dataset \citep{krizhevsky2009learning}. As suggested by \Cref{eqn:err_floor}, the empirical results show that a higher network error (larger $\cp$ or larger $\zeta$) leads to a higher error floor at convergence.

\textbf{Dependence on $\addw$.}
Note that the effective learning rate \eqref{eqn:def_gen} is determined by the number of auxiliary variables. Using more auxiliary variables results in smaller effective learning rate, since they update only through model averaging. Consequently, it may slow down the optimization progress (increase the first term in \eqref{eqn:err_bnd}) while enable smaller error floor at convergence (reduce the second term in \eqref{eqn:err_bnd}).
\begin{figure}[!t]
    \centering
    \includegraphics[width=.4\textwidth]{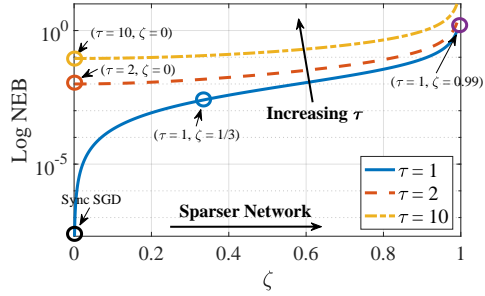}
    \caption{Illustration of how the network error bound in \Cref{eqn:err_bnd} monotonically increases with $\tau$ and $\spgap$.}
    \label{fig:ne}
\end{figure}

\begin{figure}[!t]
    \centering
    \includegraphics[width=.4\textwidth]{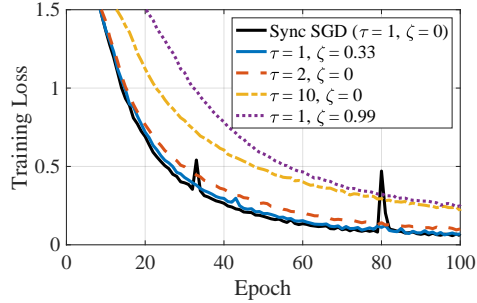}
    \caption{Experiments on CIFAR-10 with VGG-16 and 8 worker nodes. For the same learning rate, larger $\cp$ or larger $\zeta$ lead to a higher error floor at convergence. Each line corresponds to a circled point in \Cref{fig:ne}. 
    }
    \label{fig:exp_ne}
\end{figure}
\textbf{Finite horizon result.}
If $K$ is decided preemptively, then with a proper learning rate, we obtain the following bound. A similar approach also appears in \cite{ghadimi2013stochastic,lian2017can,yu2018parallel,bernstein2018signsgd}.
\begin{corollary}
    For algorithm $\alg(\cp,\W,\addw)$, under Assumption 1--5, if the learning rate is $\lr = \frac{\p+\addw}{\lip\p}\sqrt{\frac{\p}{K}}$, the average-squared gradient norm after $K$ iterations is bounded by
    \begin{align}
        \Exs\brackets{\frac{1}{K}\sum_{k=1}^K\vecnorm{\tg(\genx_k)}^2}& 
        \leq \frac{2\lip\brackets{\F(\genx_1) - \F_\text{inf}} + \V}{ \sqrt{\p K}} + \nonumber\\
        & \frac{\p}{K}\parenth{1+\frac{\addw}{\p}}^2\parenth{\frac{1+\spgap^2}{1-\spgap^2}\cp -1}\V
    \end{align}
    if the total iterations $K$ is sufficiently large: $K \geq 10\p[(1+\frac{\addw}{\p})\frac{\cp}{1-\spgap}]^2$. Furthermore, if $K \geq (\p+\addw)^2\p[(1+\frac{\addw}{\p})\frac{\cp}{1-\spgap}]^2$, then the average-squared gradient norm will be bounded by $2[\lip(\F(\x_1)-\F_\text{inf})+\V]/\sqrt{\p K}$.
    \label{corollary:finite_K}
\end{corollary}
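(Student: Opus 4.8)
The plan is to invoke \Cref{thm:gen} directly with the prescribed learning rate and then simplify the resulting bound using the stated lower bounds on $K$. The first step is to translate the choice $\lr = \frac{\p+\addw}{\lip\p}\sqrt{\p/K}$ into the effective learning rate. Using the definition $\genlr = \frac{\p}{\p+\addw}\lr$ from \eqref{eqn:def_gen}, the prefactor $\frac{\p+\addw}{\p}$ cancels and I obtain the clean identity $\genlr = \frac{1}{\lip}\sqrt{\p/K}$, so that $\genlr\lip = \sqrt{\p/K}$ and $\genlr^2\lip^2 = \p/K$. These two products are the only quantities that enter the bound \eqref{eqn:err_bnd}, which is precisely what makes the substitution tidy.

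Before applying the theorem I must check that the learning-rate constraint \eqref{eqn:lr_con} holds. Writing $A = (1+\frac{\addw}{\p})\frac{\cp}{1-\spgap}$, the left-hand side of \eqref{eqn:lr_con} becomes $\sqrt{\p/K} + 5\frac{\p}{K}A^2$. The hypothesis $K \geq 10\p A^2$ gives $5\frac{\p}{K}A^2 \leq \frac{1}{2}$; and since $A \geq 1$ (because $\cp \geq 1$, $\spgap \in [0,1)$, $\addw \geq 0$), the same hypothesis forces $\p/K \leq \frac{1}{10}$, hence $\sqrt{\p/K} \leq 1/\sqrt{10}$. The two contributions sum to at most $1/\sqrt{10}+1/2 < 1$, so \eqref{eqn:lr_con} is satisfied and \Cref{thm:gen} applies.

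Substituting $\genlr\lip = \sqrt{\p/K}$ into the first two (fully-synchronous) terms of \eqref{eqn:err_bnd} collapses them: the optimization term becomes $\frac{2\lip[\F(\genx_1)-\F_\text{inf}]}{\sqrt{\p K}}$ and the variance term becomes $\frac{\V}{\sqrt{\p K}}$, which combine into $\frac{2\lip[\F(\genx_1)-\F_\text{inf}]+\V}{\sqrt{\p K}}$. Substituting $\genlr^2\lip^2 = \p/K$ into the network-error term reproduces $\frac{\p}{K}(1+\frac{\addw}{\p})^2(\frac{1+\spgap^2}{1-\spgap^2}\cp-1)\V$ verbatim. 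This yields exactly the displayed main bound of the corollary.

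For the sharper conclusion under the stronger hypothesis $K \geq (\p+\addw)^2\p A^2$, I would show the network-error term is itself at most $\V/\sqrt{\p K}$, so that it folds into the first term and leaves only $2[\lip(\F(\x_1)-\F_\text{inf})+\V]/\sqrt{\p K}$. The key algebraic fact is $\frac{1+\spgap^2}{1-\spgap^2}\cp - 1 \leq \frac{\cp}{1-\spgap}$, which follows from $\spgap^2 \leq \spgap$ on $[0,1)$ after clearing the common denominator $(1-\spgap)(1+\spgap)$. Applying it bounds the network error by $\frac{\p}{K}(1+\frac{\addw}{\p})^2\frac{\cp}{1-\spgap}\V$, and requiring this to be at most $\V/\sqrt{\p K}$ rearranges to exactly $K \geq \p^3(1+\frac{\addw}{\p})^4\frac{\cp^2}{(1-\spgap)^2} = (\p+\addw)^2\p A^2$, matching the hypothesis. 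There is no deep obstacle here; the one delicate point is keeping the powers of $(1+\addw/\p)$ and $\cp/(1-\spgap)$ consistent so that the two $K$-thresholds line up, and everything else is direct substitution.
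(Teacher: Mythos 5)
Your proposal is correct and follows essentially the same route as the paper: substitute the prescribed learning rate into the bound of \Cref{thm:gen}, verify the constraint \eqref{eqn:lr_con} by splitting it into a $\sqrt{\p/K}$ part and a $5\p A^2/K$ part, and for the sharper claim bound the network-error term by $\V/\sqrt{\p K}$ using $\frac{1+\spgap^2}{1+\spgap}\leq 1$. Your verification of \eqref{eqn:lr_con} via $\sqrt{\p/K}\leq 1/\sqrt{10}$ is in fact slightly cleaner than the paper's $\tfrac12+\tfrac12$ split (which leans on $K>4\p$ being implied by $K\geq 10\p A^2$), but the argument is the same in substance.
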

By directly setting $\W = \J$ (i.e., $\spgap = 0$) and $\addw = 0$ in \Cref{corollary:finite_K}, one can obtain the result for PASGD. Comparing to previous results on non-convex objectives \cite{yu2018parallel}, we remove the uniformly bounded gradients assumption. To obtain an error bound in the form $C/\sqrt{\p K}$ for some constant $C$, our result shows $\cp$ can be large up to $\sqrt{K/\p^{3}}$ instead of $(K/\p^{3})^{1/4}$ \cite{yu2018parallel}. 


\section{Novel Analyses of Existing Algorithms}\label{sec:analyses}
Using the unified analysis of cooperative SGD presented in \Cref{thm:gen}, one can directly derive novel analyses of EASGD, PASGD and D-PSGD. The general framework also provides new insights such as the best choice of parameter $\alpha$ in EASGD (see \Cref{lem:opt_a}). 

\subsection{EASGD $\alg(1,\W_\alpha,1)$}
\label{sec:easgd}
Recall that EASGD uses hyper-parameter $\alpha$ to control the eigenvalues of mixing matrix. For $\W_\alpha$ defined in \eqref{eqn:easgd_w}, the second largest eigenvalue magnitude is 
\begin{align}
    \spgap = \max\{|1-\alpha|,|1-(\p+1)\alpha|\}. \label{eqn:easgd_zeta}
\end{align}
In order to let $\W_\alpha$ satisfy the conditions in Assumption 5, it is required that $\spgap < 1$, namely $0 \leq \alpha < 2/(\p+1)$. This condition suggests that $\alpha$ can be selected in a broader range than the original paper \citep{zhang2015deep} suggested ($0 \leq \alpha < 1/\p$).

Intuitively, a larger $\alpha$ forces more consensus between the locally trained models and improves stability. However, from equation \cref{eqn:easgd_zeta}, we observe that there exists an optimal $\alpha$ that minimizes the value of $\spgap$. 
\begin{lem}[\textbf{Best Choice of $\alpha$}]\label{lem:opt_a}
    If $\alpha = 2/(\p+2)$, then the second largest absolute eigenvalue of $\W_\alpha$, given in \Cref{eqn:easgd_zeta}, achieves the minimal value $\p/(\p+2)$.
\end{lem}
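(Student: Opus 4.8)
The plan is to treat $\spgap$ as the pointwise maximum of two explicit one-variable functions and minimize it over the feasible interval $0 \le \alpha < 2/(\p+1)$ identified just above the lemma. Define $f_1(\alpha) = |1-\alpha|$ and $f_2(\alpha) = |1-(\p+1)\alpha|$, so that $\spgap(\alpha) = \max\{f_1(\alpha), f_2(\alpha)\}$. First I would observe that on the feasible range $\alpha < 2/(\p+1) \le 1$ (using $\p \ge 1$) we have $f_1(\alpha) = 1 - \alpha$, a strictly decreasing function. The second function $f_2$ is V-shaped: it equals $1-(\p+1)\alpha$ and decreases on $[0, 1/(\p+1)]$, attains the value $0$ at $\alpha = 1/(\p+1)$, and equals $(\p+1)\alpha - 1$ and increases on $[1/(\p+1), 2/(\p+1))$.

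Next I would locate the minimizer of the maximum. Since $f_1$ is decreasing and $f_2$ first decreases then increases, the maximum is governed by $f_1$ for small $\alpha$ and by the increasing branch of $f_2$ for large $\alpha$, so the minimum of the max occurs where these two active branches meet. Setting $1-\alpha = (\p+1)\alpha - 1$ gives $(\p+2)\alpha = 2$, i.e. $\alpha = 2/(\p+2)$. A quick check confirms this point lies strictly between $1/(\p+1)$ and $2/(\p+1)$ (equivalently $0 < \p$), so it indeed sits on the increasing branch of $f_2$ and inside the feasible interval. Substituting back, $f_1(2/(\p+2)) = 1 - 2/(\p+2) = \p/(\p+2)$ and $f_2(2/(\p+2)) = (\p+1)\cdot 2/(\p+2) - 1 = \p/(\p+2)$, so $\spgap = \p/(\p+2)$ at this point.

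Finally I would certify that this crossing is a global minimizer on the feasible interval rather than merely a local feature. For $\alpha < 2/(\p+2)$ I note $f_1(\alpha) = 1-\alpha > \p/(\p+2)$, hence $\spgap(\alpha) \ge f_1(\alpha) > \p/(\p+2)$; for $2/(\p+2) < \alpha < 2/(\p+1)$ the increasing branch gives $f_2(\alpha) = (\p+1)\alpha - 1 > \p/(\p+2)$, hence $\spgap(\alpha) \ge f_2(\alpha) > \p/(\p+2)$. Therefore $\alpha = 2/(\p+2)$ is the unique minimizer and the minimal value of $\spgap$ is $\p/(\p+2)$, as claimed.

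The argument is elementary; the only point requiring genuine care is the piecewise analysis of the two absolute values and verifying that the optimum sits at the intersection of the decreasing branch of $f_1$ with the \emph{increasing} branch of $f_2$ (and not, for instance, at the kink of $f_2$ at $\alpha = 1/(\p+1)$ or at an endpoint). That bookkeeping, together with confirming $1/(\p+1) < 2/(\p+2) < 2/(\p+1)$, is where I expect the proof to demand the most attention.
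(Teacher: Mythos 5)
Your proof is correct and follows essentially the same route as the paper: both identify $\spgap(\alpha)$ as a piecewise-linear function that decreases as $1-\alpha$ on $[0,2/(\p+2)]$ and increases as $(\p+1)\alpha-1$ beyond that point, so the minimum $\p/(\p+2)$ is attained at the crossing $\alpha=2/(\p+2)$. Your version simply spells out the branch bookkeeping and the global-minimality check that the paper compresses into ``it is straightforward to show.''
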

Accordingly, by choosing the best $\alpha$, the error floor at convergence can also be minimized. To be specific, we have the following theorem.
\begin{thm}[\textbf{Convergence of EASGD with the best $\alpha$}]
\label{thm:easgd}
When $\alpha$ is set to $2/(\p+2)$ as suggested by \Cref{lem:opt_a}, the error of EASGD can be bounded as follows:
\begin{align}
    \Exs\brackets{\frac{1}{K}\sum_{k=1}^K\vecnorm{\tg(\genx_k)}^2} 
        \leq& \frac{2\brackets{\F(\genx_1) - \F_\text{inf}}}{\genlr K} + \frac{\genlr\lip\V}{\p} +\nonumber \\
            &\frac{1}{2}\genlr^2\lip^2\V(\p+1)
\end{align}
where $\genx_k$ and $\genlr$ are defined in \eqref{eqn:def_gen}.
\end{thm}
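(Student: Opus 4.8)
The plan is to derive \Cref{thm:easgd} as an immediate specialization of the general bound in \Cref{thm:gen}, using the identification EASGD $\Leftrightarrow \alg(1,\W_\alpha,1)$ from \Cref{sec:cases}. Because that identification holds for the alternative update rule $\X_{k+1}=\X_k\W_k-\lr\G_k$ rather than \eqref{eqn:mat_update}, I would first appeal to \Cref{rem:alter}, which asserts that the bound \eqref{eqn:err_bnd} extends unchanged to this update. The proof then reduces to plugging the EASGD-specific parameters into \eqref{eqn:err_bnd}: the communication period $\cp=1$, the single auxiliary variable $\addw=1$, and the optimal spectral gap $\spgap=\p/(\p+2)$ supplied by \Cref{lem:opt_a} for the choice $\alpha=2/(\p+2)$.

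The first two terms of \eqref{eqn:err_bnd} depend on $\cp,\W,\addw$ only through $\genlr$, so they reproduce the claimed $2[\F(\genx_1)-\F_\text{inf}]/(\genlr K)$ and $\genlr\lip\V/\p$ verbatim. All the work therefore lies in the network-error term
\begin{align}
\genlr^2\lip^2\V\parenth{\frac{1+\spgap^2}{1-\spgap^2}\cp-1}\parenth{1+\frac{\addw}{\p}}^2. \nonumber
\end{align}
Setting $\cp=1$ collapses the first factor to $\frac{1+\spgap^2}{1-\spgap^2}-1=\frac{2\spgap^2}{1-\spgap^2}$, while $\addw=1$ turns the second factor into $(\p+1)^2/\p^2$. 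Substituting $\spgap=\p/(\p+2)$ gives $\spgap^2=\p^2/(\p+2)^2$ and $1-\spgap^2=4(\p+1)/(\p+2)^2$, so $\frac{2\spgap^2}{1-\spgap^2}=\p^2/[2(\p+1)]$. Multiplying the two factors, the $\p^2$ and one power of $(\p+1)$ cancel, leaving $\tfrac12\genlr^2\lip^2\V(\p+1)$, exactly the third term of the theorem.

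Since the argument is a substitution followed by an algebraic collapse, there is no deep obstacle; the only delicate point is confirming that the two factors cancel as cleanly as claimed. I would verify that $(1+\addw/\p)^2=(\p+1)^2/\p^2$ precisely neutralizes the $\p^2/[2(\p+1)]$ coming from $2\spgap^2/(1-\spgap^2)$, so that the final network error is linear rather than quadratic in $\p$. I would also confirm that $\spgap=\p/(\p+2)<1$, so that Assumption 5 and the learning-rate condition \eqref{eqn:lr_con} remain admissible at the optimal $\alpha$, ensuring \Cref{thm:gen} is legitimately applicable to this parameter choice.
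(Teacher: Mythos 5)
Your proof is correct and matches the paper's own argument, which likewise obtains \Cref{thm:easgd} by substituting $\cp=1$, $\addw=1$, and $\spgap=\p/(\p+2)$ into \Cref{thm:gen}; your algebra showing $\frac{2\spgap^2}{1-\spgap^2}\cdot\bigl(1+\frac{1}{\p}\bigr)^2=\frac{\p+1}{2}$ is exactly the intended cancellation. In fact you are more careful than the appendix, which contains a typo stating $\addw=0$ (that value would give a network error of $\genlr^2\lip^2\V\,\p^2/[2(\p+1)]$, not the claimed $\tfrac12\genlr^2\lip^2\V(\p+1)$), and you also make explicit the appeal to \Cref{rem:alter} for the alternative update rule, which the paper leaves implicit.
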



To the best of our knowledge, this theorem is the first convergence result for EASGD with general objectives and also the first theoretical justification for the best choice of $\alpha$. By setting $\genlr = \frac{1}{\lip}\sqrt{\frac{\p}{K}}$, one can also obtain a finite horizon result as \Cref{corollary:finite_K}.
\begin{figure*}[t!]
    \centering
    \begin{subfigure}[t]{0.32\textwidth} 
        \centering
        \includegraphics[scale = 0.33]{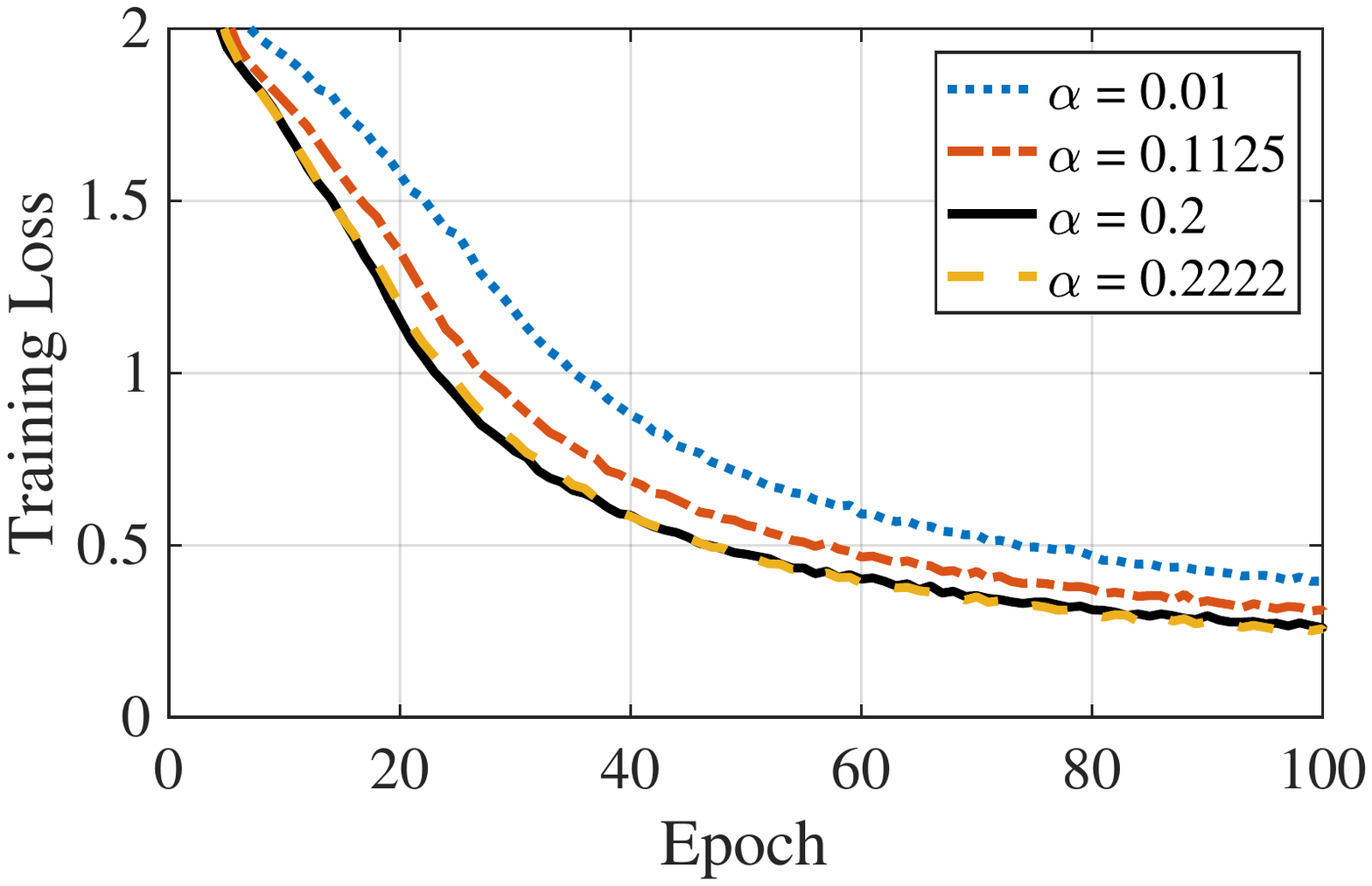}
        \caption{Average training loss of workers.}
    \end{subfigure}%
    ~
    \begin{subfigure}[t]{0.32\textwidth}
        \centering
        \includegraphics[scale = 0.33]{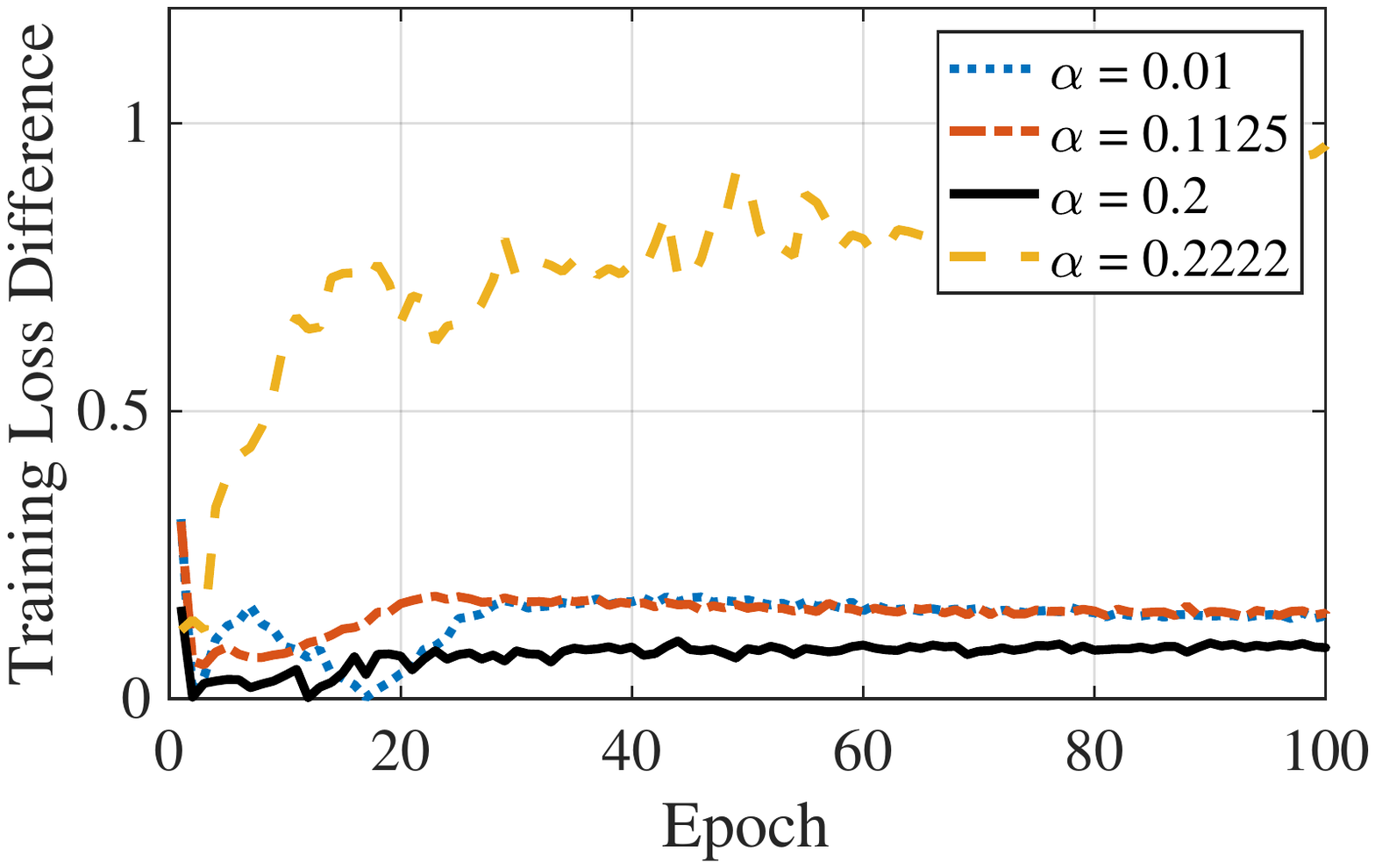}
        \caption{The difference of training loss between workers and the auxiliary variable.}
    \end{subfigure}
    ~
    \begin{subfigure}[t]{0.32\textwidth}
        \centering
        \includegraphics[scale = 0.33]{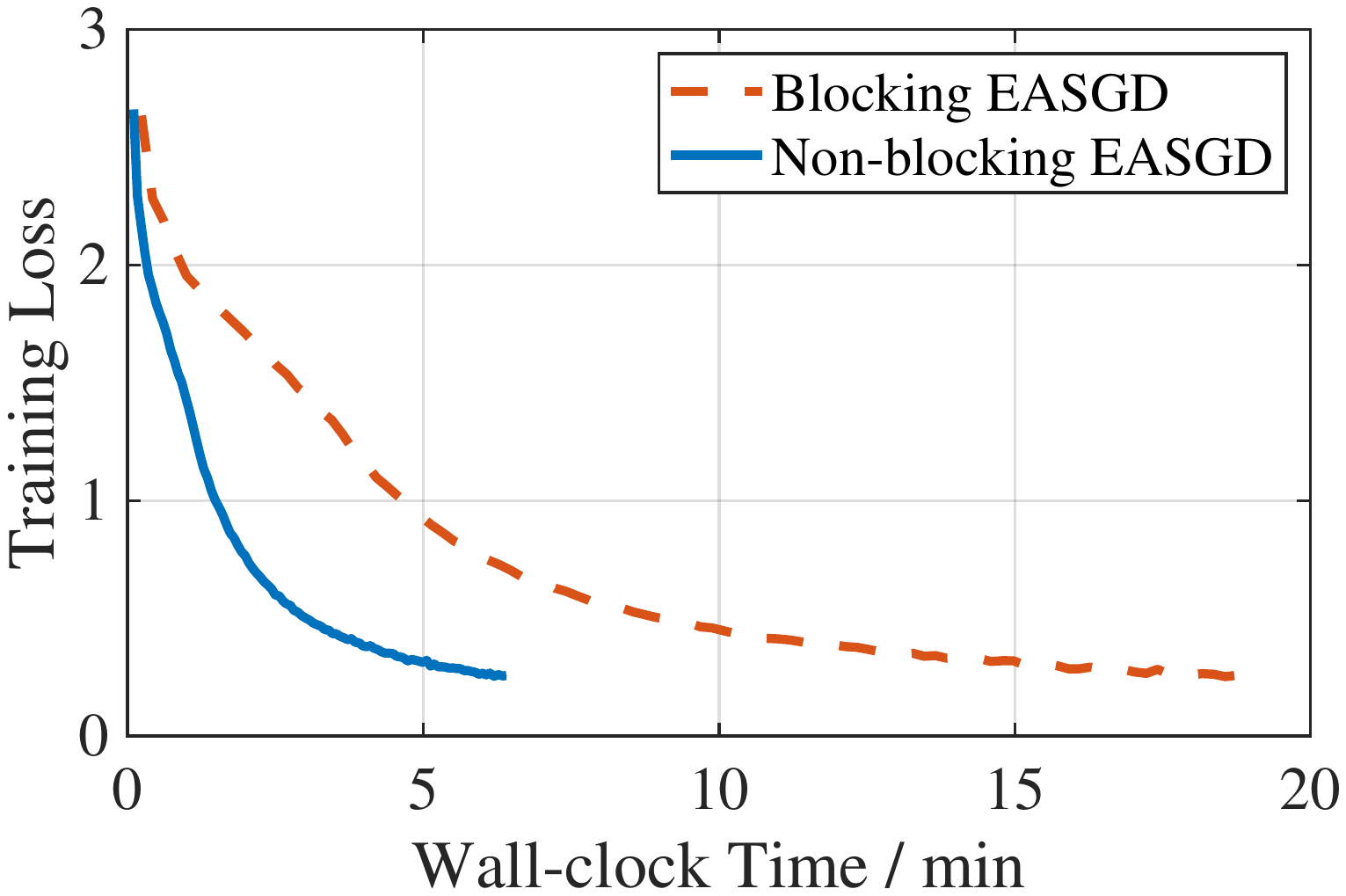}
        \caption{Benefit of non-blocking execution.}
        \label{fig:easgd_nb}
    \end{subfigure}
    \caption{EASGD training on CIFAR-10 with VGG-16. Since there are 8 worker nodes and 1 auxiliary variable, the best value of $\alpha$ given by \Cref{lem:opt_a} is $2/(\p+2) = 0.2$, which performs better than the empirical choice $\alpha = 0.9/\p = 0.1125$ suggested in \cite{zhang2015deep}. The best choice of $\alpha$ yields the lowest training loss and the least discrepancies between workers and auxiliary variable.}
    \label{fig:easgd}
\end{figure*}

\textbf{Empirical validation.} As shown in \Cref{fig:easgd}, the best choice $\alpha=2/(\p+2)=0.2$ yields fastest convergence and least discrepancies between workers and the auxiliary variable. When $\alpha$ is greater than $2/(\p+1) \approx 0.2222$, we observe the algorithm cannot converge. Furthermore, in \Cref{fig:easgd_nb}, we show the benefit of non-blocking execution. By overlapping the broadcast of auxiliary variable and workers computation, it directly reduces about $67\%$ training time. 


\subsection{PASGD $\alg(\cp,\J,0)$ Vs. D-PSGD $\alg(1,\W,0)$}\label{sec:pasgd}

The general framework enables easy comparisons between different communication reduction strategies. Here, we compare periodic communication and group synchronization strategies. Note that when PASGD $\alg(\cp,\J,0)$ and D-PSGD $\alg(1,\W,0)$ have the same error floor at convergence, we have
\begin{align}
    \frac{2\spgap_{\cp}^2}{1-\spgap_{\cp}^2} = \cp-1 \Rightarrow \spgap_{\cp} = \sqrt{1-\frac{2}{\cp+1}}. \label{eqn:zeta_th}
\end{align}
Equation \eqref{eqn:zeta_th} provides a threshold for $\spgap$. As long as $\spgap \leq \spgap_{\cp}$, D-PSGD $\alg(1,\W,0)$ would perform better than PASGD $\alg(\cp, \J, 0)$ in terms of the worst-case final error at convergence. Along with the increase of $\cp$, the value of threshold $\spgap_\cp$ rapidly converges to 1. Therefore, when $\cp$ becomes large, D-PSGD has a lower error floor in a very broad range of $\spgap$.

As for communication efficiency, the benefit of group synchronization relies on the number of workers. It at most reduces the communication overhead by $\p$ times, since at least one connection should be preserved for each worker. As the mixing matrix affects the communication delay implicitly, it is not trivial to design a good mixing matrix that not only has small eigenvalues but also enables efficient implementation. On the contrary, periodic averaging has higher flexibility without such limitations. If we set $\cp \geq \p$, then PASGD always has shorter training time than D-PSGD.

\section{Designing  New Communication-Efficient SGD Algorithms}
\label{sec:new}
As shown in \Cref{sec:analyses}, the Cooperative SGD framework enables us to analyze and compare existing communication-efficient SGD algorithms such as PASGD, EASGD and D-PSGD. The Cooperative SGD framework can also be used to design new algorithms that combine the communication-efficiency strategies adopted by these algorithms.


\subsection{Decentralized Periodic Averaging}
From \Cref{sec:pasgd} we see that D-PSGD has superior convergence performance, while PASGD can easily control the communication delay and provide higher throughput. We propose using a combination of these called decentralized periodic averaging SGD $\alg(\tau,\W,0)$ with carefully chosen $\tau$ and $\W$. For a small number of well-connected workers, larger $\tau$ is more preferable. For a large number of workers, using a sparse mixing matrix $\W$ and small $\tau$ gives better convergence. For a fixed topology worker network where $\W$ is prescribed, increasing the communication period can be an effective way to speedup the decentralized training. In \Cref{fig:decentralized_fedavg}, we implemented the algorithm with $7$ worker nodes and evaluated it on CIFAR10 dataset. The observation is decentralized periodic averaging with $\cp = 15,\spgap = 0.75$ achieves significant speedup over the pure D-PSGD algorithm as well as similar throughput as pure PASGD with a larger communication period $\cp = 50$.
\begin{figure}
    \centering
    \includegraphics[width = .43\textwidth]{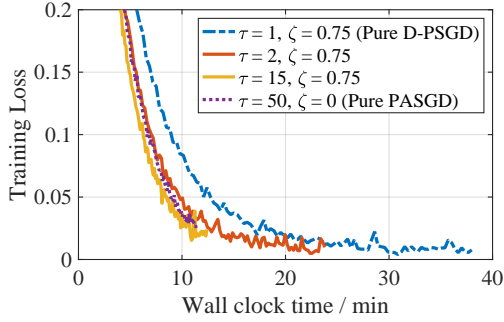}
    \caption{Decentralized periodic averaging on CIFAR-10 with VGG-16. It achieves significant speedup over pure D-PSGD and has lower training loss than pure PASGD with a large communication period.}
    \label{fig:decentralized_fedavg}
\end{figure}

\subsection{Generalized Elastic Averaging}
In generalized elastic averaging  $\alg(1,\W',1)$, we modify decentralized SGD with mixing matrix $\W$ by adding an auxiliary variable (with elasticity parameter $\alpha$) stored at a new node that is connected to all $m$ worker nodes. Recall that a sparse mixing matrix $\W$ can reduce communication delay, but it may have large $\zeta$ that leads to inferior convergence. Introducing the auxiliary variable results in the mixing matrix $\W'$ shown in \eqref{eqn:new_W} below. The second largest eigenvalue of this matrix is $(1-\alpha)$ lower than $\zeta$ as shown by \Cref{lem:reduce_zeta}.

\begin{lem}\label{lem:reduce_zeta}
    Suppose there is a $\p$-dimension symmetric matrix $\W$ such that  $\W\one = \one$, and its eigen-values satisfy $-1\leq \lambda_\p(\W)\leq \cdots \leq \lambda_1(\W)\leq 1$. Let $\zeta = \max\{|\lambda_2(\W)|,|\lambda_\p(\W)|\}$. Then, for matrix $\W'$ which is defined as:
    \begin{align}
        \W' = 
        \begin{bmatrix}
        (1-\alpha)\W & \alpha \one \\
        \alpha\one\tp & 1-\p\alpha
        \end{bmatrix}, \label{eqn:new_W}
    \end{align}
    we have 
    \begin{align}
     \spgap' &= \max\{|\lambda_2(\W')|,|\lambda_{\p+1}(\W')|\} \\ &=\max\{(1-\alpha)\spgap, |1-(\p+1)\alpha|\}.
    \end{align}
     Setting $\alpha = \frac{1+\spgap}{\p+1+\spgap}$ yields the minimum $\spgap'=\frac{\p\spgap}{\p+1+\spgap}$.
\end{lem}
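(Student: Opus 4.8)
The plan is to diagonalize $\W'$ by leveraging the eigenstructure of $\W$, reducing the spectrum computation to one scalar family of eigenvalues plus a single $2\times 2$ block. Since $\W$ is symmetric with $\W\one = \one$, I can take an orthonormal eigenbasis $\one/\sqrt{\p}, v_2, \dots, v_\p$ with $\W v_i = \lambda_i v_i$ and $\one\tp v_i = 0$ for $i \geq 2$. First I would verify that each augmented vector $[v_i\tp, 0]\tp$ is an eigenvector of $\W'$: the top block gives $(1-\alpha)\W v_i = (1-\alpha)\lambda_i v_i$, while the bottom row contributes $\alpha\one\tp v_i = 0$. This immediately produces the $\p-1$ eigenvalues $(1-\alpha)\lambda_2, \dots, (1-\alpha)\lambda_\p$ of $\W'$.

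The remaining two eigenvalues live in the $\W'$-invariant subspace spanned by $[\one\tp,0]\tp$ and $[\mathbf{0}\tp,1]\tp$. Restricting $\W'$ there, a vector $[a\one\tp, b]\tp$ maps to $[((1-\alpha)a + \alpha b)\one\tp,\ \p\alpha a + (1-\p\alpha)b]\tp$, so the action is governed by the $2\times 2$ matrix with rows $(1-\alpha, \alpha)$ and $(\p\alpha, 1-\p\alpha)$. Its trace is $2-(\p+1)\alpha$ and its determinant is $(1-\alpha)(1-\p\alpha) - \p\alpha^2 = 1-(\p+1)\alpha$; since $\mu = 1$ is clearly a root (it is the eigenvalue of $\one_{\p+1}$, which $\W'$ fixes because its row sums equal $1$), the other root equals the determinant $1-(\p+1)\alpha$. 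Collecting everything, the full spectrum of $\W'$ is $\{1,\ 1-(\p+1)\alpha,\ (1-\alpha)\lambda_2, \dots, (1-\alpha)\lambda_\p\}$, which I must confirm is complete (it has $\p+1$ entries) and that $1$ is the largest entry, valid for $0 \leq \alpha \leq 1$. Taking the largest magnitude among the non-principal eigenvalues, and using $\max_{i \geq 2}|\lambda_i| = \spgap$ together with $1-\alpha \geq 0$, gives $\spgap' = \max\{(1-\alpha)\spgap,\ |1-(\p+1)\alpha|\}$, the first claim.

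For the optimal $\alpha$, I would analyze $\max\{(1-\alpha)\spgap,\ |1-(\p+1)\alpha|\}$ as the pointwise maximum of a decreasing line and a V-shaped function with vertex at $\alpha = 1/(\p+1)$. The minimizer is the crossing point on the increasing branch of the V, i.e. the solution of $(1-\alpha)\spgap = (\p+1)\alpha - 1$, which rearranges to $\alpha = (1+\spgap)/(\p+1+\spgap)$; substituting back yields $\spgap' = (1-\alpha)\spgap = \p\spgap/(\p+1+\spgap)$. I expect the main obstacle to be the bookkeeping of the piecewise minimization: I should verify that this crossing indeed lies to the right of the vertex $1/(\p+1)$ (equivalently $\p\spgap \geq 0$) so that the maximum is genuinely decreasing before it and increasing after, certifying a global minimum rather than a mere balancing point.
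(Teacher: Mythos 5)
Your proposal is correct and lands on exactly the same spectral decomposition as the paper: the eigenvalues of $\W'$ are $(1-\alpha)\lambda_2,\dots,(1-\alpha)\lambda_\p$ together with the two roots $1$ and $1-(\p+1)\alpha$ of a $2\times 2$ reduced problem, after which the piecewise minimization over $\alpha$ is identical. The one real difference is how that decomposition is justified: the paper invokes a block-matrix eigenvalue theorem of Fiedler (1974) as a black box, applied with $\matA=(1-\alpha)\W$, a $1\times 1$ block $B=1-\p\alpha$, and coupling $\rho=\alpha\sqrt{\p}$, which yields the symmetric $2\times2$ matrix with entries $1-\alpha$, $\alpha\sqrt{\p}$, $1-\p\alpha$; you instead verify the splitting from first principles by checking that $[v_i\tp,0]\tp$ is an eigenvector for each $v_i\perp\one$ and that $\mathrm{span}\{[\one\tp,0]\tp,[\mathbf{0}\tp,1]\tp\}$ is invariant. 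Your restricted $2\times2$ matrix is non-symmetric because your basis for that subspace is not orthonormal, but it has the same trace $2-(\p+1)\alpha$ and determinant $1-(\p+1)\alpha$, hence the same eigenvalues, so nothing is lost. Your route is self-contained and arguably more transparent for this special case where the second diagonal block is a scalar; the paper's citation buys generality (arbitrary second block) at the cost of an external dependency. You also supply slightly more care than the paper on the final step, by checking that the crossing point lies on the increasing branch of $|1-(\p+1)\alpha|$ so that the balanced value is a genuine global minimum.
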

The proof is given in the Appendix. 
\Cref{lem:reduce_zeta} implies that by setting $\alpha=\frac{1+\spgap}{\p+1+\spgap}$, the new algorithm $\alg(1,\W',1)$ gives a lower error bound at convergence as compared to D-PSGD $\alg(1,\W,0)$ as $\spgap' < \spgap$. 
Furthermore, since the updates and broadcast of the auxiliary variable can overlap with the local computation at workers (as explained in Section 3.3), we do not expect an increase in the training time. Thus, adding an auxiliary variable is a highly effective method to increase the consensus between loosely connected workers. 

\subsection{Hierarchical Averaging}
Based on the analysis of Cooperative SGD, we believe that a hierarchical averaging framework will aptly capture the benefits of all the communication-efficiency strategies discussed in this paper. In particular, consider that workers are divided into groups that cannot directly communicate with each other, as shown in \Cref{fig:hierarchical}(a). Local models in each group will be averaged via an auxiliary node. Inter-auxiliary node communication can occur concurrently with local updates at workers, as illustrated in \Cref{fig:hierarchical}(b). Our unified convergence analysis can be applied to this hierarchical averaging model and ongoing research includes finding the node structure that gives the best convergence.
\begin{figure}[!t]
    \centering
    \begin{subfigure}[t]{0.48\textwidth} 
        \centering
        \includegraphics[scale = 0.42]{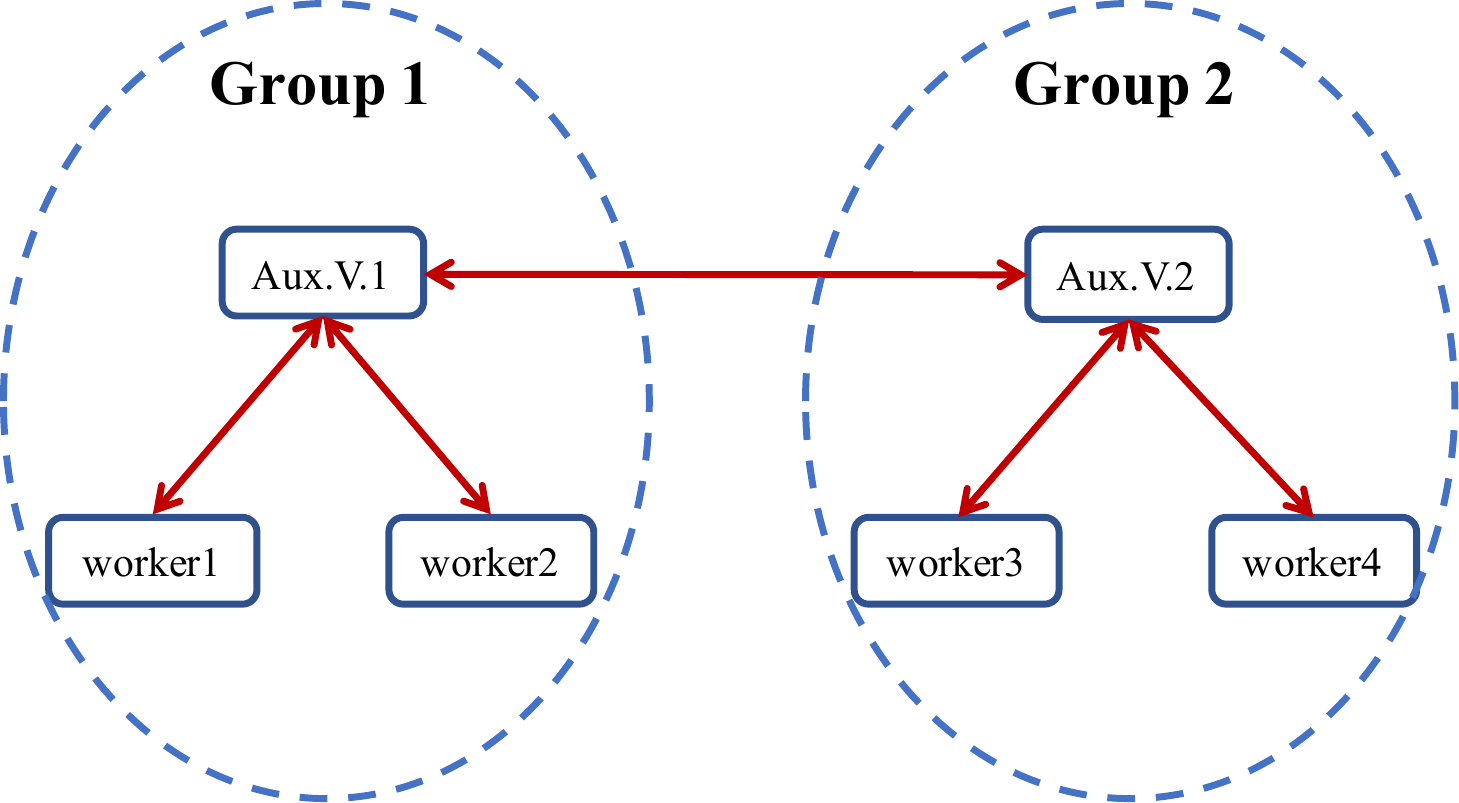}
        \caption{Hierarchical topology with two worker groups.}
    \end{subfigure}
    
    \begin{subfigure}[t]{0.48\textwidth}
        \centering
        \includegraphics[width=\columnwidth]{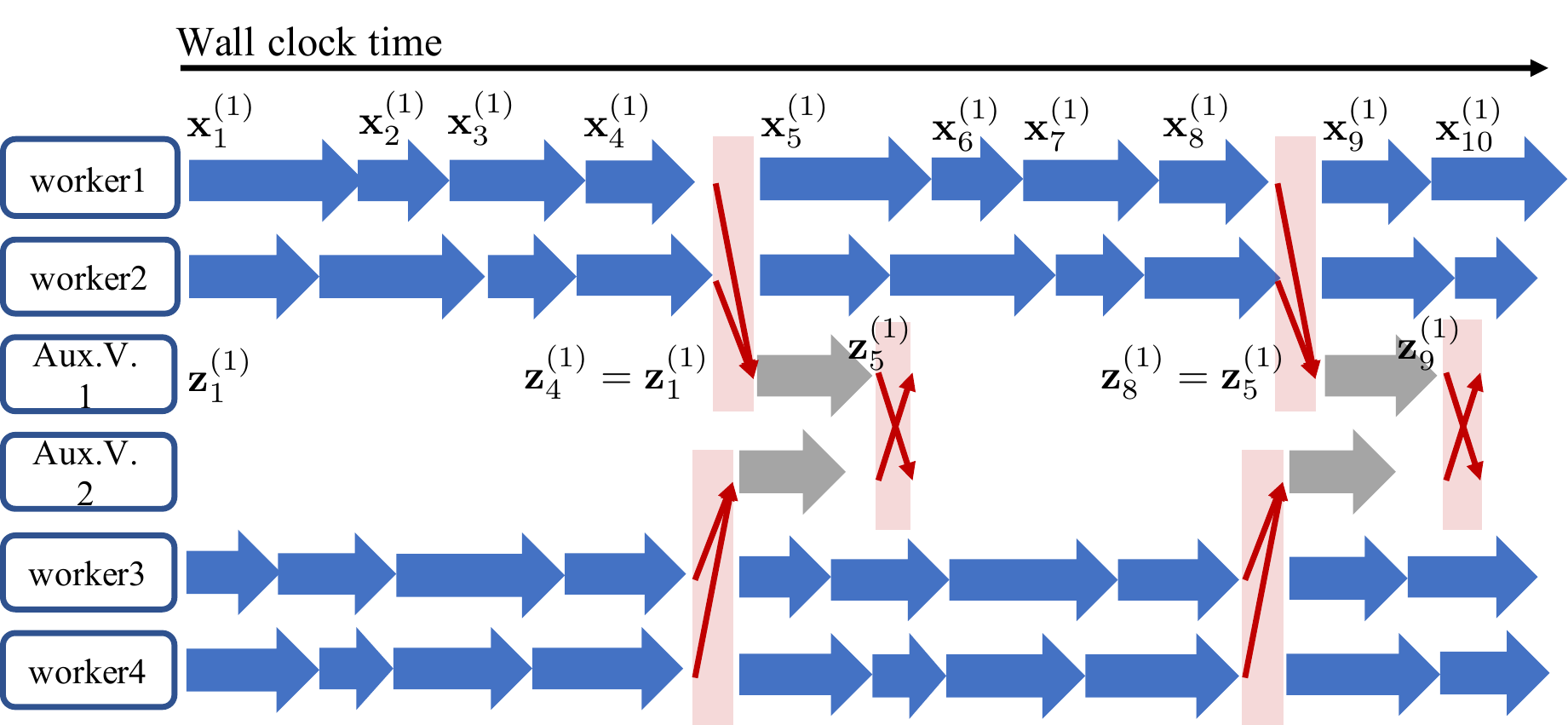}
        \caption{Potential execution timeline of a hierarchical model.}
    \end{subfigure}
    \caption{Illustration of a variant of cooperative SGD: hierarchical averaging. Blue, red, grey arrows represent gradient computation, communication among workers, and update of auxiliary variables respectively.}
    \label{fig:hierarchical}
\end{figure}

\section{Concluding Remarks}
We propose a communication-efficient SGD framework called \emph{Cooperative SGD} that combines the periodic, decentralized, and elastic model-averaging strategies to reduce inter-node communication via local updates at worker nodes. By analyzing cooperative SGD for general non-convex objectives, we provide strong convergence guarantees for existing communication-efficient SGD variants, and to the best of our knowledge, the first general analysis of elastic-averaging SGD. Furthermore, the cooperative SGD framework greatly enlarges the design space of communication-efficient SGD algorithms. We present some promising new ideas such as decentralized periodic averaging, generalized elastic-averaging and hierarchical averaging that can strike a good trade-off between convergence speed and communication efficiency. However, further exploration of the communication-efficient SGD design space and analyses of new variants is ripe for future investigation.

\section*{Acknowledgments}
The authors thank Anit Kumar Sahu for his suggestions and feedback. This work was partially supported by the CMU Dean's fellowship and an IBM Faculty Award. The experiments were conducted on the ORCA cluster provided by the Parallel Data Lab at CMU, and on Amazon AWS (supported by an AWS credit grant).

\bibliographystyle{icml2019} 
\bibliography{ref.bib}

\clearpage

\onecolumn
\appendix
\begin{center}
    \Large{\textbf{Supplemental Material}}
\end{center}


\section{Convergence of PASGD and D-PSGD}
By directly setting $\W = \J$ (i.e., $\spgap = 0$) and $\addw = 0$ in Theorem 1, one can obtain the convergence guarantee for PASGD. Comparing to previous results on non-convex objectives \citep{zhou2017convergence,yu2018parallel}, our result removes the uniformly bounded gradients assumption and provides a tighter upper bound. 
\begin{corollary}[\textbf{Convergence of PASGD}]
    For $\alg(\cp, \J, 0)$, under the same assumptions as Theorem 1, if the learning rate satisfies $\lr \lip + \lr^2 \lip^2 \cp(\cp-1) \leq 1$, then we have
    \begin{align}
        \Exs\brackets{\frac{1}{K}\sum_{k=1}^K\vecnorm{\tg(\avgx_k)}^2} 
        \leq& \frac{2\brackets{\F(\x_1) - \F_\text{inf}}}{\lr K} + \frac{\lr\lip\V}{\p}+ \lr^2\lip^2\V(\cp-1). \label{eqn:fedavg}
    \end{align}
    \label{corollary:fedavg}
\end{corollary}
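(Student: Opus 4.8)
The plan is to obtain this corollary as a direct specialization of \Cref{thm:gen} to the periodic-averaging instance $\alg(\cp,\J,0)$, and then to sharpen the learning-rate constraint by revisiting the discrepancy bound inside the proof of \Cref{thm:gen} for the special structure $\W=\J$. The first step is simply to record the parameter substitutions. Setting $\W=\J$ makes all eigenvalues except the top one vanish, so $\spgap=\max\{|\lambda_2(\W)|,|\lambda_{\p+\addw}(\W)|\}=0$; setting $\addw=0$ gives $\genlr=\frac{\p}{\p+0}\lr=\lr$ from \eqref{eqn:def_gen}, and the averaged model $\genx_k$ reduces to the arithmetic mean $\avgx_k$ of the $\p$ worker models (with common initialization, $\avgx_1=\x_1$).

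Next I would substitute these into the bound \eqref{eqn:err_bnd}. With $\spgap=0$ the spectral factor collapses, $\frac{1+\spgap^2}{1-\spgap^2}=1$, so the quantity $\frac{1+\spgap^2}{1-\spgap^2}\cp-1$ becomes exactly $\cp-1$; with $\addw=0$ the factor $\parenth{1+\frac{\addw}{\p}}^2$ equals $1$. Together with $\genlr=\lr$ this turns the right-hand side of \eqref{eqn:err_bnd} into
\begin{align}
\frac{2\brackets{\F(\x_1)-\F_\text{inf}}}{\lr K}+\frac{\lr\lip\V}{\p}+\lr^2\lip^2\V(\cp-1),\nonumber
\end{align}
which is precisely the claimed bound \eqref{eqn:fedavg}.

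The main obstacle is the learning-rate condition. Naively substituting $\spgap=0$ and $\addw=0$ into \eqref{eqn:lr_con} yields $\lr\lip+5\lr^2\lip^2\cp^2\le1$, which is strictly stronger than the stated requirement $\lr\lip+\lr^2\lip^2\cp(\cp-1)\le1$. To recover the weaker (hence better) condition I would not invoke \eqref{eqn:lr_con} as a black box, but instead re-enter the proof of \Cref{thm:gen} at the step where the aggregate model discrepancy $\sum_k \Exs\fronorm{\X_k(\I-\J)}^2$ (equivalently $\sum_k\summp\Exs\vecnorm{\x_k^{(i)}-\avgx_k}^2$) is controlled. The constant $5$ and the power $\cp^2$ arise there from loosening a geometric series $\sum_j\spgap^{2j}$ and from counting up to $\cp$ accumulating local steps, both of which are needed only to accommodate a general mixing matrix. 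When $\W=\J$ the averaging produces exact consensus, so at every synchronization instant all local models coincide; within each communication round the discrepancy therefore accumulates over at most $\cp-1$ local gradient steps, and the eigenvalue sum degenerates to a single term. Carrying this sharpened discrepancy estimate through the same descent-lemma argument replaces $5\cp^2$ by $\cp(\cp-1)$ and removes the slack constant, yielding $\lr\lip+\lr^2\lip^2\cp(\cp-1)\le1$.

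I expect the conceptual content to be light: once the exact-consensus simplification for $\W=\J$ is isolated, the remaining work is bookkeeping, namely reproducing the descent argument of \Cref{thm:gen} while tracking the tighter constant. The only care needed is to apply the off-by-one improvement ($\cp-1$ versus $\cp$ local steps) and the collapse of the geometric series consistently, so that the sharper constant appears in the learning-rate constraint while the error bound itself follows from pure substitution.
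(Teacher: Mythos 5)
Your proposal is correct and follows the paper's own route: the error bound is obtained by direct substitution of $\spgap=0$, $\addw=0$, $\genlr=\lr$ into \Cref{thm:gen}, exactly as you describe. The only unnecessary step is your plan to re-derive the discrepancy bound for $\W=\J$: the sharper learning-rate constant is already present as the unsimplified intermediate condition \eqref{eqn:lr_cond} in the appendix proof of \Cref{thm:gen}, which with $\M=0$ and $\spgap=0$ reads $\lr\lip+\lr^2\lip^2\cp^2\cdot\frac{\cp-1}{\cp}=\lr\lip+\lr^2\lip^2\cp(\cp-1)\leq 1$; the factor $5\cp^2$ appears only in the subsequent loosening \eqref{eqn:lr_cond_simp} used to state the theorem compactly.
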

The notable insight provided by \Cref{corollary:fedavg} is there exists a trade-off between the error-convergence and communication-efficiency. While a larger communication period leads to higher error at convergence, it directly reduces the communication delay by $\tau$ times and enables higher throughput. The primary advantage of PASGD is that one can easily change the communication period and find the best one that has the fastest convergence rate with respect to wall-clock time. The best value of $\cp$ should depend on the network bandwidth/latency and vary in different environments. 

\textbf{Empirical validation.} In \Cref{fig:tradeoff}, we show this trade-off in PASGD with different learning rate choices. One can see that even though PASGD with $\cp = 100$ finishes the training first, it has the highest loss after the same number of iterations. Comparing \Cref{fig:tradeoff} (a) and (b), observe that the small learning rate reduces the gap between different communication periods. This phenomenon has already been discussed in Theorem 1: small learning rate can alleviate the relative effect of the network error term. Besides, for completeness, we present the test accuracy of PASGD in \Cref{fig:tradeoff} (c). The interesting observation is that PASGD with large communication period has better generalization performance than fully synchronous SGD.

\begin{figure*}[!ht]
    \centering
    \begin{subfigure}[t]{0.32\textwidth} 
        \centering
        \includegraphics[scale = 0.33]{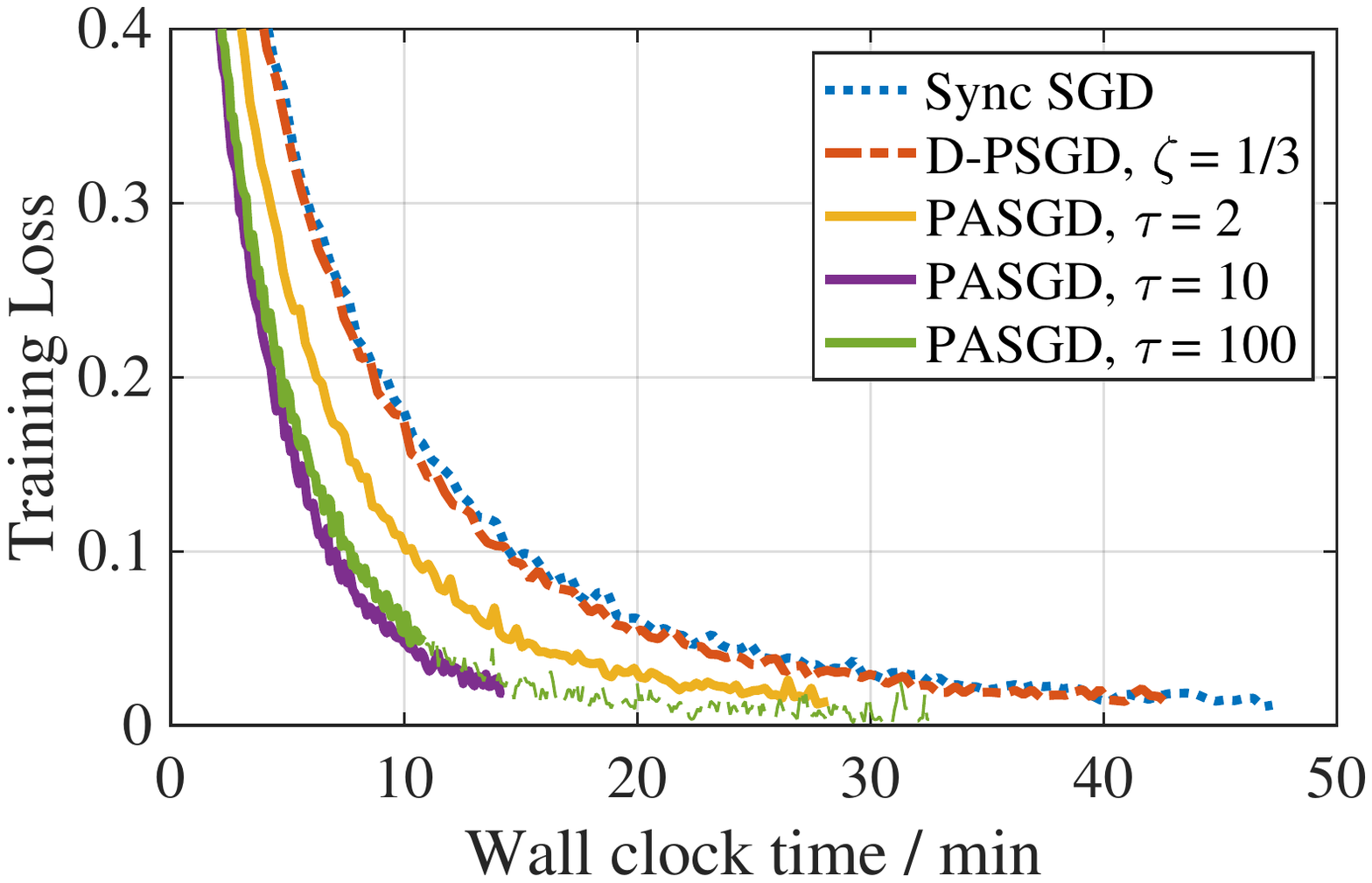}
        \caption{Learning rate equals to $0.02$.}
    \end{subfigure}%
    ~
    \begin{subfigure}[t]{0.32\textwidth}
        \centering
        \includegraphics[scale = 0.33]{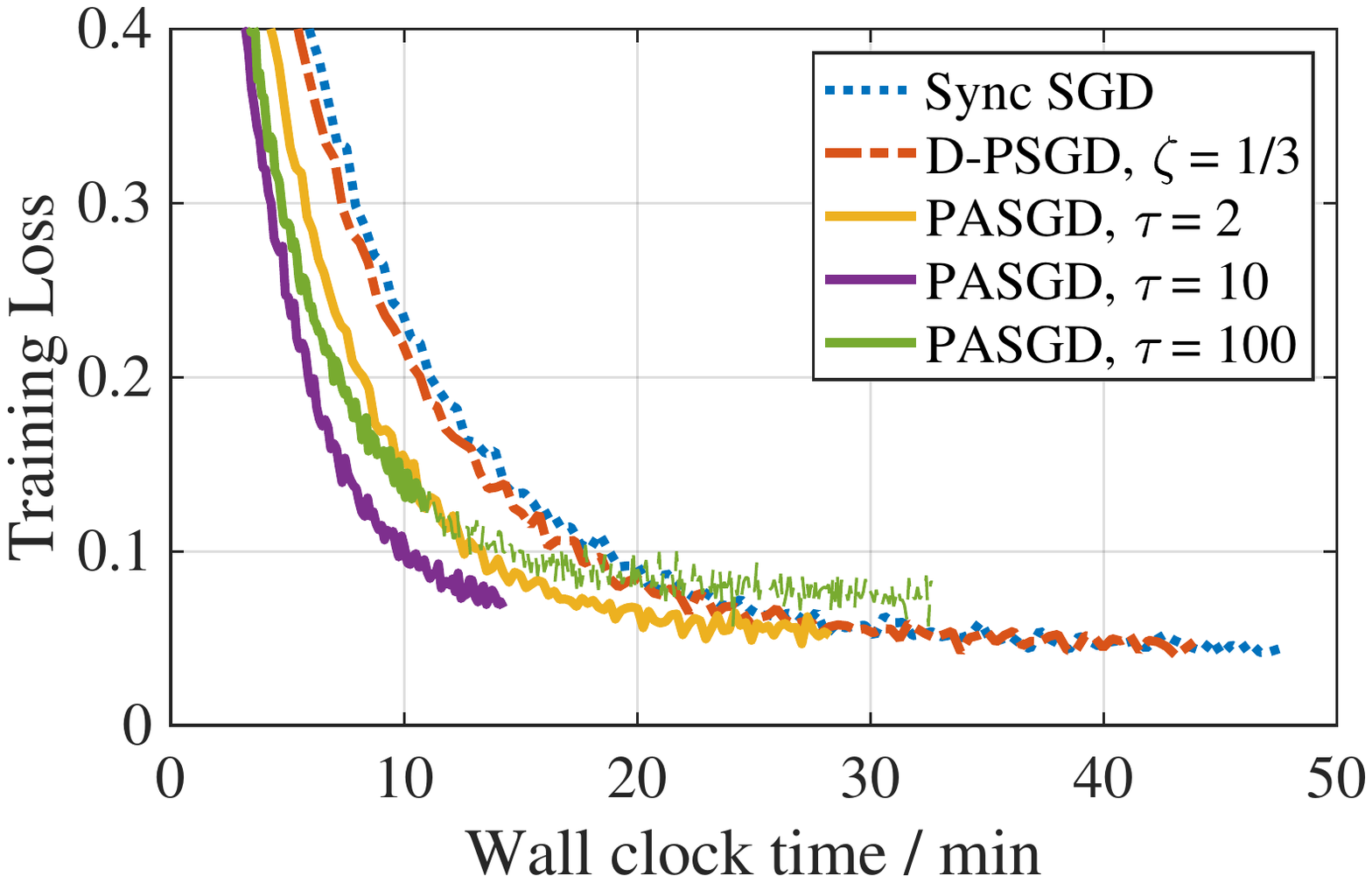}
        \caption{Learning rate equals to $0.2$.}
    \end{subfigure}
    ~
    \begin{subfigure}[t]{0.32\textwidth}
        \centering
        \includegraphics[scale = 0.33]{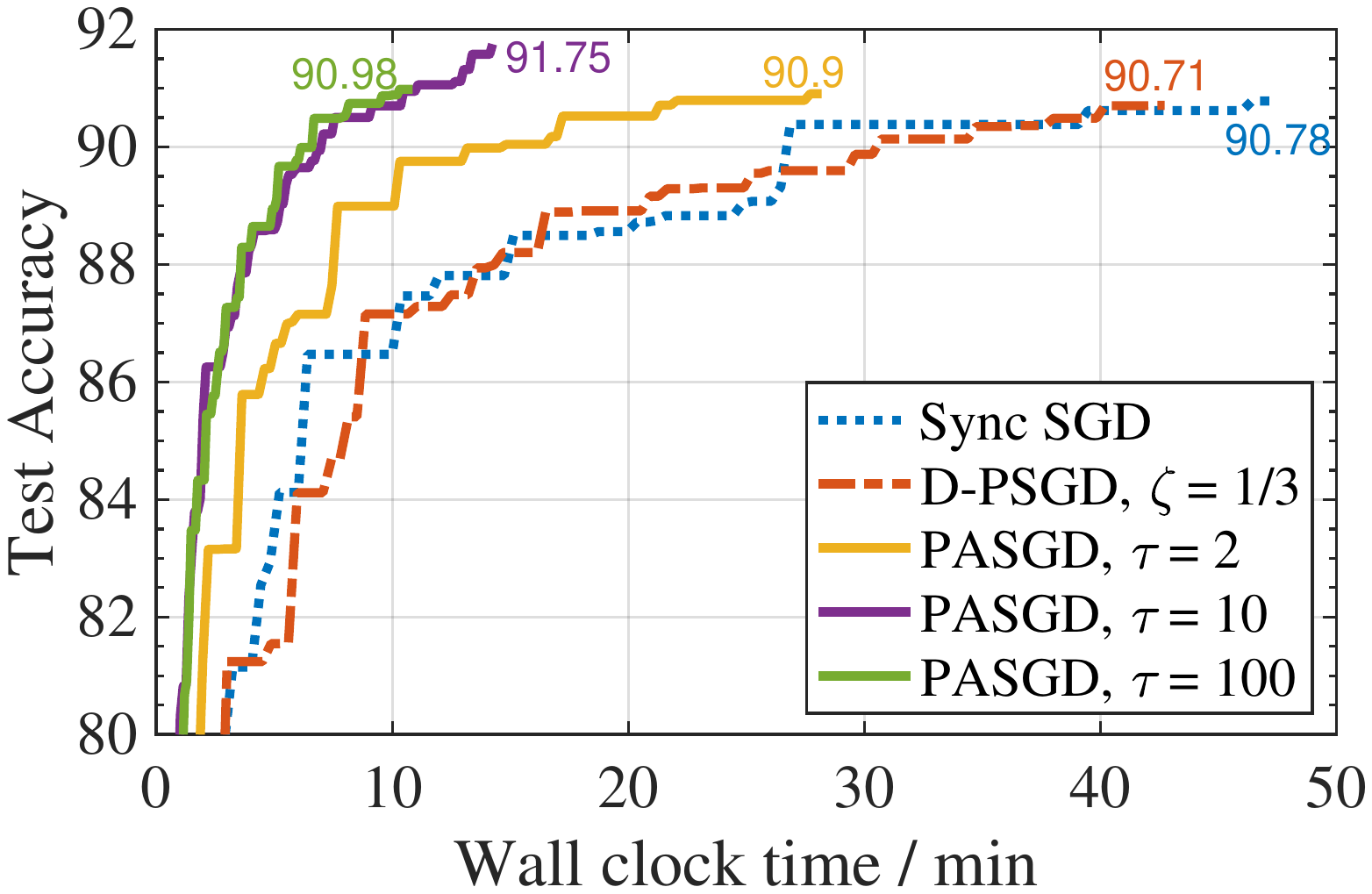}
        \caption{Test accuracy versus wall clock time. Learning rate equals to $0.02$.}
    \end{subfigure}
    \caption{Illustration of error-convergence and communication-efficiency trade-off in PASGD. Each line was trained for $120$ epochs. As for PASGD with $\cp = 100$, we trained it for $360$ epochs to show the error floor at convergence. The extra training curve is shown as the green dash line. After the same number of epochs, a larger communication period leads to higher training loss but costs much less wall clock time.} 
    \label{fig:tradeoff}
\end{figure*}

Similarly, setting $\cp = 1$ and $\addw = 0$ in Theorem 1, we get the convergence guarantee for D-PSGD, which is consistent to \cite{lian2017can}.
\begin{corollary}[\textbf{Convergence of D-PSGD}]
    For $\alg(1, \W, 0)$, under the same assumptions as Theorem 1, if the learning rate satisfies
    \begin{align}
        \lr\lip + \lr^2\lip^2\frac{2\spgap}{1-\spgap} \parenth{\frac{\spgap}{1+\spgap}+\frac{1}{1-\spgap}} \leq 1, 
    \end{align}
    where $\spgap = \max\{|\lambda_2(\W)|,|\lambda_\p(\W)|\}$, then we have
    \begin{align}
        \Exs\brackets{\frac{1}{K}\sum_{k=1}^K\vecnorm{\tg(\avgx_k)}^2} 
        \leq& \frac{2\brackets{\F(\x_1) - \F_\text{inf}}}{\lr K} + \frac{\lr\lip\V}{\p}+ \lr^2\lip^2\V\frac{2\spgap^2}{1-\spgap^2}.  \label{eqn:dsgd}
    \end{align}
    \label{corollary:dsgd}
\end{corollary}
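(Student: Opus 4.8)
The plan is to specialize the deviation-bounding machinery behind \Cref{thm:gen} to the fixed-mixing-matrix regime $\cp=1$, $\addw=0$, exploiting the exact geometric contraction of powers of $\W$ on the subspace orthogonal to $\one$. The stated error bound \eqref{eqn:dsgd} itself follows by direct substitution into \eqref{eqn:err_bnd}: with $\addw=0$ we have $\genlr=\lr$, the factor $(1+\addw/\p)^2$ equals $1$, and the network-error multiplier collapses via $\frac{1+\spgap^2}{1-\spgap^2}\cp-1 = \frac{2\spgap^2}{1-\spgap^2}$ when $\cp=1$. What does \emph{not} follow from substitution is the learning-rate condition, which is strictly sharper than the one \eqref{eqn:lr_con} would give (that one degenerates to $\lr\lip+5\lr^2\lip^2/(1-\spgap)^2\leq1$, whereas the corollary's condition vanishes in its second term as $\spgap\to0$). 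Recovering the exact constraint is the substantive part and requires re-running the discrepancy estimate for constant $\W$.

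First I would apply the $\lip$-smoothness descent inequality to the averaged-model recursion \eqref{eqn:gen_upd}, take expectations, and use the unbiasedness and bounded-variance assumptions (with $\M=0$). This reproduces the two fully-synchronous terms $2[\F(\avgx_1)-\F_\text{inf}]/(\lr K)$ and $\lr\lip\V/\p$, and leaves a residual network-error term proportional to $\frac{1}{K}\sum_k\Exs\frac{1}{\p}\summp\vecnorm{\avgx_k-\x_k^{(i)}}^2$, i.e. the averaged squared discrepancy between local models and their mean. Bounding this residual is the heart of the corollary.

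Next I would control the discrepancy by unrolling the matrix update \eqref{eqn:mat_update} with $\W_k\equiv\W$. Since $\W$ is symmetric with $\W\one=\one$, it commutes with $\I-\J$, so the deviation matrix satisfies $\X_k(\I-\J) = -\lr\sum_{l<k}\G_l(\W^{k-l}-\J)$ with $\opnorm{\W^{j}-\J}=\spgap^{j}$. Taking Frobenius norms and applying Cauchy--Schwarz over the geometric weights $\spgap^{k-l}$ turns the accumulated discrepancy into closed-form sums $\sum_j\spgap^{j}$ and $\sum_j\spgap^{2j}$. Splitting each $\G_l$ into its conditional mean (true gradients) and a zero-mean stochastic part, and using independence of mini-batches across iterations to kill the noise cross-terms, separates the residual into a variance contribution---which supplies the coefficient $2\spgap^2/(1-\spgap^2)$ already visible in \eqref{eqn:dsgd}---and a true-gradient contribution proportional to $\frac{1}{K}\sum_l\vecnorm{\tg(\avgx_l)}^2$ (after a further smoothness step relating $\sum_i\vecnorm{\tg(\x_l^{(i)})}^2$ to $\vecnorm{\tg(\avgx_l)}^2$ plus lower-order discrepancy).

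The main obstacle is the final absorption step: the true-gradient part of the residual is itself a multiple of $\frac{1}{K}\sum_l\vecnorm{\tg(\avgx_l)}^2$, so the inequality is self-referential and can only be closed when the total coefficient of that sum on the right is dominated by the $1$ coming from the descent step. Tracking the two geometric sums without the crude $\cp$-bounding used for the general theorem yields precisely $\lr\lip+\lr^2\lip^2\frac{2\spgap}{1-\spgap}\parenth{\frac{\spgap}{1+\spgap}+\frac{1}{1-\spgap}}\leq1$, whose bracket expands as $\frac{2\spgap^2}{1-\spgap^2}+\frac{2\spgap}{(1-\spgap)^2}$. The care needed here is exactly in keeping these constants exact---in particular verifying that the second term vanishes as $\spgap\to0$ so the constraint reduces to the fully synchronous $\lr\lip\leq1$---rather than settling for the looser bound inherited from \eqref{eqn:lr_con}.
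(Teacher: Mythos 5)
Your proposal is correct and, at bottom, it is the paper's own argument: the paper proves this corollary by substituting $\cp=1$, $\addw=0$, $\M=0$ into Theorem 1 --- but crucially into the \emph{unsimplified} intermediate learning-rate condition \eqref{eqn:lr_cond} recorded in the appendix proof (whose $\cp=1$ specialization is exactly $\lr\lip + \tfrac{\lr^2\lip^2}{1-\spgap}\bigl(\tfrac{2\spgap^2}{1+\spgap}+\tfrac{2\spgap}{1-\spgap}\bigr)\leq 1$, i.e.\ the stated condition), not into the coarsened condition \eqref{eqn:lr_con} of the theorem statement. So your diagnosis that \eqref{eqn:lr_con} cannot yield the corollary's constraint is right, but no re-run of the discrepancy estimate is actually needed --- the exact constants are already sitting in the proof of Theorem 1 before the $5/(1-\spgap)^2$ simplification, and the sketch you give (unrolling $\X_k(\I-\J)$, $\opnorm{\W^j-\J}=\spgap^j$, noise/signal split, geometric sums) reproduces precisely those steps. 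One caution on your closing step: the paper absorbs the true-gradient part of the residual against the separate negative term $-\tfrac{1}{\p}\summp\vecnorm{\tg(\x_k^{(i)})}^2$ left over from the inner-product expansion in \eqref{eqn:tight_thm1}, not against $\vecnorm{\tg(\genx_k)}^2$; your proposed extra smoothness conversion of $\summp\vecnorm{\tg(\x_l^{(i)})}^2$ into $\vecnorm{\tg(\avgx_l)}^2$ plus discrepancy is unnecessary and would degrade the constants (a factor of $2$ and a residual self-reference), so it should be dropped to recover \eqref{eqn:dsgd} exactly as stated.
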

We implemented a ring-connected D-PSGD with $4$ workers. As shown in \Cref{fig:tradeoff}, the number of workers is so few that the communication reduction effect is quite limited (saving about $10\%$ training time while PASGD ($\cp=10$) reduces about $67\%$ time).

\section{Proof Preliminaries}
For the ease of writing, we first define some notations. Let $\Xi_k$ denote the set $\{\minib_k^{(1)},\dots,\minib_k^{(\p)}\}$ of mini-batches at $\p$ workers in iteration $k$. We use notation $\CExs$ to denote the conditional expectation $\Exs_{\Xi_K | \X_k}$. Besides, define averaged stochastic gradient and averaged full batch gradient as follows:
\begin{align}
    \gsg_k = \frac{1}{\p}\summp \sg(\x_k^{(i)}), \ \gtg_k = \frac{1}{\p}\summp \tg(\x_k^{(i)}). \label{eqn:def_g_h}
\end{align}
Similar to $\X_k$ and $\G_k$, we stack all full batch gradients in a $d \times (\p+\addw)$ dimension matrix:
\begin{align}
    \tg(\X_k) = [\tg(\x_k^{(1)}), \dots, \tg(\x_k^{(\p)}), \mathbf{0}, \dots, \mathbf{0}].
\end{align}
Accordingly, the Frobenius norm of full batch gradients is $\fronorm{\tg(\X_k)}^2 = \summp \vecnorm{\tg(\x_k^{(i)})}^2$. In order to facilitate reading, the definitions of matrix Frobenius norm and operator norm are also provided here.
\begin{defn}[\cite{horn1990matrix}]
The Frobenius norm defined for $\mathbf{A} \in M_n$ by
\begin{align}
    \fronorm{\mathbf{A}}^2 = |\trace(\mathbf{A}\mathbf{A}\tp)| = \sum_{i,j=1}^n |a_{ij}|^2.
\end{align}
\end{defn}
\begin{defn}[\cite{horn1990matrix}]
The operator norm defined for $\mathbf{A} \in M_n$ by
\begin{align}
    \opnorm{\mathbf{A}} = \max_{\vecnorm{\mathbf{x}}=1}\vecnorm{\mathbf{A}\mathbf{x}}=\sqrt{\lambda_{\text{max}}(\matA\tp\matA)}.
\end{align}
\end{defn}
All notations used in the proof are listed below.
\begin{table}[!h]
    \begin{center}
    \begin{tabular}{l|c}
    \toprule
        Number of workers & $\p$ \\
        Number of auxiliary variables & $\addw$ \\
        Total iterations & $K$ \\
        Communication period & $\cp$ \\
        Mixing matrix & $\W$ \\
        Learning rate & $\lr$ \\
        Lipschitz constant & $\lip$ \\
        Variance bounds for stochastic gradients & $\M,\V$ \\
    \bottomrule
    \end{tabular}
    \end{center}
    \caption{List of notations.}
    \label{tab:nots}
\end{table}

\section{A Supporting Lemma for Theorem 1}
Before providing the proof of Theorem 1, we prefer to first present an important lemma that describes the basic intuition for the convergence of cooperative SGD: the discrepancies of local models have a negative impact on the convergence. The proof of Theorem 1 will be built upon this lemma.
\setcounter{lem}{2}
\begin{lem}[\textbf{Error decomposition}]
    For algorithm $\alg(\cp,\W,\addw)$, under Assumption 1--5, if the learning rate satisfies $\genlr \lip(1+\M/\p) \leq 1$ and all local model parameters are initialized at the same point $\x_1$, then the average-squared gradient after $K$ iterations is bounded as follows
    \begin{align}
        \Exs\brackets{\frac{1}{K}\sum_{k=1}^K\vecnorm{\tg(\genx_k)}^2} 
        &\leq \underbrace{\frac{2\brackets{\F(\x_1) - \F_\text{inf}}}{\genlr K} + \frac{\genlr\lip\V}{\p}}_{\text{fully sync SGD}} + \underbrace{\frac{\lip^2}{K} \sum_{k=1}^K\frac{\Exs\fronorm{\X_k(\I-\J)}^2}{\p}}_{\text{network error}} \label{eqn:basic}
    \end{align}
    where $\genx_k,\genlr$ are defined in (13) and both $\I$ and $\J$ are $(\p+\addw)\times(\p+\addw)$ matrices.
    \label{lem:basic}
\end{lem}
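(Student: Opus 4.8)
The plan is to run the standard non-convex descent argument on the averaged iterate $\genx_k$, which by \eqref{eqn:gen_upd} performs the perturbed SGD step $\genx_{k+1} = \genx_k - \genlr\gsg_k$ with $\gsg_k = \frac{1}{\p}\summp\sg(\xki)$. First I would invoke $\lip$-smoothness (Assumption 1) to get the quadratic upper bound
\begin{align}
\F(\genx_{k+1}) \leq \F(\genx_k) - \genlr\langle \tg(\genx_k),\gsg_k\rangle + \frac{\lip\genlr^2}{2}\vecnorm{\gsg_k}^2. \nonumber
\end{align}
Taking the conditional expectation $\CExs$ over the mini-batches $\Xi_k$ and using unbiasedness (Assumption 3) replaces $\gsg_k$ by $\gtg_k = \frac{1}{\p}\summp\tg(\xki)$ in the linear term. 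For the quadratic term I would split $\CExs\vecnorm{\gsg_k}^2 = \vecnorm{\gtg_k}^2 + \CExs\vecnorm{\gsg_k-\gtg_k}^2$; because the workers draw their mini-batches independently, the cross terms vanish and the variance collapses to a per-worker sum, so Assumption 4 gives $\CExs\vecnorm{\gsg_k-\gtg_k}^2 \leq \frac{\M}{\p^2}\fronorm{\tg(\X_k)}^2 + \frac{\V}{\p}$, which is where the $1/\p$ variance-reduction factor enters.

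The next step is the algebraic identity $-\langle a,b\rangle = -\frac12\vecnorm{a}^2 - \frac12\vecnorm{b}^2 + \frac12\vecnorm{a-b}^2$ applied to $-\genlr\langle\tg(\genx_k),\gtg_k\rangle$, which produces the target term $-\frac\genlr2\vecnorm{\tg(\genx_k)}^2$, a negative term $-\frac\genlr2\vecnorm{\gtg_k}^2$, and a discrepancy term $\frac\genlr2\vecnorm{\tg(\genx_k)-\gtg_k}^2$. I would bound the discrepancy by Jensen's inequality and $\lip$-smoothness:
\begin{align}
\vecnorm{\tg(\genx_k)-\gtg_k}^2 \leq \frac1\p\summp\vecnorm{\tg(\genx_k)-\tg(\xki)}^2 \leq \frac{\lip^2}{\p}\summp\vecnorm{\genx_k-\xki}^2 \leq \frac{\lip^2}{\p}\fronorm{\X_k(\I-\J)}^2, \nonumber
\end{align}
where the last step uses $\X_k(\I-\J) = \X_k - \genx_k\one\tp$, so that $\fronorm{\X_k(\I-\J)}^2$ collects the deviations of all $\p+\addw$ columns from $\genx_k$, of which the worker deviations form a subset.

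The main obstacle is disposing of the positive curvature terms so that only the two target quantities survive. Collecting the coefficients of $\vecnorm{\gtg_k}^2$, and for general $\M$ splitting the variance contribution via the exact identity $\fronorm{\tg(\X_k)}^2 = \p\vecnorm{\gtg_k}^2 + \summp\vecnorm{\tg(\xki)-\gtg_k}^2$, the combined $\vecnorm{\gtg_k}^2$ coefficient becomes $-\frac\genlr2\parenth{1-\genlr\lip(1+\M/\p)}$, which is non-positive \emph{exactly} under the stated learning-rate condition $\genlr\lip(1+\M/\p)\leq 1$; the leftover $\summp\vecnorm{\tg(\xki)-\gtg_k}^2$ is itself a discrepancy quantity bounded by $\lip^2\fronorm{\X_k(\I-\J)}^2$ and folded into the network-error term. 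Getting this combination to land on precisely the coefficient $\lip^2/\p$ (rather than a looser constant) is the delicate part of the calculation.

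Finally I would sum the resulting per-iteration inequality over $k=1,\dots,K$, telescope $\F(\genx_{k+1})-\F(\genx_k)$, apply $\F(\genx_{K+1})\geq\F_\text{inf}$ together with the common initialization $\genx_1=\x_1$, and divide through by $\genlr K/2$. The $\genlr$ factor cancels in the discrepancy contribution, which leaves exactly the coefficient $\lip^2/\p$ on $\frac1K\sum_{k=1}^K\Exs\fronorm{\X_k(\I-\J)}^2$ and the terms $\frac{2[\F(\x_1)-\F_\text{inf}]}{\genlr K} + \frac{\genlr\lip\V}{\p}$, matching \eqref{eqn:basic}.
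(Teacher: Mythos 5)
Your architecture is the paper's: smoothness applied to the averaged iterate, unbiasedness plus independence across workers to obtain the $\V/\p$ variance term, a polarization identity to extract $-\tfrac{\genlr}{2}\vecnorm{\tg(\genx_k)}^2$, smoothness again to convert the gradient discrepancy into $\fronorm{\X_k(\I-\J)}^2$ (with the auxiliary-variable columns added as nonnegative terms), and a telescoping sum. For $\M=0$ your argument closes exactly. The one genuine gap is in how you dispose of the $\M$-dependent curvature when $\M>0$. You apply $-2\langle a,b\rangle=-\vecnorm{a}^2-\vecnorm{b}^2+\vecnorm{a-b}^2$ to the pair $(\tg(\genx_k),\gtg_k)$, which leaves you only the reservoir $-\tfrac{\genlr}{2}\vecnorm{\gtg_k}^2$. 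After your (correct) splitting $\fronorm{\tg(\X_k)}^2=\p\vecnorm{\gtg_k}^2+\summp\vecnorm{\tg(\xki)-\gtg_k}^2$, that reservoir absorbs only the $\p\vecnorm{\gtg_k}^2$ part of the second-order term; the residual $\tfrac{\genlr^2\lip\M}{2\p^2}\summp\vecnorm{\tg(\xki)-\gtg_k}^2$ that you propose to ``fold into the network-error term'' is bounded by $\tfrac{\genlr^2\lip^3\M}{2\p^2}\summp\vecnorm{\genx_k-\xki}^2$, so after dividing by $\genlr K/2$ the network-error coefficient becomes $\tfrac{\lip^2}{\p}\bigl(1+\tfrac{\genlr\lip\M}{\p}\bigr)$ rather than the stated $\tfrac{\lip^2}{\p}$. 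You correctly sense that landing on the exact constant is the delicate point, but the mechanism you describe provably yields a looser one (by up to a factor of $2$ under the stated learning-rate condition).

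The fix, and what the paper does, is to apply the polarization identity termwise: write $\CExs\langle\tg(\genx_k),\gsg_k\rangle=\tfrac{1}{\p}\summp\langle\tg(\genx_k),\tg(\xki)\rangle$ and expand each summand, which yields the strictly stronger reservoir $-\tfrac{\genlr}{2\p}\summp\vecnorm{\tg(\xki)}^2=-\tfrac{\genlr}{2\p}\fronorm{\tg(\X_k)}^2$ (it dominates $-\tfrac{\genlr}{2}\vecnorm{\gtg_k}^2$ by Jensen) together with the same discrepancy term $\tfrac{\genlr}{2\p}\summp\vecnorm{\tg(\genx_k)-\tg(\xki)}^2$. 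The entire second-order gradient contribution $\tfrac{\genlr^2\lip}{2}\parenth{\tfrac{\M}{\p}+1}\tfrac{\fronorm{\tg(\X_k)}^2}{\p}$ is then absorbed with no residue, since the combined coefficient $-\tfrac{\genlr}{2\p}\brackets{1-\genlr\lip(1+\M/\p)}$ is nonpositive exactly under the stated learning-rate condition, and the network-error coefficient stays at $\lip^2/\p$. Everything else in your write-up --- the independence argument for the variance bound, the Jensen/smoothness chain, and the telescoping --- matches the paper.
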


\subsection{Proof of \Cref{lem:basic}}
\subsubsection{Lemmas}
\begin{lem}\label{lem:avg_var}
    Under Assumption 3 and 4, we have the following variance bound for the averaged stochastic gradient:
    \begin{align}
        \Exs_{\Xi_K | \X_k}\brackets{\vecnorm{\gsg_k - \gtg_k}^2} &\leq \frac{\M}{\p^2}\fronorm{\tg(\X_k)}^2 + \frac{\V}{\p}.
    \end{align}
\end{lem}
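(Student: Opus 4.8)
The plan is to expand the averaged noise term into a sum of per-worker deviations and then exploit the independence of the mini-batches sampled at different workers, which is precisely what yields the $1/\p$ variance reduction. Concretely, I would define the per-worker deviation $\mathbf{e}_i = \sg(\x_k^{(i)}) - \tg(\x_k^{(i)})$, so that $\gsg_k - \gtg_k = \frac{1}{\p}\summp \mathbf{e}_i$. Conditioned on $\X_k$, Assumption 3 gives $\CExs[\mathbf{e}_i] = \mathbf{0}$ for every $i$, and since $\minib_k^{(1)},\dots,\minib_k^{(\p)}$ are drawn independently across the $\p$ workers, the vectors $\mathbf{e}_1,\dots,\mathbf{e}_\p$ are conditionally independent with zero conditional mean.

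Next I would expand the squared norm of the sum and split into diagonal and off-diagonal contributions,
\begin{align}
    \CExs\vecnorm{\gsg_k - \gtg_k}^2 = \frac{1}{\p^2}\brackets{\summp \CExs\vecnorm{\mathbf{e}_i}^2 + \sum_{i\neq j}\CExs\langle \mathbf{e}_i, \mathbf{e}_j\rangle}.
\end{align}
The cross terms vanish: by conditional independence and the zero conditional mean, $\CExs\langle \mathbf{e}_i, \mathbf{e}_j\rangle = \langle \CExs\mathbf{e}_i, \CExs\mathbf{e}_j\rangle = 0$ for $i\neq j$, leaving only the $\p$ diagonal variance terms.

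Finally I would bound each diagonal term directly with Assumption 4, which gives $\CExs\vecnorm{\mathbf{e}_i}^2 \leq \M\vecnorm{\tg(\x_k^{(i)})}^2 + \V$, and sum over $i$ to obtain
\begin{align}
    \CExs\vecnorm{\gsg_k - \gtg_k}^2 \leq \frac{1}{\p^2}\summp\parenth{\M\vecnorm{\tg(\x_k^{(i)})}^2 + \V} = \frac{\M}{\p^2}\summp\vecnorm{\tg(\x_k^{(i)})}^2 + \frac{\V}{\p}.
\end{align}
Recognizing that $\summp\vecnorm{\tg(\x_k^{(i)})}^2 = \fronorm{\tg(\X_k)}^2$, since the zero columns of $\tg(\X_k)$ corresponding to the auxiliary variables contribute nothing to the Frobenius norm, yields the claimed bound. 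This argument is essentially routine once the decomposition is set up; the only genuinely load-bearing step is the conditional independence of the sampling noise across workers, which is what kills the cross terms and delivers the $1/\p$ factor on the variance floor $\V$, and I would flag that this independence is an implicit modeling assumption on the distributed sampling scheme.
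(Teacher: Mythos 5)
Your proposal is correct and follows essentially the same route as the paper's proof: expand the averaged deviation, kill the cross terms via conditional independence and unbiasedness of the per-worker mini-batch noise, and bound the diagonal terms with Assumption 4. Your remark that cross-worker independence of the sampling is an implicit modeling assumption is accurate — the paper likewise invokes it without further justification.
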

\begin{proof}
According to the definition of $\gsg_k,\gtg_k$ \Cref{eqn:def_g_h}, we have
\begin{align}
    &\Exs_{\Xi_K | \X_k}\brackets{\vecnorm{\gsg_k - \gtg_k}^2} \\
    =& \Exs_{\Xi_K | \X_k}\vecnorm{\frac{1}{\p}\summp\brackets{\sg(\x_k^{(i)}) - \tg(\x_k^{(i)})}}^2 \\
    =& \frac{1}{\p^2}\Exs_{\Xi_K | \X_k}\brackets{\summp\vecnorm{\sg(\x_k^{(i)}) - \tg(\x_k^{(i)})}^2+\sum_{j \neq l}^{\p}\inprod{\sg(\x_k^{(j)}) - \tg(\x_k^{(j)})}{\sg(\x_k^{(l)}) - \tg(\x_k^{(l)})}}  \label{eqn:pre_1} \\
    =& \frac{1}{\p^2}\summp\Exs_{\minib_k^{(i)}|\X_k}\vecnorm{\sg(\x_k^{(i)}) - \tg(\x_k^{(i)})}^2 + \frac{1}{\p^2}\sum_{j \neq l}^{\p}\inprod{\Exs_{\minib_k^{(j)}|\X_k}\brackets{\sg(\x_k^{(j)}) - \tg(\x_k^{(j)})}}{\Exs_{\minib_k^{(l)}|\X_k}\brackets{\sg(\x_k^{(l)}) - \tg(\x_k^{(l)})}} \label{eqn:pre_2}
\end{align}
where equation \eqref{eqn:pre_2} is due to $\{\minib_k^{(i)}\}$ are independent random variables. Now, directly applying Assumption 3 and 4 to \eqref{eqn:pre_2}, one can observe that all cross terms are zero. Then, we have
\begin{align}
    \Exs_{\Xi_K | \X_k}\vecnorm{\gsg_k - \gtg_k}^2
    \leq& \frac{1}{\p^2}\summp\brackets{\M\vecnorm{\tg(\x_k^{(i)})}^2+\V} \\
    =& \frac{\M}{\p}\frac{\fronorm{\tg(\X_k)}^2}{\p} + \frac{\V}{\p}.
\end{align}
\end{proof}

\begin{lem}\label{lem:p1}
    Under Assumption 3, the expected inner product between stochastic gradient and full batch gradient can be expanded as
    \begin{align}
        \CExs\brackets{\inprod{\tg(\genx_k)}{\gsg_k}}
        = \frac{1}{2}\vecnorm{\tg(\genx_k)}^2 + \frac{1}{2\p} \summp \vecnorm{\tg(\x_k^{(i)})}^2 - \frac{1}{2\p} \summp \vecnorm{\tg(\genx_k) - \tg(\x_k^{(i)})}^2
    \end{align}
    where $\CExs$ denotes the conditional expectation $\Exs_{\Xi_K | \X_k}$.
\end{lem}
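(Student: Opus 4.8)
The plan is to pass the conditional expectation through the inner product using unbiasedness, and then expand the result with the polarization identity. First I observe that the averaged iterate $\genx_k = \X_k \one_{\p+\addw}/(\p+\addw)$ is completely determined by $\X_k$, so $\tg(\genx_k)$ is a fixed (non-random) vector under the conditional expectation $\CExs = \Exs_{\Xi_K|\X_k}$. This lets me pull it outside and write
\begin{align}
    \CExs\brackets{\inprod{\tg(\genx_k)}{\gsg_k}} = \inprod{\tg(\genx_k)}{\CExs\brackets{\gsg_k}}. \nonumber
\end{align}

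Next, since $\gsg_k = \frac{1}{\p}\summp \sg(\x_k^{(i)})$ and each mini-batch gradient $\sg(\x_k^{(i)})$ is unbiased by Assumption 3, linearity of expectation gives $\CExs\brackets{\gsg_k} = \frac{1}{\p}\summp \tg(\x_k^{(i)}) = \gtg_k$. Substituting this in and distributing the average over the sum yields $\frac{1}{\p}\summp \inprod{\tg(\genx_k)}{\tg(\x_k^{(i)})}$, so the claim is reduced to a purely algebraic manipulation of each summand.

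The final step applies the polarization identity $\inprod{a}{b} = \frac{1}{2}\parenth{\vecnorm{a}^2 + \vecnorm{b}^2 - \vecnorm{a-b}^2}$ with $a = \tg(\genx_k)$ and $b = \tg(\x_k^{(i)})$ to each term. Summing over $i$ and dividing by $\p$, the $\frac{1}{2}\vecnorm{\tg(\genx_k)}^2$ contribution is independent of $i$ and survives the averaging unchanged, while the remaining two contributions produce exactly $\frac{1}{2\p}\summp \vecnorm{\tg(\x_k^{(i)})}^2$ and $-\frac{1}{2\p}\summp \vecnorm{\tg(\genx_k) - \tg(\x_k^{(i)})}^2$, matching the statement.

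I expect no genuine obstacle: the identity is a one-line consequence of unbiasedness combined with a standard inner-product decomposition. The only point deserving a moment of care is the measurability argument in the first step, namely that conditioning on $\X_k$ renders $\genx_k$, and therefore $\tg(\genx_k)$, deterministic, which is precisely what licenses moving $\CExs$ inside the inner product.
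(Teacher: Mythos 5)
Your proposal is correct and follows the same route as the paper's proof: pass the conditional expectation through the inner product via unbiasedness to obtain $\frac{1}{\p}\summp\inprod{\tg(\genx_k)}{\tg(\x_k^{(i)})}$, then apply the identity $2\mathbf{a}\tp\mathbf{b}=\vecnorm{\mathbf{a}}^2+\vecnorm{\mathbf{b}}^2-\vecnorm{\mathbf{a}-\mathbf{b}}^2$ termwise. The only difference is that you make explicit the measurability point (that $\tg(\genx_k)$ is deterministic given $\X_k$), which the paper leaves implicit.
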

\begin{proof}
\begin{align}
    \CExs\brackets{\inprod{\tg(\genx_k)}{\gsg_k}}
    &= \CExs\brackets{\inprod{\tg(\genx_k)}{\frac{1}{\p}\summp \sg(\x_k^{(i)})}} \\
    &= \frac{1}{\p} \summp \inprod{\tg(\genx_k)}{\tg(\x_k^{(i)})}  \\
    &= \frac{1}{2\p} \summp \brackets{\vecnorm{\tg(\genx_k)}^2 + \vecnorm{\tg(\x_k^{(i)})}^2 - \vecnorm{\tg(\genx_k) - \tg(\x_k^{(i)})}^2} \label{eqn:inprod_bnd_1} \\
    &= \frac{1}{2}\vecnorm{\tg(\genx_k)}^2 + \frac{1}{2\p} \summp \vecnorm{\tg(\x_k^{(i)})}^2 - \frac{1}{2\p} \summp \vecnorm{\tg(\genx_k) - \tg(\x_k^{(i)})}^2 
\end{align}
where equation \eqref{eqn:inprod_bnd_1} comes from $2\mathbf{a}\tp\mathbf{b} = \vecnorm{\mathbf{a}}^2 + \vecnorm{\mathbf{b}}^2 - \vecnorm{\mathbf{a}-\mathbf{b}}^2$.
\end{proof}

\begin{lem}\label{lem:p2}
    Under Assumption 3 and 4, the squared norm of stochastic gradient can be bounded as
    \begin{align}
        \CExs\brackets{\vecnorm{\gsg_k}^2}
        \leq \parenth{\frac{\M}{\p}+1}\frac{\fronorm{\tg(\X_k)}^2}{\p} + \frac{\V}{\p}. \nonumber
    \end{align}
\end{lem}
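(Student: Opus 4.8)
The plan is to use the standard bias--variance decomposition of the second moment, exploiting that the averaged stochastic gradient $\gsg_k$ is a conditionally unbiased estimate of the averaged full-batch gradient $\gtg_k$. First I would invoke Assumption 3 together with the independence of the mini-batches across workers to conclude $\CExs\brackets{\gsg_k} = \gtg_k$, where $\gsg_k,\gtg_k$ are as defined in \eqref{eqn:def_g_h}. Writing $\gsg_k = (\gsg_k - \gtg_k) + \gtg_k$ and expanding the squared norm, the cross term is $2\inprod{\CExs\brackets{\gsg_k - \gtg_k}}{\gtg_k} = 0$ because $\gtg_k$ is deterministic given $\X_k$ and $\CExs\brackets{\gsg_k - \gtg_k} = \mathbf{0}$. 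This yields the clean split
$$\CExs\brackets{\vecnorm{\gsg_k}^2} = \vecnorm{\gtg_k}^2 + \CExs\brackets{\vecnorm{\gsg_k - \gtg_k}^2}.$$

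Next I would bound the two terms separately. The variance term is exactly what \Cref{lem:avg_var} controls, so I substitute its bound $\frac{\M}{\p^2}\fronorm{\tg(\X_k)}^2 + \frac{\V}{\p}$ directly. For the deterministic term $\vecnorm{\gtg_k}^2$, I would apply Jensen's inequality (convexity of $\vecnorm{\cdot}^2$) to the uniform average $\gtg_k = \frac{1}{\p}\summp \tg(\x_k^{(i)})$, giving
$$\vecnorm{\gtg_k}^2 \leq \frac{1}{\p}\summp \vecnorm{\tg(\x_k^{(i)})}^2 = \frac{\fronorm{\tg(\X_k)}^2}{\p},$$
where the final equality is just the definition of the Frobenius norm of $\tg(\X_k)$.

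Combining the two bounds and grouping the two contributions proportional to $\fronorm{\tg(\X_k)}^2/\p$, using $\frac{\M}{\p^2}\fronorm{\tg(\X_k)}^2 = \frac{\M}{\p}\cdot\frac{\fronorm{\tg(\X_k)}^2}{\p}$, produces the factor $(\M/\p + 1)$ and establishes the claim. As for difficulty, this lemma is essentially routine once \Cref{lem:avg_var} is available, so I do not anticipate a genuine obstacle; the only point requiring minor care is ensuring the Jensen step uses weights $1/\p$ so that the squared norm of the average matches the $1/\p$ Frobenius normalization, and that the cross term is correctly argued to vanish under the conditional expectation.
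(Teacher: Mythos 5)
Your proposal is correct and follows essentially the same route as the paper's proof: the bias--variance split $\CExs\brackets{\vecnorm{\gsg_k}^2} = \CExs\brackets{\vecnorm{\gsg_k-\gtg_k}^2} + \vecnorm{\gtg_k}^2$, the bound from \Cref{lem:avg_var} on the variance term, and Jensen's inequality for $\vecnorm{\gtg_k}^2$. No gaps.
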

\begin{proof}
Since $\CExs [\gsg_k] = \gtg_k$, then we have
\begin{align}
    \CExs \brackets{\vecnorm{\gsg_k}^2}
    &=  \CExs \brackets{\vecnorm{\gsg_k - \CExs [\gsg_k]}^2} + \vecnorm{\CExs [\gsg_k]}^2 \\
    &= \CExs\brackets{\vecnorm{\gsg_k-\gtg_k}^2} + \vecnorm{\gtg_k}^2  \\
    &\leq \frac{\M}{\p}\frac{\fronorm{\tg(\X_k)}^2}{\p} + \frac{\V}{\p} + \vecnorm{\gtg_k}^2 \label{eqn:g_k_bnd_1}\\
    &\leq \frac{\M}{\p}\frac{\fronorm{\tg(\X_k)}^2}{\p} + \frac{\V}{\p} + \frac{1}{\p}\fronorm{\tg(\X_k)}^2 \label{eqn:g_k_bnd_2}\\
    &=  \parenth{\frac{\M}{\p}+1}\frac{\fronorm{\tg(\X_k)}^2}{\p} + \frac{\V}{\p}, 
\end{align}
where \eqref{eqn:g_k_bnd_1} follows \Cref{lem:avg_var} and \eqref{eqn:g_k_bnd_2} comes from the convexity of vector norm and Jensen's inequality:
\begin{align}
    \vecnorm{\gtg_k}^2
    = \vecnorm{\frac{1}{\p}\summp\tg(\x_k^{(i)})}^2
    \leq \frac{1}{\p}\summp\vecnorm{\tg(\x_k^{(i)})}^2
    = \frac{1}{\p}\fronorm{\tg(\X_k)}^2.
\end{align}
\end{proof}

\subsubsection{Proof of \Cref{lem:basic}}
According to Lipschitz continuous gradient assumption, we have
\begin{align}
    \CExs\brackets{\F(\genx_{k+1})} - \F(\genx_k)
    &\leq -\genlr\CExs\brackets{\inprod{\tg(\genx_k)}{\gsg_k}} + \frac{\genlr^2\lip}{2}\CExs \brackets{\vecnorm{\gsg_k}^2}.
\end{align}
Combining with \Cref{lem:p1,lem:p2}, we obtain
\begin{align}
    \CExs\brackets{\F(\genx_{k+1})} - \F(\genx_k)
    \leq& -\frac{\genlr}{2}\vecnorm{\tg(\genx_k)}^2 - \frac{\genlr}{2\p} \summp \vecnorm{\tg(\x_k^{(i)})}^2 + \frac{\genlr}{2\p} \summp \vecnorm{\tg(\genx_k) - \tg(\x_k^{(i)})}^2+ \nonumber \\
        & \frac{\genlr^2\lip}{2\p}\summp \vecnorm{\tg(\x_k^{(i)})}^2 \cdot \parenth{\frac{\M}{\p}+1} + \frac{\genlr^2\lip\V}{2\p}  \\
    \leq& -\frac{\genlr}{2}\vecnorm{\tg(\genx_k)}^2 - \frac{\genlr}{2}\brackets{1-\genlr \lip\parenth{\frac{\M}{\p}+1}}\cdot\frac{1}{\p} \summp \vecnorm{\tg(\x_k^{(i)})}^2+  \nonumber \\
        & \frac{\genlr^2\lip\V}{2\p} + \frac{\genlr\lip^2}{2\p} \summp \vecnorm{\genx_k - \x_k^{(i)}}^2.
\end{align}
After minor rearranging and according to the definition of Frobenius norm, it is easy to show
\begin{align}
    \vecnorm{\tg(\genx_k)}^2 
    \leq& \frac{2\brackets{\F(\genx_k) - \CExs\brackets{ \F(\genx_{k+1})}}}{\genlr}  + \frac{\genlr\lip\V}{\p} + \frac{\lip^2}{\p} \summp \vecnorm{\genx_k - \x_k^{(i)}}^2- \nonumber \\
        & \brackets{1-\genlr \lip\parenth{\frac{\M}{\p}+1}}\frac{1}{\p} \fronorm{\tg(\X_k)}^2.
\end{align}
Taking the total expectation and averaging over all iterates, we have
\begin{align}
    \Exs\brackets{\frac{1}{K}\sum_{k=1}^K\vecnorm{\tg(\genx_k)}^2} 
    \leq& \frac{2\brackets{\F(\genx_1) - \F_\text{inf}}}{\genlr K} + \frac{\genlr\lip\V}{\p} + \frac{\lip^2}{K\p} \sum_{k=1}^K\summp \Exs \vecnorm{\genx_k - \x_k^{(i)}}^2- \nonumber \\
        & \brackets{1-\genlr \lip\parenth{\frac{\M}{\p}+1}}\frac{1}{K}\sum_{k=1}^K\frac{\Exs\fronorm{\tg(\X_k)}^2}{\p}. \label{eqn:tight_thm1}
\end{align}
If the effective learning rate satisfies $\genlr\lip(\M/\p+1) \leq 1$, then
\begin{align}
    \Exs\brackets{\frac{1}{K}\sum_{k=1}^K\vecnorm{\tg(\genx_k)}^2} 
    \leq& \frac{2\brackets{\F(\genx_1) - \F_\text{inf}}}{\genlr K} + \frac{\genlr\lip\V}{\p} + \frac{\lip^2}{K\p} \sum_{k=1}^K\summp \Exs \vecnorm{\genx_k - \x_k^{(i)}}^2. \label{eqn:mid_1}
\end{align}
Recalling the definition $\genx_k = \X_k\one_{\p+\addw}/(\p+\addw)$ and adding a positive term to the RHS, one can get
\begin{align}
    \summp \vecnorm{\genx_k - \x_k^{(i)}}^2 
    \leq& \summp \vecnorm{\genx_k - \x_k^{(i)}}^2 + \sum_{j=1}^\addw \vecnorm{\genx_k - \z_k^{(j)}}^2 \\
    =& \fronorm{ \genx \one_{\p+\addw}\tp - \X_k}^2 \\
    =& \fronorm{ \X_k \frac{\one_{\p+\addw} \one_{\p+\addw}\tp}{\p+\addw} - \X_k}^2 = \fronorm{\X_k (\I-\J)}^2 \label{eqn:ne_bnd_1}
\end{align}
where $\I,\J$ are $(\p+\addw)\times(\p+\addw)$ matrices. Plugging the inequality \eqref{eqn:ne_bnd_1} into \eqref{eqn:mid_1}, we complete the proof.

\section{Proof of Theorem 1: Convergence of Cooperative SGD}
\subsection{Lemmas}
\begin{lem}\label{lem:op_fro_norm}
    Consider two real matrices $\matA \in \mathbb{R}^{d \times \p}$ and $\matB \in \mathbb{R}^{\p \times \p}$. If $\matB$ is symmetric, then we have
    \begin{align}
        \fronorm{\matA\matB} \leq \opnorm{\matB}\fronorm{\matA}.
    \end{align}
\end{lem}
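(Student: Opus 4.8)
The plan is to reduce the Frobenius norm of the product to a sum of Euclidean norms over the rows of $\matA$, and then control each of these by the operator norm of $\matB$. Write $\matA$ in terms of its $d$ rows, viewing the $i$-th row as a column vector $\mathbf{a}_i \in \mathbb{R}^\p$. The $i$-th row of the product $\matA\matB$ is then $\mathbf{a}_i\tp \matB$, whose squared Euclidean length equals $\vecnorm{\matB\tp\mathbf{a}_i}^2$. Since the Frobenius norm is exactly the sum of the squared Euclidean lengths of the rows, this gives
\begin{align}
    \fronorm{\matA\matB}^2 = \sum_{i=1}^d \vecnorm{\matB\tp \mathbf{a}_i}^2. \nonumber
\end{align}

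Next I would invoke the symmetry hypothesis $\matB\tp = \matB$, so that each summand becomes $\vecnorm{\matB\mathbf{a}_i}^2$, and apply the definition of the operator norm, $\vecnorm{\matB\mathbf{a}_i} \leq \opnorm{\matB}\vecnorm{\mathbf{a}_i}$, which holds for every vector $\mathbf{a}_i$. Summing over $i$ and pulling out the common factor yields
\begin{align}
    \fronorm{\matA\matB}^2 \leq \opnorm{\matB}^2 \sum_{i=1}^d \vecnorm{\mathbf{a}_i}^2 = \opnorm{\matB}^2 \fronorm{\matA}^2, \nonumber
\end{align}
and taking square roots completes the argument.

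An equivalent route would diagonalize $\matB = U\Lambda U\tp$ (possible by symmetry) and use the right-orthogonal invariance of the Frobenius norm to reduce to $\fronorm{(\matA U)\Lambda}$, which is bounded by $(\max_j |\lambda_j|)\fronorm{\matA U} = \opnorm{\matB}\fronorm{\matA}$. I expect no real obstacle here: the only genuine choice is to split $\matA\matB$ by rows rather than by columns (a column split would instead produce the wrong-sided bound $\opnorm{\matA}\fronorm{\matB}$). It is also worth noting that symmetry is used only for cosmetic simplicity, since $\opnorm{\matB\tp}=\opnorm{\matB}$ holds in general; the stated hypothesis merely lets us write $\matB$ directly in place of $\matB\tp$.
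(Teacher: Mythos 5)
Your proof is correct and follows essentially the same route as the paper's: split $\matA\matB$ by rows, use symmetry to rewrite $\vecnorm{\mathbf{a}_i\tp\matB}$ as $\vecnorm{\matB\mathbf{a}_i}$, and bound each term by $\opnorm{\matB}\vecnorm{\mathbf{a}_i}$ before summing. Your direct use of the operator-norm inequality is in fact slightly cleaner than the paper's ratio formulation (which must exclude zero rows), and your closing remark that symmetry is inessential is accurate.
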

\begin{proof}
Assume the rows of matirx $\matA$ are denoted by $\mathbf{a}_1\tp, \dots,\mathbf{a}_d\tp$ and $\mathcal{I} = \{i \in [1,d]: \vecnorm{\mathbf{a}_i}\neq 0\}$. Then, we have
\begin{align}
    \fronorm{\matA\matB}^2
    =& \sum_{i=1}^d \vecnorm{\mathbf{a}_i\tp \matB}^2 = \sum_{i \in \mathcal{I}}^d \vecnorm{\matB\mathbf{a}_i}^2 \\
    =& \sum_{i \in \mathcal{I}}^d \frac{\vecnorm{\matB\mathbf{a}_i}^2}{\vecnorm{\mathbf{a}_i}^2}\vecnorm{\mathbf{a}_i}^2 \\
    \leq& \sum_{i \in \mathcal{I}}^d \opnorm{\matB}^2\vecnorm{\mathbf{a}_i}^2 = \opnorm{\matB}^2\sum_{i \in \mathcal{I}}^d\vecnorm{\mathbf{a}_i}^2 = \opnorm{\matB}^2\fronorm{\matA}^2
\end{align}
where the last inequality follows the definition of matrix operator norm.
\end{proof}

\begin{lem} \label{lem:tr_bnd}
Suppose there are two matrices $\matA \in \mathbb{R}^{m \times n}$ and $\matB \in \mathbb{R}^{n \times m}$. Then, we have
\begin{align}
    |\trace(\matA\matB)| \leq \fronorm{\matA}\fronorm{\matB}.
\end{align}
\end{lem}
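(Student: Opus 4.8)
The plan is to express the trace as a Frobenius inner product and then invoke the Cauchy--Schwarz inequality. First I would write out the diagonal entries of the product explicitly: since $(\matA\matB)_{ii} = \sum_{j=1}^n a_{ij} b_{ji}$, summing over $i$ gives
\begin{align}
    \trace(\matA\matB) = \sum_{i=1}^m \sum_{j=1}^n a_{ij} b_{ji}. \nonumber
\end{align}
The key observation is that the $(i,j)$ entry $a_{ij}$ of $\matA$ is being paired with $b_{ji}$, which is exactly the $(i,j)$ entry of $\matB\tp$. Hence the double sum is precisely the Frobenius inner product $\langle \matA, \matB\tp\rangle_F = \trace(\matA(\matB\tp)\tp)$ between the two $m \times n$ matrices $\matA$ and $\matB\tp$.

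Next I would treat each matrix as a length-$mn$ vector obtained by stacking its entries, so that this inner product becomes an ordinary Euclidean inner product of two vectors. Applying the Cauchy--Schwarz inequality to these vectors yields
\begin{align}
    |\trace(\matA\matB)| = \left| \sum_{i=1}^m \sum_{j=1}^n a_{ij} b_{ji} \right| \leq \sqrt{\sum_{i,j} a_{ij}^2}\, \sqrt{\sum_{i,j} b_{ji}^2}. \nonumber
\end{align}
Finally, by the definition of the Frobenius norm the first factor equals $\fronorm{\matA}$, and the second factor satisfies $\sqrt{\sum_{i,j} b_{ji}^2} = \fronorm{\matB\tp} = \fronorm{\matB}$, since transposition merely permutes the entries and leaves the sum of squared entries unchanged. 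Combining these gives the claimed bound $|\trace(\matA\matB)| \leq \fronorm{\matA}\fronorm{\matB}$.

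There is essentially no hard step here; the only thing to be careful about is the index bookkeeping that matches $b_{ji}$ with the $(i,j)$ entry of $\matB\tp$, so that the pairing in the Cauchy--Schwarz step is applied to the correct entries. An alternative, slightly slicker route avoids indices altogether: observe $\trace(\matA\matB) = \langle \matA, \matB\tp\rangle_F$ directly from the identity $\langle \mathbf{X},\mathbf{Y}\rangle_F = \trace(\mathbf{X}\mathbf{Y}\tp)$, and then invoke Cauchy--Schwarz for the Frobenius inner product, again using $\fronorm{\matB\tp}=\fronorm{\matB}$ to conclude.
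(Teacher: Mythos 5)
Your proposal is correct and is essentially the same argument as the paper's: the paper writes $\trace(\matA\matB) = \sum_{i=1}^m \mathbf{a}_i\tp\mathbf{b}_i$ with $\mathbf{a}_i\tp$ the rows of $\matA$ and $\mathbf{b}_i$ the columns of $\matB$, and then applies Cauchy--Schwarz to get $(\sum_i \mathbf{a}_i\tp\mathbf{b}_i)^2 \leq (\sum_i\vecnorm{\mathbf{a}_i}^2)(\sum_i\vecnorm{\mathbf{b}_i}^2) = \fronorm{\matA}^2\fronorm{\matB}^2$. Your entrywise vectorization of the same double sum is just a different bookkeeping of the identical Cauchy--Schwarz step, so there is nothing substantive to distinguish the two.
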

\begin{proof}
Assume $\mathbf{a}_i\tp \in \mathbb{R}^n$ is the $i$-th row of matrix $\matA$ and $\mathbf{b}_i \in \mathbb{R}^n$ is the $i$-th column of matrix $\matB$. According to the definition of matrix trace, we have
\begin{align}
    \trace(\matA\matB)
    =& \sum_{i=1}^m \sum_{j=1}^n \matA_{ij}\matB_{ji} \\
    =& \sum_{i=1}^m \mathbf{a}_i\tp \mathbf{b}_i.
\end{align}
Then, Cauchy-Schwartz inequality yields
\begin{align}
    |\sum_{i=1}^m \mathbf{a}_i\tp \mathbf{b}_i|^2
    \leq& \parenth{\sum_{i=1}^m \vecnorm{\mathbf{a}_i}^2}\parenth{\sum_{i=1}^m \vecnorm{\mathbf{b}_i}^2} \\
    =& \fronorm{\matA}^2\fronorm{\matB}^2.
\end{align}
\end{proof}

\begin{lem}\label{lem:opnorm}
    Suppose there is a $\p \times \p$ matrix $\W$ that satisfies Assumption 5. Then
    \begin{align}
        \opnorm{\W^j-\J} = \spgap^j
    \end{align}
    where $\spgap = \max\{|\lambda_2(\W)|,|\lambda_\p(\W)|\}$.
\end{lem}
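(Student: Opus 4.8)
The plan is to exploit the spectral decomposition of the symmetric matrix $\W$. By Assumption 5, $\W$ is real and symmetric, so it admits an orthonormal eigenbasis $\mathbf{v}_1,\dots,\mathbf{v}_\p$ with real eigenvalues $\lambda_1 = 1 > |\lambda_2| \geq \cdots \geq |\lambda_\p|$, where the strict inequality $\spgap < 1$ guarantees that the eigenvalue $1$ is simple. Since $\W\one = \one$, the top eigenvector is $\mathbf{v}_1 = \one/\sqrt{\p}$, and therefore $\J = \one\one\tp/(\one\tp\one) = \mathbf{v}_1\mathbf{v}_1\tp$ is precisely the rank-one orthogonal projector onto the eigenvalue-$1$ eigenspace. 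Identifying $\J$ with this spectral projector is the conceptual heart of the argument.

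First I would write $\W = \sum_{i=1}^\p \lambda_i \mathbf{v}_i\mathbf{v}_i\tp$. Using orthonormality $\mathbf{v}_i\tp\mathbf{v}_l = \delta_{il}$, the powers telescope cleanly into $\W^j = \sum_{i=1}^\p \lambda_i^j \mathbf{v}_i\mathbf{v}_i\tp$. Because $\lambda_1^j = 1$, subtracting $\J = \mathbf{v}_1\mathbf{v}_1\tp$ cancels the leading term exactly, leaving $\W^j - \J = \sum_{i=2}^\p \lambda_i^j \mathbf{v}_i\mathbf{v}_i\tp$. This residual matrix is again symmetric, and its eigenvalues are $\{\lambda_i^j : i = 2,\dots,\p\}$ together with $0$ on the $\mathbf{v}_1$ direction.

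Next I would invoke the fact that for a symmetric matrix the operator norm equals its spectral radius, i.e. the largest absolute eigenvalue, which follows directly from the definition $\opnorm{\matA} = \sqrt{\lambda_{\max}(\matA\tp\matA)}$ recalled earlier. Hence $\opnorm{\W^j - \J} = \max_{2 \leq i \leq \p} |\lambda_i^j| = \bigl(\max_{2 \leq i \leq \p} |\lambda_i|\bigr)^j$. Since the eigenvalues are ordered, the largest magnitude among the indices $i \geq 2$ is attained either at the most positive remaining eigenvalue $\lambda_2$ or at the most negative one $\lambda_\p$, so $\max_{2 \leq i \leq \p}|\lambda_i| = \max\{|\lambda_2|,|\lambda_\p|\} = \spgap$, which gives the claimed identity $\opnorm{\W^j - \J} = \spgap^j$.

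The only delicate point — mild in this setting — is justifying that $\J$ coincides with the \emph{full} spectral projector onto the eigenvalue-$1$ subspace rather than merely a rank-one fragment of a higher-dimensional one; this hinges on $\lambda_1 = 1$ being simple, which is exactly what the strict spectral-gap condition $\spgap < 1$ in Assumption 5 secures. Everything else is a routine consequence of the spectral theorem and the characterization of the operator norm for symmetric matrices.
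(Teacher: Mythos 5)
Your proof is correct and follows essentially the same route as the paper's: both use the spectral decomposition of the symmetric matrix $\W$, identify $\J$ with the rank-one projector onto the eigenvector $\one/\sqrt{\p}$ so that $\W^j - \J$ has eigenvalues $\{0, \lambda_2^j, \dots, \lambda_\p^j\}$, and conclude via the operator norm being the largest absolute eigenvalue of a symmetric matrix. The paper writes the decomposition as $\W = \mathbf{Q}\mathbf{\Lambda}\mathbf{Q}\tp$ and evaluates $\sqrt{\lambda_{\max}(\W^{2j}-\J)}$ explicitly, but this is only a notational difference from your sum of rank-one projectors.
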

\begin{proof}
Since $\W$ is a real symmetric matrix, then it can be decomposed as $\W = \mathbf{Q}\mathbf{\Lambda}\mathbf{Q}\tp$, where $\mathbf{Q}$ is an orthogonal matrix and $\mathbf{\Lambda} = \diag\{\lambda_1(\W),\lambda_2(\W),\dots,\lambda_\p(\W)\}$. In particular, since the largest eigenvalue of $\W$ is $1$ and $\W\one = \one$, the corresponding eigenvector (i.e., the first column of $\mathbf{Q}$) is $\frac{\one}{\sqrt{\p}}$. Similarly, matrix $\J$ can be decomposed as $\mathbf{Q}\mathbf{\Lambda}_0\mathbf{Q}\tp$ where $\mathbf{\Lambda}_0 = \diag\{1,0,\dots,0\}$. Then, we have
\begin{align}
    \W^j-\J = (\mathbf{Q}\mathbf{\Lambda}\mathbf{Q}\tp)^j-\J = \mathbf{Q}\parenth{\mathbf{\Lambda}^j-\mathbf{\Lambda}_0}\mathbf{Q}\tp.
\end{align}
According to the definition of matrix operator norm,
\begin{align}
    \opnorm{\W^j -\J} = \sqrt{\lambda_{\text{max}}((\W^j-\J)\tp(\W^j-\J))} = \sqrt{ \lambda_{\text{max}}(\W^{2j}-\J)}.
\end{align}
Since $\W^{2j}-\J = \mathbf{Q}\parenth{\mathbf{\Lambda}^{2j}-\mathbf{\Lambda}_0}\mathbf{Q}\tp$, the maximal eigenvalue will be $\max\{0,\lambda_2(\W)^{2j},\dots,\lambda_\p(\W)^{2j}\}=\spgap^{2j}$. As a consequence, we have $\opnorm{\W^j-\J} =\sqrt{ \lambda_{\text{max}}(\W^{2j}-\J)}=\spgap^j$.

\end{proof}

\subsection{Proof of Theorem 1}
Recall the intermediate result \eqref{eqn:tight_thm1} in the proof of \Cref{lem:basic}:
\begin{align}
    \Exs\brackets{\frac{1}{K}\sum_{k=1}^K\vecnorm{\tg(\genx_k)}^2} 
    \leq& \frac{2\brackets{\F(\genx_1) - \F_\text{inf}}}{\genlr K} + \frac{\genlr\lip\V}{\p} + \frac{\lip^2}{K\p} \sum_{k=1}^K\fronorm{\X_k(\I-\J)}^2- \nonumber \\
        & \brackets{1-\genlr \lip\parenth{\frac{\M}{\p}+1}}\frac{1}{K}\sum_{k=1}^K\frac{\Exs\fronorm{\tg(\X_k)}^2}{\p}.
\end{align}
Our goal is to provide an upper bound for the network error term $\frac{\lip^2}{K\p} \sum_{k=1}^K\fronorm{\X_k(\I-\J)}^2$. First of all, let us derive a specific expression for $\X_k(\I-\J)$.
\subsubsection{Decomposition.}
According to the update rule (10) in Section 3, one can observe that
\begin{align}
    \X_k(\I-\J)
    =& \parenth{\X_{k-1} - \lr \G_{k-1}}\W_{k-1}(\I-\J) \\
    =& \X_{k-1}(\I-\J)\W_{k-1} - \lr\G_{k-1}(\W_{k-1}-\J) \label{eqn:x_k_p_bnd_1}
\end{align}
where \eqref{eqn:x_k_p_bnd_1} follows the special property of doubly stochastic matrix: $\W_{k-1}\J = \J\W_{k-1} = \J$ and hence $(\I-\J)\W_{k-1} = \W_{k-1}(\I-\J)$. Then, expanding the expression of $\X_{k-1}$, we have
\begin{align}
    \X_{k}(\I-\J)
    =& \brackets{\X_{k-2}(\I-\J)\W_{k-2} - \lr\G_{k-2}(\W_{k-2}-\J)}\W_{k-1} - \lr\G_{k-1}(\W_{k-1}-\J) \\
    =& \X_{k-2}(\I-\J)\W_{k-2}\W_{k-1} - \lr\G_{k-2}(\W_{k-2}\W_{k-1}-\J) - \lr\G_{k-1}(\W_{k-1}-\J)
\end{align}
Repeating the same procedure for $\X_{k-2},\X_{k-3},\dots,\X_2$, finally we get
\begin{align}
    \X_{k}(\I-\J)
    =& \X_1(\I-\J)\matPhi_{1,k-1} -\lr \sum_{s=1}^{k-1}\G_s(\matPhi_{s,k-1}-\J)
\end{align}
where $\matPhi_{s,k-1} = \prod_{l=s}^{k-1} \W_l$. Since all optimization variables are initialized at the same point $\X_1(\I-\J) = 0$, the squared norm of the network error term can be directly written as
\begin{align}
    \Exs\fronorm{\X_{k}(\I-\J)}^2
    =& \lr^2 \Exs\fronorm{\sum_{s=1}^{k-1}\G_s(\matPhi_{s,k-1}-\J)}^2.
\end{align}

Then, let us take a closer look at the expression of $\matPhi_{s,k-1}$. Without loss of generality, assume $k = j\cp+i$, where $j$ denotes the index of communication rounds and $i$ denotes the index of local updates. As a result, matrix $\matPhi_{s,k-1}$ can be expressed as follows:
\begin{align}
    \matPhi_{s,k-1}
    =
    \begin{cases}
    \I, & j\cp < s < j\cp + i \\
    \W, & (j-1)\cp < s \leq j\cp \\
    \W^2, & (j-2)\cp < s \leq (j-1)\cp \\
    \cdots \\
    \W^j, & 0<s \leq \cp
    \end{cases}.
\end{align}

For the ease of writing, define accumulated stochastic gradient within one local update period as $\msg_r = \sum_{s=r\cp+1}^{(r+1)\cp}\G_s$ for $0\leq r<j$ and $\msg_j = \sum_{s=j\cp+1}^{j\cp+i-1}\G_s$. Similarly, define accumulated full batch gradient $\mtg_r = \sum_{s=r\cp+1}^{(r+1)\cp}\tg(\X_s)$ for $0\leq r<j$ and $\mtg_j = \sum_{s=j\cp+1}^{j\cp+i-1}\tg(\X_s)$. Accordingly, we have
\begin{align}
    &\sum_{s=1}^{\cp}\G_s(\matPhi_{s,k-1} -\J) = \msg_0 (\W^j - \J), \\
    &\sum_{s=\cp+1}^{2\cp}\G_s(\matPhi_{s,k-1} -\J) = \msg_1 (\W^{j-1} - \J), \\
    &\dots \\
    &\sum_{s=j\cp+1}^{j\cp+i-1}\G_s(\matPhi_{s,k-1} -\J) = \msg_j(\I-\J).
\end{align}
Thus, summing all these terms we get
\begin{align}
    \sum_{s=1}^{k-1}\G_s(\matPhi_{s,k-1} -\J) = \sum_{r=0}^j \msg_r (\W^{j-r}-\J).
\end{align}
Note that the network error term can be decomposed into two parts:
\begin{align}
    \Exs\fronorm{\X_{k}(\I-\J)}^2
    =& \lr^2 \Exs\fronorm{\sum_{r=0}^j \msg_r (\W^{j-r}-\J)}^2 \\
    =& \lr^2 \Exs\fronorm{\sum_{r=0}^j (\msg_r - \mtg_r) (\W^{j-r}-\J) + \sum_{r=0}^j \mtg_r (\W^{j-r}-\J)}^2 \\
    \leq& \underbrace{2\lr^2 \Exs\fronorm{\sum_{r=0}^j (\msg_r-\mtg_r) (\W^{j-r}-\J)}^2}_{T_1} +\underbrace{2\lr^2\Exs\fronorm{\sum_{r=0}^j \mtg_r (\W^{j-r}-\J)}^2}_{T_2} \label{eqn:x_k_p_decomp}
\end{align}
where \Cref{eqn:x_k_p_decomp} follows $\vecnorm{a+b}^2 \leq 2\vecnorm{a}^2 + 2\vecnorm{b}^2$. Next, we are going to separately provide bounds for $T_1$ and $T_2$. Recall that we are interested in the average of all iterates $\frac{\lip^2}{K\p} \sum_{k=1}^K\fronorm{\X_k(\I-\J)}^2$. Accordingly, we will also derive the bounds for $\frac{\lip^2}{K\p}\sum_{k=1}^K T_1$ and $\frac{\lip^2}{K\p}\sum_{k=1}^K T_2$.


\subsubsection{Bounding $T_1$.}
For the first term $T_1$, we have
\begin{align}
    T_1 
    =& 2\lr^2 \sum_{r=0}^j\Exs\fronorm{(\msg_r-\mtg_r) (\W^{j-r}-\J)}^2 \label{eqn:t_1_bnd_1} \\
    \leq& 2\lr^2 \sum_{r=0}^j\Exs\fronorm{\msg_r-\mtg_r}^2\opnorm{\W^{j-r}-\J}^2 \label{eqn:t_1_bnd_2}\\
    =& 2\lr^2 \sum_{r=0}^j\Exs\fronorm{\msg_r-\mtg_r}^2\spgap^{2(j-r)} \label{eqn:t_1_bnd_op}\\
    =& 2\lr^2 \sum_{r=0}^{j-1}\Exs\fronorm{\msg_r-\mtg_r}^2\spgap^{2(j-r)} + 2\lr^2 \Exs\fronorm{\msg_j-\mtg_j}^2 \label{eqn:t_1_decomp}
\end{align}
where \eqref{eqn:t_1_bnd_2} follows \Cref{lem:op_fro_norm}, \eqref{eqn:t_1_bnd_op} comes from \Cref{lem:opnorm}. Recall that $\spgap = \max\{|\lambda_2(\W)|, |\lambda_{\p+\addw}(\W)|\}$. Then for any $0 \leq r < j$,
\begin{align}
    \Exs\brackets{\fronorm{\msg_r -\mtg_r}^2}
    =& \Exs\brackets{\fronorm{\sum_{s=r\cp+1}^{(r+1)\cp}\brackets{\G_s - \tg(\X_s)}}^2} \\
    =& \summp \Exs \brackets{\vecnorm{\sum_{s=r\cp+1}^{(r+1)\cp}\brackets{\sg(\x_s^{(i)}) - \tg(\x_s^{(i)})}}^2} \\
    =& \summp \Exs\brackets{\sum_{s=r\cp+1}^{(r+1)\cp}\vecnorm{\sg(\x_s^{(i)}) - \tg(\x_s^{(i)})}^2 + \sum_{s \neq l}\inprod{\sg(\x_s^{(i)}) - \tg(\x_s^{(i)})}{\sg(\x_l^{(i)}) - \tg(\x_l^{(i)})}}.
\end{align}
Now we show that the cross terms are zero. For any $s < l$, according to Assumption 4, one can obtain
\begin{align}
    &\Exs \brackets{\inprod{\sg(\x_s^{(i)}) - \tg(\x_s^{(i)})}{\sg(\x_l^{(i)}) - \tg(\x_l^{(i)})}} \nonumber \\
    =& \Exs_{\x_{s}^{(i)},\minib_{s}^{(i)},\x_{l}^{(i)}}\Exs_{\minib_l^{(i)}|\x_{s}^{(i)},\minib_{s}^{(i)},\x_{l}^{(i)}}\brackets{\inprod{\sg(\x_s^{(i)}) - \tg(\x_s^{(i)})}{\sg(\x_l^{(i)}) - \tg(\x_l^{(i)})}} \\
    =& \Exs\brackets{\inprod{\sg(\x_s^{(i)}) - \tg(\x_s^{(i)})}{\Exs_{\minib_l^{(i)}|\x_{s}^{(i)},\minib_{s}^{(i)},\x_{l}^{(i)}}\brackets{\sg(\x_l^{(i)}) - \tg(\x_l^{(i)})}}} \\
    =& \Exs\brackets{\inprod{\sg(\x_s^{(i)}) - \tg(\x_s^{(i)})}{0}} = 0.
\end{align}
As a result, we have
\begin{align}
    \Exs\brackets{\fronorm{\msg_r -\mtg_r}^2}
    =& \Exs \brackets{\sum_{s=r\cp+1}^{(r+1)\cp} \summp\vecnorm{\sg(\x_s^{(i)}) - \tg(\x_s^{(i)})}^2} \\
    \leq& \M \sum_{s=r\cp+1}^{(r+1)\cp}\summp\Exs\brackets{\vecnorm{\tg(\x_s^{(i)})}^2} + \cp\p \V \label{eqn:y_q_bnd_1}\\
    =& \M \sum_{s=r\cp+1}^{(r+1)\cp}\fronorm{\tg(\X_s)}^2 + \cp\p \V \label{eqn:t_1_part1}
\end{align}
where \eqref{eqn:y_q_bnd_1} is according to Assumption 4. Using the same technique, one can obtain that 
\begin{align}
    \Exs\fronorm{\msg_j - \mtg_j}^2 
    \leq& \M \sum_{s=j\cp+1}^{j\cp+i-1}\fronorm{\tg(\X_s)}^2 + (i-1)\p\V. \label{eqn:t_1_part2}
\end{align}
Substituting \Cref{eqn:t_1_part1,eqn:t_1_part2} back into \eqref{eqn:t_1_decomp}, we have
\begin{align}
    T_1
    \leq& 2\lr^2 \sum_{r=0}^{j-1}\brackets{\spgap^{2(j-r)}\parenth{\M \sum_{s=r\cp+1}^{(r+1)\cp}\fronorm{\tg(\X_s)}^2 +\cp\p\V}} + 2\lr^2\M \sum_{s=j\cp+1}^{j\cp+i-1}\fronorm{\tg(\X_s)}^2 + 2\lr^2(i-1)\p\V \\
    \leq& 2\lr^2\p\V\brackets{\frac{\spgap^2}{1-\spgap^2}\cp + i-1} + 2\lr^2 \M \sum_{r=0}^{j-1}\brackets{\spgap^{2(j-r)}\parenth{\sum_{s=r\cp+1}^{(r+1)\cp}\fronorm{\tg(\X_s)}^2}} + 2\lr^2\M \sum_{s=j\cp+1}^{j\cp+i-1}\fronorm{\tg(\X_s)}^2 \label{eqn:t_1_bnd_3}
\end{align}
where \eqref{eqn:t_1_bnd_3} follows the summation formula of power series:
\begin{align}
    \sum_{r=0}^{j-1}\spgap^{2(j-r)} 
    \leq \sum_{r = -\infty}^{j-1}\spgap^{2(j-r)} \leq \frac{\spgap^2}{1-\spgap^2}.
\end{align}
Next, summing over all iterates in the $j$-th local update period (from $i = 1$ to $i = \cp$):
\begin{align}
    \sum_{i=1}^\cp T_1
    \leq& \lr^2\p\V\brackets{\frac{2\spgap^2}{1-\spgap^2}\cp^2 + \cp(\cp-1)}+ 2\lr^2 \M\cp \sum_{r=0}^{j-1}\brackets{\spgap^{2(j-r)}\parenth{\sum_{s=r\cp+1}^{(r+1)\cp}\fronorm{\tg(\X_s)}^2}} + 2\lr^2\M\cp \sum_{s=j\cp+1}^{(j+1)\cp-1}\fronorm{\tg(\X_s)}^2 \\
    \leq& \lr^2\p\V\brackets{\frac{2\spgap^2}{1-\spgap^2}\cp^2 + \cp(\cp-1)}+ 2\lr^2 \M\cp \sum_{r=0}^{j}\brackets{\spgap^{2(j-r)}\parenth{\sum_{s=r\cp+1}^{(r+1)\cp}\fronorm{\tg(\X_s)}^2}}.
\end{align}
Then, summing over all periods from $j = 0$ to $j = K/\cp - 1$, where $K$ is the total iterations:
\begin{align}
    \sum_{j=0}^{K/\cp-1}\sum_{i=1}^\cp T_1
    \leq& \frac{K}{\cp}\lr^2\p\V\brackets{\frac{2\spgap^2}{1-\spgap^2}\cp^2 + \cp(\cp-1)} + 2\lr^2 \M\cp\sum_{j=0}^{K/\cp-1} \sum_{r=0}^{j}\brackets{\spgap^{2(j-r)}\parenth{\sum_{s=r\cp+1}^{(r+1)\cp}\fronorm{\tg(\X_s)}^2}} \\
    =& K\lr^2\p\V\brackets{\frac{1+\spgap^2}{1-\spgap^2}\cp -1} + 2\lr^2 \M\cp\sum_{j=0}^{K/\cp-1} \sum_{r=0}^{j}\brackets{\spgap^{2(j-r)}\parenth{\sum_{s=r\cp+1}^{(r+1)\cp}\fronorm{\tg(\X_s)}^2}}. \label{eqn:t_1_final2}
\end{align}
Expanding the summation in \eqref{eqn:t_1_final2}, we have
\begin{align}
    \sum_{j=0}^{K/\cp-1}\sum_{i=1}^\cp T_1
    \leq& K\lr^2\p\V\brackets{\frac{1+\spgap^2}{1-\spgap^2}\cp -1} + 2\lr^2 \M\cp\sum_{r=0}^{K/\cp-1}\brackets{ \parenth{\sum_{s=r\cp+1}^{(r+1)\cp}\fronorm{\tg(\X_s)}^2}\parenth{\sum_{j=r}^{K/\cp-1}\spgap^{2(j-r)}}} \\
    \leq& K\lr^2\p\V\brackets{\frac{1+\spgap^2}{1-\spgap^2}\cp -1} + 2\lr^2 \M\cp\sum_{r=0}^{K/\cp-1}\brackets{ \parenth{\sum_{s=r\cp+1}^{(r+1)\cp}\fronorm{\tg(\X_s)}^2}\parenth{\sum_{j=r}^{+\infty}\spgap^{2(j-r)}}} \\
    \leq& K\lr^2\p\V\brackets{\frac{1+\spgap^2}{1-\spgap^2}\cp -1} + \frac{2\lr^2 \M\cp}{1-\spgap^2}\sum_{r=0}^{K/\cp-1} \parenth{\sum_{s=r\cp+1}^{(r+1)\cp}\fronorm{\tg(\X_s)}^2} \\
    =& K\lr^2\p\V\brackets{\frac{1+\spgap^2}{1-\spgap^2}\cp -1} + \frac{2\lr^2 \M\cp}{1-\spgap^2}\sum_{k=1}^K \fronorm{\tg(\X_k)}^2. \label{eqn:t_1_final}
\end{align}
Here, we complete the first part.


\subsubsection{Bounding $T_2$.}
For the second term in \eqref{eqn:x_k_p_decomp}, since $\fronorm{\matA}^2 = \trace(\matA\tp \matA)$, we have
\begin{align}
    T_2
    =& 2\lr^2\sum_{r=0}^j \Exs\fronorm{\mtg_r (\W^{j-r}-\J)}^2 +2\lr^2\sum_{n=0}^j\sum_{l = 0, l\neq n}^j\Exs \brackets{\trace\parenth{(\W^{j-n}-\J)\mtg_n\tp\mtg_l(\W^{j-l}-\J)}}.
\end{align}
According to \Cref{lem:tr_bnd}, the trace can be bounded as:
\begin{align}
    |\trace\parenth{(\W^{j-n}-\J)\mtg_n\tp\mtg_l(\W^{j-l}-\J)}|
    \leq&  \fronorm{(\W^{j-n}-\J)\mtg_n\tp} \fronorm{\mtg_l(\W^{j-l}-\J)} \\
    \leq& \opnorm{\W^{j-n}-\J}\fronorm{\mtg_n} \fronorm{\mtg_l}\opnorm{\W^{j-l}-\J} \label{eqn:tr_bnd_1} \\
    \leq& \frac{1}{2}\spgap^{2j-n-l}\brackets{\fronorm{\mtg_n}^2+\fronorm{\mtg_l}^2} \label{eqn:tr_bnd_2}
\end{align}
where \eqref{eqn:tr_bnd_1} follows \Cref{lem:op_fro_norm} and \eqref{eqn:tr_bnd_2} is because of $2ab \leq a^2 + b^2$. Then, it follows that
\begin{align}
    T_2
    \leq& 2\lr^2\sum_{r=0}^j \Exs\fronorm{\mtg_r}^2\opnorm{(\W^{j-r}-\J)}^2 +\lr^2\sum_{n=0}^j\sum_{l=0,l\neq n}^j\spgap^{2j-n-l}\Exs\brackets{\fronorm{\mtg_n}^2+\fronorm{\mtg_l}^2} \\
    =& 2\lr^2\sum_{r=0}^j \spgap^{2(j-r)}\Exs\fronorm{\mtg_r}^2 +2\lr^2\sum_{n=0}^j\sum_{l=0,l\neq n}^j\spgap^{2j-n-l}\Exs\fronorm{\mtg_n}^2 \label{eqn:t2_bnd_1}\\
    =& 2\lr^2\sum_{r=0}^j \spgap^{2(j-r)}\Exs\fronorm{\mtg_r}^2 +2\lr^2\sum_{n=0}^j\spgap^{j-n}\Exs\fronorm{\mtg_n}^2\sum_{l=0,l\neq n}^j\spgap^{j-l}\\
    =& 2\lr^2\brackets{\sum_{r=0}^{j-1} \spgap^{2(j-r)}\Exs\fronorm{\mtg_r}^2 +\sum_{n=0}^{j-1}\spgap^{j-n}\Exs\fronorm{\mtg_n}^2\sum_{l=0,l\neq n}^j\spgap^{j-l} + \Exs\fronorm{\mtg_j}^2 + \Exs\fronorm{\mtg_j}^2\sum_{l=0}^{j-1}\spgap^{j-l}} \\
    \leq& 2\lr^2\brackets{\sum_{r=0}^{j-1} \spgap^{2(j-r)}\Exs\fronorm{\mtg_r}^2 +\sum_{n=0}^{j-1}\frac{\spgap^{j-n}}{1-\spgap}\Exs\fronorm{\mtg_n}^2+\Exs\fronorm{\mtg_j}^2+ \Exs\fronorm{\mtg_j}^2\frac{\spgap}{1-\spgap}} \label{eqn:t2_bnd_power}
\end{align}
where \eqref{eqn:t2_bnd_1} uses the fact that indices $n$ and $l$ are symmetric and \eqref{eqn:t2_bnd_power} is according to the summation formula of power series: 
\begin{align}
    \sum_{l=0,l\neq n}^j\spgap^{j-l} \leq \sum_{l=-\infty}^j\spgap^{j-l} \leq \frac{1}{1-\spgap}, \\ \sum_{l=0}^{j-1}\spgap^{j-l} \leq \sum_{l=-\infty}^{j-1}\spgap^{j-l} \leq \frac{\spgap}{1-\spgap}.
\end{align}
After minor rearranging, we have
\begin{align}
    T_2
    \leq& 2\lr^2\sum_{r=0}^{j-1}\brackets{ \parenth{\spgap^{2(j-r)}+\frac{\spgap^{j-r}}{1-\spgap}}\Exs\fronorm{\mtg_r}^2} +\frac{2\lr^2}{1-\spgap}\Exs\fronorm{\mtg_j}^2\\
    =& 2\lr^2\sum_{r=0}^{j-1} \brackets{\parenth{\spgap^{2(j-r)} + \frac{\spgap^{j-r}}{1-\spgap}}\Exs\fronorm{\sum_{s=1}^\cp \tg(\X_{r\cp+s})}^2} + \frac{2\lr^2}{1-\spgap}\Exs\fronorm{\sum_{s=1}^{i-1}\tg(\X_{j\cp+s})}^2 \\
    \leq& 2\lr^2\cp\sum_{r=0}^{j-1} \brackets{\parenth{\spgap^{2(j-r)} + \frac{\spgap^{j-r}}{1-\spgap}}\sum_{s=1}^\cp\Exs\fronorm{\tg(\X_{r\cp+s})}^2} + \frac{2\lr^2(i-1)}{1-\spgap}\sum_{s=1}^{i-1}\Exs\fronorm{\tg(\X_{j\cp+s})}^2. \label{t2_bnd_2}
\end{align}
where \eqref{t2_bnd_2} follows the convexity of Frobenius norm and Jensen's inequality. Next, summing over all iterates in the $j$-th period, we can get 
\begin{align}
    \sum_{i=1}^\cp T_2
    \leq& 2\lr^2\cp^2\sum_{r=0}^{j-1} \brackets{\parenth{\spgap^{2(j-r)} + \frac{\spgap^{j-r}}{1-\spgap}}\sum_{s=1}^\cp\Exs\fronorm{\tg(\X_{r\cp+s})}^2} + \lr^2\cp(\cp-1)\frac{1}{1-\spgap}\sum_{s=1}^{\cp-1}\Exs\fronorm{\tg(\X_{j\cp+s})}^2. \label{eqn:t2_bnd_3}
\end{align}
Now, we are going to provide a bound for the summation over all periods (from $j = 0$ to $j = \crounds -1$). For clarity, let us first focus on the $r$-th local update period ($r<j$). The coefficient of $\sum_{s=1}^\cp \Exs\vecnorm{\tg(\X_{r\cp+s})}^2$ in \eqref{eqn:t2_bnd_3} is
\begin{align}
    \spgap^{2(j-r)} + \frac{\spgap^{j-r}}{1-\spgap}.
\end{align}
Accordingly, the coefficient of $\sum_{s=1}^\cp \Exs\vecnorm{\tg(\X_{r\cp+s})}^2$ in $\sum_{j=0}^{\crounds-1}\sum_{i=1}^\cp T_2$ can be written as:
\begin{align}
    \sum_{j=r+1}^{\crounds-1}\parenth{\spgap^{2(j-r)} + \frac{\spgap^{j-r}}{1-\spgap}}
    \leq& \sum_{j=r+1}^{\infty}\parenth{\spgap^{2(j-r)} + \frac{\spgap^{j-r}}{1-\spgap}} \\
    \leq& \frac{\spgap^2}{1-\spgap^2} + \frac{\spgap}{(1-\spgap)^2}.
\end{align}
As a result, we have
\begin{align}
    \sum_{j=0}^{\crounds-1}\sum_{i=1}^\cp T_2
    \leq& 2\lr^2\cp^2 \parenth{\frac{\spgap^2}{1-\spgap^2}+\frac{\spgap}{(1-\spgap)^2}}\sum_{j=1}^{\crounds-1}\sum_{s=1}^\cp\Exs\fronorm{\tg(\X_{j\cp+s})}^2 + \frac{\lr^2\cp(\cp-1)}{1-\spgap}\sum_{j=0}^{\crounds-1}\sum_{s=1}^{\cp-1}\Exs\fronorm{\tg(\X_{j\cp+s})}^2
\end{align}
Replacing all indices by $k$,
\begin{align}
    \sum_{j=0}^{\crounds-1}\sum_{i=1}^\cp T_2
    \leq& 2\lr^2\cp^2 \parenth{\frac{\spgap^2}{1-\spgap^2}+\frac{\spgap}{(1-\spgap)^2}}\sum_{k=1}^K\Exs\fronorm{\tg(\X_k)}^2 + \frac{\lr^2\cp(\cp-1)}{1-\spgap}\sum_{k=1}^{K}\Exs\fronorm{\tg(\X_k)}^2 \\
    =& \frac{\lr^2\cp^2}{1-\spgap} \parenth{\frac{2\spgap^2}{1+\spgap}+\frac{2\spgap}{1-\spgap} + \frac{\cp-1}{\cp}}\sum_{k=1}^K\Exs\fronorm{\tg(\X_k)}^2. \label{eqn:t_2_final}
\end{align}
We complete the second part.

\subsubsection{Final result.}
According to \Cref{eqn:x_k_p_decomp,eqn:t_1_final,eqn:t_2_final}, the network error can be bounded as
\begin{align}
    \frac{1}{K\p}\sum_{k=1}^K\fronorm{\X_k(\I-\J)}^2
    \leq& \frac{1}{K\p}\sum_{j=0}^{\crounds-1}\sum_{i=1}^\cp (T_1+T_2) \\
    \leq& \lr^2\V\parenth{\frac{1+\spgap^2}{1-\spgap^2}\cp -1}  + \frac{2\lr^2\M\cp}{1-\spgap^2}\frac{1}{K}\sum_{k=1}^K \frac{\fronorm{\tg(\X_k)}^2}{\p}+ \nonumber\\
    &\frac{\lr^2\cp^2}{1-\spgap} \parenth{\frac{2\spgap^2}{1+\spgap}+\frac{2\spgap}{1-\spgap} +  \frac{\cp-1}{\cp}}\frac{1}{K}\sum_{k=1}^K\frac{\Exs\fronorm{\tg(\X_k)}^2}{\p}.
\end{align}
Substituting the expression of network error back to inequality \eqref{eqn:tight_thm1}, we obtain
\begin{align}
    \frac{1}{K}\sum_{k=1}^K\Exs\vecnorm{\tg(\genx_k)}^2
    \leq& \frac{2(\F(\x_1)-\F_{\text{inf}})}{\genlr K} +  \frac{\genlr\lip\V}{\p} + \lr^2\lip^2\V\parenth{\frac{1+\spgap^2}{1-\spgap^2}\cp -1}- \nonumber \\
        & \brackets{1-\genlr \lip\parenth{\frac{\M}{\p}+1} -\frac{2\lr^2\lip^2\M\cp}{1-\spgap^2}} \frac{1}{K}\sum_{k = 1}^{K} \frac{\Exs\fronorm{\tg(\X_k)}^2}{\p}+ \nonumber \\
        & \frac{\lr^2\lip^2\cp^2}{1-\spgap} \parenth{\frac{2\spgap^2}{1+\spgap}+\frac{2\spgap}{1-\spgap} + \frac{\cp-1}{\cp}}\frac{1}{K}\sum_{k=1}^K\frac{\Exs\fronorm{\tg(\X_k)}^2}{\p}.
\end{align}
When the learning rate satisfies
\begin{align}
    \genlr\lip \parenth{\frac{\M}{\p}+1} +\frac{2\lr^2\lip^2\M\cp}{1-\spgap^2}+ \frac{\lr^2\lip^2\cp^2}{1-\spgap} \parenth{\frac{2\spgap^2}{1+\spgap}+\frac{2\spgap}{1-\spgap} + \frac{\cp-1}{\cp}} \leq 1, \label{eqn:lr_cond}
\end{align}
we have
\begin{align}
    \frac{1}{K}\sum_{k=1}^K\Exs\vecnorm{\tg(\genx_k)}^2
    \leq& \frac{2(\F(\x_1)-\F_{\text{inf}})}{\genlr K} +  \frac{\genlr\lip\V}{\p} + \lr^2\lip^2\V\parenth{\frac{1+\spgap^2}{1-\spgap^2}\cp -1} \label{eqn:final}
\end{align}
where $\genlr = \p\lr /(\p+\addw)$ and $\spgap = \max\{|\lambda_2(\W)|, |\lambda_{\p+\addw}(\W)|\}$. Setting $\M = 0$, the condition on learning rate \eqref{eqn:lr_cond} can be further simplified as follows:
\begin{align}
    &\frac{\p}{\p+\addw}\lr\lip + \frac{\lr^2\lip^2\cp^2}{1-\spgap} \parenth{\frac{2\spgap^2}{1+\spgap}+\frac{2\spgap}{1-\spgap} + \frac{\cp-1}{\cp}} \\
    =& \frac{\p}{\p+\addw}\lr\lip + \frac{\lr^2\lip^2\cp^2}{(1-\spgap)^2} \parenth{\frac{2\spgap^2(1-\spgap)}{1+\spgap}+2\spgap + \frac{\cp-1}{\cp}(1-\spgap)} \\
    \leq& \frac{\p}{\p+\addw}\lr\lip + \frac{\lr^2\lip^2\cp^2}{(1-\spgap)^2}(2+2+1) \\
    =& \frac{\p}{\p+\addw}\lr\lip + \frac{5\lr^2\lip^2\cp^2}{(1-\spgap)^2} \leq 1. \label{eqn:lr_cond_simp}
\end{align}
Here, we complete the proof.

\section{Proof of Corollary 1 (Finite Horizon Result)}
Directly substituting $\lr = \frac{\p+\addw}{\lip\p}\sqrt{\frac{\p}{K}}$ into \eqref{eqn:final}, we have
\begin{align}
    \frac{1}{K}\sum_{k=1}^K\Exs\vecnorm{\tg(\genx_k)}^2
    \leq& \frac{2\lip(\F(\x_1)-\F_{\text{inf}})}{\sqrt{\p K}} +  \frac{\V}{\sqrt{\p K}} + \frac{\p}{K}\parenth{1+\frac{\addw}{\p}}^2\parenth{\frac{1+\spgap^2}{1-\spgap^2}\cp -1}\V. \label{eqn:rate1}
\end{align}
Note that the learning rate should satisfy the condition in \eqref{eqn:lr_cond_simp}. That is, the total iterations should satisfy:
\begin{align}
    \sqrt{\frac{\p}{K}} + \frac{5m}{K}\brackets{\parenth{1+\frac{\addw}{\p}}\frac{\cp}{1-\spgap}}^2 \leq 1.
\end{align}
When $K$ is sufficiently large, the first term can be arbitrarily small. In particular, when $K > 4\p$, the first term will be smaller than $1/2$. Then, it is enough to show the second term is smaller than $1/2$ as well.
\begin{align}
    &\frac{5m}{K}\brackets{\parenth{1+\frac{\addw}{\p}}\frac{\cp}{1-\spgap}}^2 \leq \frac{1}{2} \\
    \Rightarrow& K \geq 10\p\brackets{\parenth{1+\frac{\addw}{\p}}\frac{\cp}{1-\spgap}}^2. \label{eqn:lr_K}
\end{align}
Here, we complete the proof of the first part. Furthermore, when the communication period and total iterations satisfy
\begin{align}
    \frac{1}{\sqrt{\p K}} \geq \frac{(\p+\addw)}{K}\parenth{1+\frac{\addw}{\p}}\parenth{\frac{1+\spgap^2}{1-\spgap^2}\cp-1} \label{eqn:K_cond}
\end{align}
then the last term in \eqref{eqn:rate1} is smaller than the second term. As a result, we have
\begin{align}
     \frac{1}{K}\sum_{k=1}^K\Exs\vecnorm{\tg(\genx_k)}^2
    \leq \frac{2\lip(\F(\x_1)-\F_{\text{inf}})}{\sqrt{\p K}} +  \frac{2\V}{\sqrt{\p K}}. 
\end{align}
In order to get a lower bound on $K$ from \eqref{eqn:K_cond}, it is enough to show
\begin{align}
    &\frac{(\p+\addw)}{K}\parenth{1+\frac{\addw}{\p}}\parenth{\frac{1+\spgap^2}{1-\spgap^2}\cp-1} \\
    \leq& \frac{(\p+\addw)}{K}\parenth{1+\frac{\addw}{\p}}\frac{1+\spgap^2}{1+\spgap}\frac{\cp}{1-\spgap} \\
    \leq& \frac{(\p+\addw)}{K}\parenth{1+\frac{\addw}{\p}}\frac{\cp}{1-\spgap} \leq \frac{1}{\sqrt{\p K}} \\
    \Rightarrow& K \geq (\p+\addw)^2\p\brackets{\parenth{1+\frac{\addw}{\p}}\frac{\cp}{1-\spgap}}^2. \label{eqn:K_cond2}
\end{align}
Once $\p+\addw \geq \sqrt{10} \approx 3.1$, \eqref{eqn:K_cond2} is more strict than \eqref{eqn:lr_K}.

\section{Proof of Lemma 1 and Theorem 2: Best Choice of $\alpha$ in EASGD}
Recall that in EASGD, $\spgap = \max\{|1-\alpha|,|1-(\p+1)\alpha|\}$. It is straightforward to show that
\begin{align}
    \spgap = 
    \begin{cases}
    (\p+1)\alpha -1, & \frac{2}{\p+2} < \alpha \leq \frac{2}{\p+1} \\
    1-\alpha, & 0\leq \alpha \leq \frac{2}{\p+2}
    \end{cases}.
\end{align}
When $\alpha = \frac{2}{\p+2}$, one can get the minimal value of $\spgap$, which equals to $ 1-\alpha = (\p+1)\alpha -1 = \frac{\p}{\p+2}$. Then, substituting $\spgap = \frac{\p}{\p+2}, \cp = 1, \addw = 0$ into Theorem 1, we complete the proof of Theorem 2. 

\section{Proof of Lemma 2: Generalized Elastic Averaging}
Lemma 2 is built upon a known result about the eigenvalues of block matrices.
\begin{lem}[\cite{fiedler1974eigenvalues}]
\label{lem:eigen}
Let $\matA$ be a symmetric $m\times m$ matrix with eigenvalues $\lambda_1,\lambda_2,\dots,\lambda_m$, let $\mathbf{u}, \vecnorm{\mathbf{u}} =1$, be a unit eigenvector corresponding to $\lambda_1$; let $\matB$ be a symmetric $n \times n $ matrix with eigenvalues $\beta_1,\beta_2,\dots,\beta_n$, let $\mathbf{v}, \vecnorm{\mathbf{v}} =1$, be a unit eigenvector corresponding to $\beta_1$. Then for any $\rho$, the matrix
\begin{align}
    \mathbf{C} = 
    \begin{bmatrix}
    \matA & \rho\mathbf{u}\mathbf{v}\tp \\
    \rho\mathbf{v}\mathbf{u}\tp & \matB
    \end{bmatrix}
\end{align}
has eigenvalues $\lambda_2,\dots,\lambda_m,\beta_2,\beta_n,\gamma_1,\gamma_2$, where $\gamma_1,\gamma_2$ are eigenvalues of the matrix:
\begin{align}
    \hat{\mathbf{C}} = 
    \begin{bmatrix}
    \lambda_1 & \rho \\
    \rho & \beta_1
    \end{bmatrix}.
\end{align}
\end{lem}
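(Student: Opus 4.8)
The plan is to prove the claim constructively, by exhibiting a complete \emph{orthogonal} set of $m+n$ eigenvectors of $\mathbf{C}$ and reading off their eigenvalues directly; this avoids any determinant computation. Since $\matA$ and $\matB$ are symmetric, I would first fix orthonormal eigenbases $\{\mathbf{u}_1,\dots,\mathbf{u}_m\}$ of $\matA$ and $\{\mathbf{v}_1,\dots,\mathbf{v}_n\}$ of $\matB$, with $\matA\mathbf{u}_i=\lambda_i\mathbf{u}_i$, $\matB\mathbf{v}_j=\beta_j\mathbf{v}_j$, choosing the leading vectors to be the given unit eigenvectors $\mathbf{u}_1=\mathbf{u}$ and $\mathbf{v}_1=\mathbf{v}$. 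Throughout, abbreviate by $[\mathbf{x};\mathbf{y}]$ the column vector in $\mathbb{R}^{m+n}$ obtained by stacking $\mathbf{x}\in\mathbb{R}^m$ on top of $\mathbf{y}\in\mathbb{R}^n$. The algebraic facts driving everything are the orthogonality relations $\mathbf{u}\tp\mathbf{u}_i=0$ for $i\geq 2$ and $\mathbf{v}\tp\mathbf{v}_j=0$ for $j\geq 2$, together with the unit-norm identities $\mathbf{u}\tp\mathbf{u}=\mathbf{v}\tp\mathbf{v}=1$.

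Next I would dispatch the two ``decoupled'' families. For $i=2,\dots,m$ the block product is
\begin{align}
\mathbf{C}[\mathbf{u}_i;\mathbf{0}]
= \begin{bmatrix}\matA\mathbf{u}_i\\ \rho\,\mathbf{v}(\mathbf{u}\tp\mathbf{u}_i)\end{bmatrix}
= \begin{bmatrix}\lambda_i\mathbf{u}_i\\\mathbf{0}\end{bmatrix}
= \lambda_i[\mathbf{u}_i;\mathbf{0}],
\end{align}
so each $[\mathbf{u}_i;\mathbf{0}]$ is an eigenvector of $\mathbf{C}$ with eigenvalue $\lambda_i$; symmetrically, for $j=2,\dots,n$,
\begin{align}
\mathbf{C}[\mathbf{0};\mathbf{v}_j]
= \begin{bmatrix}\rho\,\mathbf{u}(\mathbf{v}\tp\mathbf{v}_j)\\ \matB\mathbf{v}_j\end{bmatrix}
= \begin{bmatrix}\mathbf{0}\\\beta_j\mathbf{v}_j\end{bmatrix}
= \beta_j[\mathbf{0};\mathbf{v}_j],
\end{align}
recovering the eigenvalues $\lambda_2,\dots,\lambda_m$ and $\beta_2,\dots,\beta_n$. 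I would then isolate the remaining two-dimensional subspace $\mathcal{V}=\operatorname{span}\{[\mathbf{u};\mathbf{0}],[\mathbf{0};\mathbf{v}]\}$. For scalars $a,b$,
\begin{align}
\mathbf{C}[a\mathbf{u};b\mathbf{v}]
= \begin{bmatrix}a\lambda_1\mathbf{u}+b\rho\mathbf{u}\\ a\rho\mathbf{v}+b\beta_1\mathbf{v}\end{bmatrix}
= [(a\lambda_1+b\rho)\mathbf{u};\,(a\rho+b\beta_1)\mathbf{v}],
\end{align}
using $\matA\mathbf{u}=\lambda_1\mathbf{u}$, $\matB\mathbf{v}=\beta_1\mathbf{v}$ and the unit-norm identities. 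Hence $\mathcal{V}$ is $\mathbf{C}$-invariant, and in the ordered basis $\{[\mathbf{u};\mathbf{0}],[\mathbf{0};\mathbf{v}]\}$ the restriction of $\mathbf{C}$ to $\mathcal{V}$ is represented exactly by $\hat{\mathbf{C}}$; its eigenvalues $\gamma_1,\gamma_2$ are therefore eigenvalues of $\mathbf{C}$, with eigenvectors the combinations $[a_k\mathbf{u};b_k\mathbf{v}]$.

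The final step, and the one I expect to require the most care, is completeness together with correct bookkeeping of multiplicities (the eigenvalue lists may overlap, e.g.\ $\gamma_1$ could coincide with some $\lambda_i$). This is settled cleanly by orthogonality: the $m-1$ vectors $[\mathbf{u}_i;\mathbf{0}]$ and the $n-1$ vectors $[\mathbf{0};\mathbf{v}_j]$ are mutually orthonormal, they lie in disjoint blocks so are orthogonal to each other, and since $\mathbf{u}_i\perp\mathbf{u}$ and $\mathbf{v}_j\perp\mathbf{v}$ each is orthogonal to $\mathcal{V}$. Thus the three families furnish $(m-1)+(n-1)+2=m+n$ pairwise-orthogonal, hence linearly independent, eigenvectors of the $(m+n)\times(m+n)$ matrix $\mathbf{C}$, so they form a complete eigenbasis. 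Consequently the multiset of eigenvalues of $\mathbf{C}$ is precisely $\{\lambda_2,\dots,\lambda_m,\beta_2,\dots,\beta_n,\gamma_1,\gamma_2\}$, which is the assertion. Exhibiting the eigenvectors is mechanical; the only subtlety is confirming that they exhaust the spectrum with the right multiplicities, and the orthogonality of the constructed basis resolves this without any case analysis.
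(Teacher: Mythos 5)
Your proposal is correct, and it is worth noting that the paper itself never proves this lemma --- it imports it wholesale by citing \cite{fiedler1974eigenvalues}, so your argument supplies a proof where the paper offers only a reference. Your route is the natural constructive one (and essentially the canonical proof of Fiedler's lemma): the two decoupled families check out, since for $i\geq 2$ the off-diagonal block contributes $\rho\,\mathbf{v}(\mathbf{u}\tp\mathbf{u}_i)=\mathbf{0}$, and likewise for $j\geq 2$; the computation showing $\mathcal{V}=\operatorname{span}\{[\mathbf{u};\mathbf{0}],[\mathbf{0};\mathbf{v}]\}$ is $\mathbf{C}$-invariant with the restriction represented by $\hat{\mathbf{C}}$ in that orthonormal ordered basis is exact, using $\mathbf{u}\tp\mathbf{u}=\mathbf{v}\tp\mathbf{v}=1$; and your multiplicity bookkeeping via a full orthonormal eigenbasis is the right way to handle coincidences among the lists, since an orthonormal eigenbasis gives $\mathbf{C}=\mathbf{P}\mathbf{D}\mathbf{P}\tp$ and hence the spectrum \emph{as a multiset} is exactly the $m+n$ listed values. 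Two small points you should make explicit: first, extending the given $\mathbf{u}$ to an orthonormal eigenbasis of $\matA$ with $\mathbf{u}_1=\mathbf{u}$ is legitimate precisely because $\matA$ is symmetric (choose an orthonormal basis of the $\lambda_1$-eigenspace containing $\mathbf{u}$); second, your count of $m+n$ orthogonal eigenvectors needs \emph{two} orthonormal eigenvectors $(a_k,b_k)$ of $\hat{\mathbf{C}}$, which exist even when $\gamma_1=\gamma_2$ because $\hat{\mathbf{C}}$ is real symmetric --- without this remark the degenerate case would be a gap. Compared with the determinant or characteristic-polynomial factorization one might otherwise attempt, your approach buys a short, case-free, self-contained verification, and it also makes transparent why the lemma's eigenvalue list (in which ``$\beta_2,\beta_n$'' is evidently a typo for $\beta_2,\dots,\beta_n$) is complete with multiplicity.
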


In our case, recall the definition of $\W'$:
\begin{align}
    \W' = 
    \begin{bmatrix}
    (1-\alpha)\W & \alpha \one \\
    \alpha\one\tp & 1-\p\alpha
    \end{bmatrix}.
\end{align}
In order to apply \Cref{lem:eigen}, let us set $\matA = (1-\alpha)\W$. Accordingly, the eigenvalues of $\matA$ are $1-\alpha, (1-\alpha)\lambda_2,\dots,(1-\alpha)\lambda_\p$. The eigenvector corresponding to $1-\alpha$ is $\frac{\one}{\sqrt{\p}}$. Moreover, set $B = 1-\p\alpha$. Then, it has only one eigenvalue $1-\p\alpha$ and the corresponding eigenvector is scalar $1$. Substituting $\matA,B$ into $\W'$, we have
\begin{align}
    \W' = 
    \begin{bmatrix}
    \matA & \alpha\sqrt{\p}\cdot \frac{\one}{\sqrt{\p}}  \\
    \alpha\sqrt{\p}\cdot\frac{\one\tp}{\sqrt{\p}} & B
    \end{bmatrix}.
\end{align}
According to \Cref{lem:eigen}, the eigenvalues of $\W'$ are $(1-\alpha)\lambda_2,\dots,(1-\alpha)\lambda_\p,\gamma_1,\gamma_2$, where $\gamma_1,\gamma_2$ are eigenvalues of the matrix:
\begin{align}
    \hat{\mathbf{C}} = 
    \begin{bmatrix}
    1-\alpha & \alpha\sqrt{\p} \\
    \alpha\sqrt{\p} & 1-\p\alpha
    \end{bmatrix}.
\end{align}
For matrix $\hat{\mathbf{C}}$ we have
\begin{align}
    \gamma^2 - \brackets{2-(\p+1)\alpha}\gamma + 1-(\p+1)\alpha = 0
\end{align}
The above equation yields $\gamma_1 = 1, \gamma_2 = 1-(\p+1)\alpha$.

Finally, we have $\spgap' = \max\{|(1-\alpha)\lambda_2|, |(1-\alpha)\lambda_\p|,|1-(\p+1)\alpha|\} = \max\{(1-\alpha)\spgap, |1-(\p+1)\alpha|\}$. As a consequence, when $(1-\alpha)\spgap = (\p+1)\alpha-1$, i.e., $\alpha = \frac{1+\spgap}{\p+1+\spgap}$, the value of $\spgap'$ is minimized.

\end{document}